\numberwithin{equation}{section}
\newcommand\tabcaption{\def\@captype{table}\caption}
\newtheorem{thm}{Theorem}[section]
\newtheorem{rem}[thm]{Remark}
\newtheorem{theorem}{Theorem}[section]
\definecolor{orange}{RGB}{255,127,0}
\def\d{{\rmd}}
\newcommand{\ml}{\boldsymbol{\Lambda}}
\newcommand{\ms}{\boldsymbol{\Sigma}}
\newcommand{\mr}[1]{\mathbf{R}_{\text{#1}}}
\newcommand{\mrt}[1]{\mathbf{R}_{\text{#1}}(t)}
\newcommand{\vx}{\mathbf{x}}
\newcommand{\vy}{\mathbf{y}}
\newcommand{\vf}{\mathbf{f}}
\newcommand{\vxt}{\mathbf{x}(t)}
\newcommand{\vyt}{\mathbf{y}(t)}
\newcommand{\vm}[1]{\boldsymbol{\mu}_{\text{#1}}}
\newcommand{\vmt}[1]{\boldsymbol{\mu}_{\text{#1}}(t)}
\newcommand{\smooth}[1]{\overleftarrow{#1}}
\newcommand{\dt}{\Delta t}
\newcommand{\vw}{\mathbf{W}}
\newcommand{\ma}{\mathbf{A}}
\newcommand{\mb}{\mathbf{B}}
\newcommand{\cF}{\mathcal{F}}
\newcommand{\pp}{\mathbb{P}}
\newcommand{\rr}{\mathbb{R}}
\newcommand{\ee}[1]{\mathbb{E}\left[#1\right]}
\newcommand{\nf}{\normalfont{f}}
\newcommand{\ns}{\normalfont{s}}
\newcommand{\rmd}{\mathrm{d}}
\newcommand{\ceqref}[2]{\hypersetup{linkcolor=#1}\eqref{#2}\hypersetup{linkcolor=black}}
\newcommand{\cref}[2]{\hypersetup{linkcolor=#1}\ref{#2}\hypersetup{linkcolor=black}}
\renewcommand*{\@cite@ofmt}{\bfseries\hbox}
\newcommand{\DESCRIPTION@original@item}{}
\let\DESCRIPTION@original@item\item
\newcommand*{\DESCRIPTION@envir}{DESCRIPTION}
\newlength{\DESCRIPTION@totalleftmargin}
\newlength{\DESCRIPTION@linewidth}
\newcommand{\DESCRIPTION@makelabel}[1]{\llap{#1}}%
\newcommand{\DESCRIPTION@item}[1][]{%
  \setlength{\@totalleftmargin}%
       {\DESCRIPTION@totalleftmargin+\widthof{\textbf{#1 }}-\leftmargin}%
  \setlength{\linewidth}
       {\DESCRIPTION@linewidth-\widthof{\textbf{#1 }}+\leftmargin}%
  \par\parshape \@ne \@totalleftmargin \linewidth
  \DESCRIPTION@original@item[\textbf{#1}]%
}
\title{Assimilative Causal Inference}
\author{\textbf{Marios Andreou}}
\author{\textbf{Nan Chen}\footnote{Corresponding author (chennan@math.wisc.edu)}}
\affil{\small Department of Mathematics, University of Wisconsin--Madison, 480 Lincoln Drive, Madison, WI 53706-1325, USA}
\author{\textbf{Erik Bollt}\footnote{In memory of our co-author Erik, who unexpectedly passed away on December 7, 2025}}
\affil{\small Department of Electrical and Computer Engineering, Clarkson University, 8 Clarkson Ave, Potsdam, 13699-5815, NY, USA}
\date{\large A short video about ``Assimilative Causal Inference'' using animations in \href{https://www.manim.community/}{Manim}:\\\vspace{0.25cm} \href{https://youtu.be/lrcPweSC7mQ}{https://youtu.be/lrcPweSC7mQ}}
\begin{document}
\maketitle 

{\fontsize{11.5pt}{13pt}\selectfont \vspace*{-1cm}\tableofcontents }

\normalsize

\vspace*{0.5cm}
\begin{abstract}
    Causal inference is fundamental across scientific disciplines, yet existing methods struggle to capture instantaneous, time-evolving causal relationships in complex, high-dimensional systems. In this paper, assimilative causal inference (ACI) is developed, which is a methodological framework that leverages Bayesian data assimilation to trace causes backward from observed effects. ACI solves the inverse problem rather than quantifying forward influence. It uniquely identifies dynamic causal interactions without requiring observations of candidate causes, accommodates short datasets, and, in principle, can be implemented in high-dimensional settings by employing efficient data assimilation algorithms. Crucially, it provides online tracking of causal roles that may reverse intermittently and facilitates a mathematically rigorous criterion for the causal influence range, revealing how far effects propagate. The effectiveness of ACI is demonstrated by complex dynamical systems showcasing intermittency and extreme events. ACI opens valuable pathways for studying complex systems, where transient causal structures are critical.
\end{abstract}

\begin{bibunit}

\section{Introduction}\label{Sec:Introduction}

Causal inference determines cause-and-effect relationships between variables \cite{pearl2000models, morgan2015counterfactuals, runge2019detecting}. It has found wide applications across different disciplines such as atmospheric and ocean science, economics, and neuroscience \cite{angrist2009mostly, rubin1974estimating, peters2017elements}. In addition to discovering the interactions between variables, causal inference plays a significant role in model identification, policy evaluation, and decision-making \cite{radebach2013disentangling, cafaro2015causation, runge2015identifying, yang2023finding, runge2019inferring}.

Methods for causal inference can be classified into different categories depending on the available resources for use. On the one hand, a natural way to identify the overall causal structures is to exploit multivariate time series, where temporal dependencies are often utilized to infer causal relationships. Methods falling into this category include Granger causality \cite{granger1969investigating}, transfer entropy \cite{schreiber2000measuring, barnett2009granger}, mutual information \cite{cover1999elements, frenzel2007partial}, convergent cross mapping \cite{sugihara2012detecting, monster2017causal}, causation entropy \cite{sun2015causal, sun2014causation}, and mixture causal discovery \cite{varambally2023discovering}. On the other hand, models built upon physics can assist in understanding causal dependence between different variables. Information transfer based on the ensemble forecast of the underlying model has been exploited to indicate certain causal relationships for a short term \cite{liang2005information}. In addition, linear response theory, which infers causal links by analyzing system responses to small perturbations at the equilibrium, has been utilized to reveal the attributions of variations in climate systems \cite{lucarini2024detecting, falasca2023causal, hannart2016causal}. Other recently developed methods include using Koopman operators for causal discovery \cite{rupe2024causal}, causal graphs \cite{kocaoglu2017experimental}, dynamic neural relational inference \cite{graber2020dynamic}, and approaches based on machine learning \cite{brand2023recent, prosperi2020causal, shanmugam2015learning}.

Nature can be regarded as a complex dynamical system where only a single random realization from this system is available. Due to the underlying strong nonlinearity and multiscale features, such a realization is usually intermittent and stochastic. Detecting instantaneous causal relationships as a function of time is crucial for understanding the dynamic interplay among variables, where the roles of causes and effects can shift repeatedly over time and at irregular intervals. In environmental science, extreme events, such as the triggering mechanisms of hurricanes and the abrupt transitions in weather patterns, have significant scientific and social impacts. Understanding the precursor of each extreme event and its subsequent implications requires online tracking of the causal relationship between them. Likewise, instantaneous causal analysis is of broad interest in neuroscience to discover temporary role reversals between brain regions during decision-making or cognitive tasks. However, most purely data-driven causal inference methods exploit data points over a long time series to reveal the average causal directions. In contrast, purely model-based methods, such as computing the information transfer in \cite{liang2005information}, utilize ensemble forecasts to infer the causal relationship when the system begins from a given initial condition and evolves toward statistical equilibrium. Yet, these methods are typically only able to provide time-dependent results for a very short time and become computationally challenging as the dimension of the system increases. Additionally, these model-based methods often struggle to optimally handle cases with a single observed realization.

Given the importance of discovering instantaneous causal relationships, this paper introduces a methodological framework called assimilative causal inference (ACI) to compute time-dependent causal interactions in complex systems. As in many practical applications, only a single realization of a subset of the state variables is available as observations. Meanwhile, an associated model, often turbulent and stochastic, is accessible as a supplement. Fundamentally different from traditional causal inference methods, which treat causality as a forward problem (analyzing how causes propagate into effects), ACI addresses it as an inverse problem. Using Bayesian data assimilation, ACI traces causes backward from observed effects by quantifying whether incorporating information about the effects reduces uncertainty in reconstructing potential causes.

ACI has several desirable advantages. First, it captures the evolving interplay among state variables, as their causal and effect roles can be reversed across time. Thus, ACI provides direct insights beyond time-averaged causal links, making it particularly valuable for studying turbulent systems with intermittency and regime switching. Second, ACI identifies the causal inference range (CIR), revealing how far causal effects propagate at each time instant. The framework includes a mathematically justified criterion to determine the CIR without empirical thresholds. While the importance of the CIR has been highlighted in complex networks \cite{runge2012quantifying, ye2015distinguishing}, ACI offers a unique and general method for its computation in turbulent systems with varying autocorrelation decay rates. Third, ACI scales efficiently to high-dimensional problems by leveraging computational methods developed in Bayesian data assimilation. This is fundamentally different from information-transfer methods limited to low dimensions \cite{liang2005information}. In addition, ACI can accommodate short time series and incomplete datasets, which often appear in geophysics and climate science. Notably, ACI does not require observations of candidate causes, which is a key advantage since potential drivers are usually unknown or unmeasured. Instead, uncertainty reduction in potential causal variables can be determined solely from the observed effects and governing dynamical model via Bayesian assimilation. As a final remark, while data assimilation has been used to assist in detecting the attribution of weather and climate-related events \cite{hannart2016dada, carrassi2017estimating}, these studies primarily focus on estimating the state in response to specific external perturbations, rather than online causal inference or CIR estimation. ACI also fundamentally differs from these methods in its integration of data assimilation into causal inference.

The remainder of this paper is organized as follows. Section \ref{Sec:Method} presents the operational details of the ACI framework, followed by the discussion of CIR in Section \ref{Sec:CIR}. Section \ref{Sec:Conditional_ACI} extends the ACI framework to account for the presence of non-target variables. Numerical results from applying ACI on various nonlinear complex dynamical systems with intermittency and extreme events are presented in Section \ref{Sec:Examples}. Concluding remarks are contained in Section \ref{Sec:Conclusion}. Additional mathematical derivations and supplementary test results are available in the \nameref{Sec:Supplementary_Information}.

\section{The Assimilative Causal inference (ACI) Framework} \label{Sec:Method}

In this section, we provide a high-level overview of the methodology underlying the ACI framework, including its formulation of causal inference as a Bayesian inverse problem. Rigorous mathematical justifications are provided in the \nameref{Sec:Supplementary_Information}.

\subsection{Setup and Notation} \label{Subsec:two_variable_framework}

Consider a time interval $[0, T]$ and a specific time instant $t$, where $0 \leq t \leq T$. Let $\{\mathbf{x}(t)\}_{0\leq t\leq T}$ and $\{\mathbf{y}(t)\}_{0\leq t\leq T}$ be two multivariate stochastic processes defined over this interval. Denote by $\mathbf{x}(t)$ and $\mathbf{y}(t)$ the corresponding multivariate random variables at time $t$. Their evolution in time is defined by the following stochastic dynamical system:
\begin{equation}
    \begin{aligned}
        \frac{\rmd\mathbf{x}}{\rmd t} &= \mathbf{f}^{\vx}(t, \mathbf{x},\mathbf{y})+ \boldsymbol\Sigma^{\vx}_1(t,\vx)\dot{\mathbf{W}}_1+\boldsymbol\Sigma^{\vx}_2(t,\vx)\dot{\mathbf{W}}_2,\\
        \frac{\rmd\mathbf{y}}{\rmd t} &= \mathbf{f}^{\mathbf{y}}(t, \mathbf{x},\mathbf{y}) + \boldsymbol\Sigma^{\mathbf{y}}_1(t,\vx,\mathbf{y})\dot{\mathbf{W}}_1+\boldsymbol\Sigma^{\mathbf{y}}_2(t,\vx,\mathbf{y})\dot{\mathbf{W}}_2,
    \end{aligned} \label{general_CTNDS}
\end{equation}
where $\dot{\mathbf{W}}_1$ and $\dot{\mathbf{W}}_2$ are independent mean-zero Gaussian random vectors with uncorrelated components that possess unit variance. Mathematical details regarding the structure of \eqref{general_CTNDS} are given in the \nameref{Sec:Supplementary_Information}. For notational simplicity, the dependence on $t$ is sometimes omitted when referring to these state variables and no distinction between random variables and their realizations is made. Let $\mathbf{x}(s \leq t)$ represent one realization of the stochastic process $\{\mathbf{x}(t)\}_{0\leq t\leq T}$ as a time series over the interval $[0, t]$. Similarly, $\mathbf{x}(s\leq T)$ denotes a realization of $\{\mathbf{x}(t)\}_{0\leq t\leq T}$ as a time series over the entire interval $[0, T]$. For simplicity, the time series are assumed to be continuously observed. An analogous framework can be developed for discrete-in-time observations. Throughout this paper, the time series is assumed to be free of observational noise.

According to Granger's predictive causality or its nonlinear extension, transfer entropy, $\mathbf{y}$ is the cause of $\mathbf{x}$ at time $t$ if the knowledge of $\mathbf{y}(s \leq t)$ improves the prediction or reduces the uncertainty of $\mathbf{x}(t)$ \cite{granger1969investigating, schreiber2000measuring, barnett2009granger}. Using information theory, this can be expressed as:
\begin{equation}\label{Causal_XY_1}
  \mathcal{S}\big(p(\mathbf{x}(t)|\mathbf{y}(s\leq t))\big) < \mathcal{S}\big(p(\mathbf{x}(t))\big),
\end{equation}
where $\mathcal{S}(\cdot)$ is Shannon's entropy and the history of the target variable in the conditioning, i.e., $\mathbf{x}(s< t)$, is omitted in \eqref{Causal_XY_1} for notational simplicity.  

\subsection{Causal Discovery from a Bayesian Inverse Problem Viewpoint}

The above argument can be interpreted in a reverse way, from the perspective of statistical inference through an inverse problem in uncertainty quantification. If $\mathbf{x}$ is the subsequent effect of $\mathbf{y}(t)$, then knowing future information about $\mathbf{x}$ reduces the uncertainty in inferring the current state of $\mathbf{y}$:
\begin{equation}\label{Causal_XY_2}
  \mathcal{S}\big(p(\mathbf{y}(t)|\mathbf{x}(s> t))\big) < \mathcal{S}\big(p(\mathbf{y}(t))\big).
\end{equation}
Further including the past information of $\mathbf{x}$ on both side yields
\begin{equation}\label{Causal_XY_3}
  \mathcal{S}\big(p(\mathbf{y}(t)|\mathbf{x}(s\leq T))\big) < \mathcal{S}\big(p(\mathbf{y}(t)|\mathbf{x}(s\leq t))\big),
\end{equation}
which does not change the causal relationship in \eqref{Causal_XY_2}.

A notable feature of \eqref{Causal_XY_3} is that it establishes a connection between causal discovery and Bayesian data assimilation. Suppose the underlying turbulent and potentially stochastic model governing $\mathbf{x}$ and $\mathbf{y}$ is known and given by \eqref{general_CTNDS}. Running the model forward provides a statistical estimation of the state of $\mathbf{y}(t)$ (known as `forecast'). However, this state estimation differs when also incorporating the knowledge of the time series $\mathbf{x}(s \leq t)$ or $\mathbf{x}(s \leq T)$ (known as `analysis'), as the specific observation provides additional information to reduce the uncertainty when inferring the state of $\mathbf{y}(t)$. This two-step (forecast-analysis) process is known as Bayesian data assimilation \cite{law2015data, reich2015probabilistic}, where the distribution derived solely from the model statistics is referred to as the prior, while the observed time series is used to compute the likelihood, accounting for the uncertainty in the observational process. The two conditional distributions in \eqref{Causal_XY_3} result from combining the model-based information with the observation-induced likelihood and are referred to as the posteriors. Specifically, the distribution on the left-hand side of \eqref{Causal_XY_3} corresponds to smoothing, while the one on the right-hand side corresponds to filtering \cite{evensen2022data, sarkka2023bayesian}.

\subsection{ACI} \label{Subsec:ACI}

Denote by $p_t^{\text{s}}$ and $p_t^{\text{f}}$ the smoother and filter posterior distributions at time $t$ (i.e., the two distributions on the left- and right-hand sides of \eqref{Causal_XY_3}), respectively. If the uncertainty reduction in $p_t^{\text{s}}$ is more significant than that in $p_t^{\text{f}}$, then it is due to the incorporation of the future information of $\mathbf{x}$, which indicates toward the contribution from $\mathbf{y}(t)$ to the future states of $\mathbf{x}$.

Although the entropy difference has been widely used in many other causal inference methods, the relative entropy is employed hereafter to quantify the uncertainty reduction in $p_t^{\text{s}}$ related to $p_t^{\text{f}}$. In addition to the covariance (and other higher-order moments), which measure the variability of the random variable, the relative entropy also accounts for the difference in the mean state of the two distributions. This difference is crucial for reflecting the additional information gained by incorporating future observations. Furthermore, it is coordinate-free, i.e., it is invariant under general nonlinear changes of the state variables. Notably, relative entropy has been widely used to assess uncertainty reduction in the context of data assimilation \cite{xu2007measuring, majda2018model, fowler2013observation, chen2023stochastic}.

As such, if the relative entropy between the smoother and filter distributions is nonzero, i.e.,
\begin{equation}\label{Causal_XY_4}
\mathcal{P}(p_t^{\text{s}},p_t^{\text{f}}) =\int p_t^{\text{s}}\ln(p_t^{\text{s}}/p_t^{\text{f}}) > 0,
\end{equation}
then $\mathbf{y}$ is identified as the cause of $\mathbf{x}$ at time $t$. By computing \eqref{Causal_XY_4} at different $t$, a time-dependent causal link is established.

Under \eqref{Causal_XY_4}, ACI provides a unique formulation for identifying causal relationships. In traditional approaches, causal links are inferred by running \eqref{general_CTNDS} forward to quantify the informational response on the target variables’ entropy when conditioned on the history of the candidate causes. ACI instead solves an inverse problem, thus tracing causes back from the data of the observed subsequent effects. By interpolating causality from effects to causes instead of extrapolating the latter forward-in-time, ACI offers an inherently more robust framework compared to classical predictive causality. Interpolation is typically more stable because it operates within the range of observed information, whereas extrapolation requires projecting beyond available data and is therefore far more sensitive to model error and uncertainty.

Panel (a) of Figure \ref{Fig:ACI_Framework_Schematic} provides a high-level overview of the ACI framework, while Panel (b) presents a schematic illustration of ACI using the filter and smoother estimates. The \nameref{Sec:Supplementary_Information} contains more technical details.

\begin{figure}[!t]%
\centering
\includegraphics[width=1\textwidth]{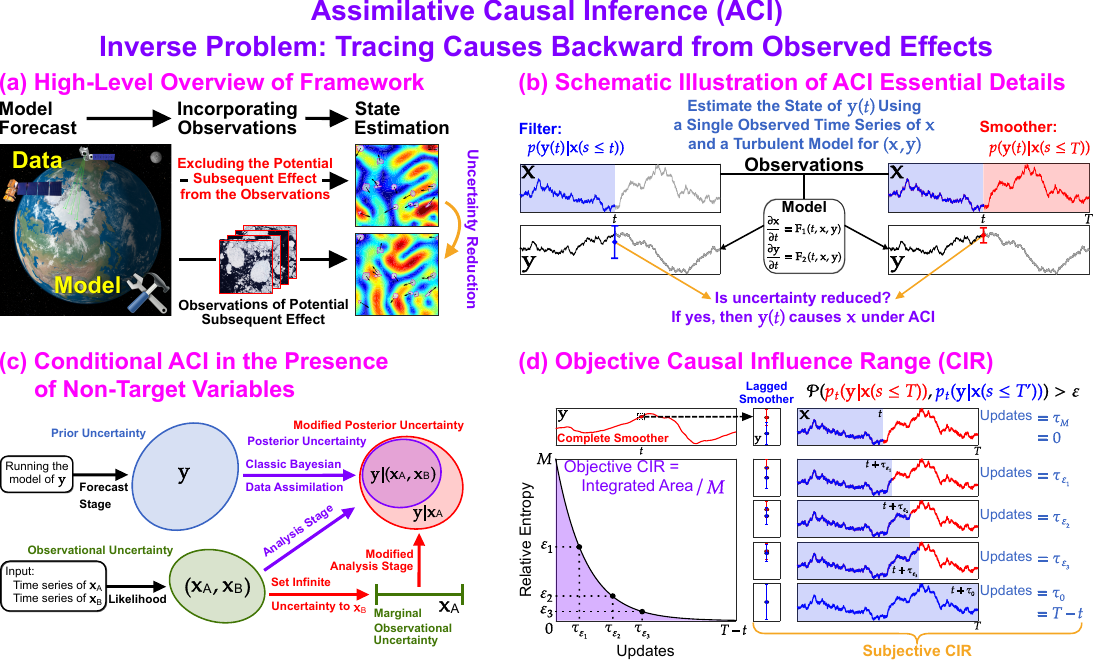}
\caption{The ACI framework. Panel (a): A high-level overview of the proposed method. Panel (b): Schematic illustration of ACI from a more technical viewpoint. Panel (c): ACI in the presence of non-target variables. Panel (d): Objective CIR, as an average of the associated subjective CIRs.}
\label{Fig:ACI_Framework_Schematic}
\end{figure}

\section{Dynamic Discovery of the Causal Influence Range (CIR)}\label{Sec:CIR}

While the uncertainty reduction in the smoother solution related to the filter solution in \eqref{Causal_XY_4} indicates an instantaneous causal link from $\mathbf{y}$ to $\mathbf{x}$ at time $t$, it does not reveal the temporal extent of this causal influence, namely, how many future units of $\mathbf{x}$ are affected by $\mathbf{y}(t)$. Since the memory of turbulent systems decays over time, $\mathbf{y}(t)$ is expected to effectively influence the future states $\mathbf{x}$ only within a limited time period, where this causal influence range can vary significantly as a function of $t$. This raises two key questions: How can we determine the CIR and, more importantly, how can we do so objectively, i.e., without relying on empirical cutoff thresholds? This section outlines the core ideas for addressing these questions. Rigorous mathematical analysis and detailed algorithms are provided in the \nameref{Sec:Supplementary_Information}.

\subsection{Online Smoothing for Dynamic Influence Tracking}

Recall from \eqref{Causal_XY_4} that the relative entropy $\mathcal{P}(p_t^{\text{s}},p_t^{\text{f}})$ between the filter and smoother estimates quantifies the causal influence from $\mathbf{y}(t)$ to $\mathbf{x}$. Here, $p_t^{\text{s}}=p(\mathbf{y}(t)|\mathbf{x}(s\leq T))$ incorporates the complete future information of $\mathbf{x}$. Due to the finite effective influence length, the complete smoother solution $p_t^{\text{s}}$ achieves a substantial uncertainty reduction compared to the lagged posterior distributions $p(\mathbf{y}(t)|\mathbf{x}(s\leq T'))$, which only utilize future information up to $t \leq T' \leq T$, for a short time period after $t$. The divergence between these distributions vanishes as $T'\rightarrow T$. We therefore define the causal influence radius or CIR through the relative entropy criterion: For a predetermined threshold $\varepsilon\geq 0$, the maximum lag $T'_{\varepsilon}$ satisfying
\begin{equation}\label{CIR_Subjective}
\mathcal{P}\big(p(\mathbf{y}(t)|\mathbf{x}(s\leq T)), p(\mathbf{y}(t)|\mathbf{x}(s\leq T'))\big) > \varepsilon,
\end{equation}
yields the innate CIR measure $\uptau_{\varepsilon}(t) = T'_{\varepsilon}-t$. This adaptive time variable $\uptau_{\varepsilon}(t)$ defines the effective causal period $[t, t+\uptau_{\varepsilon}(t)]$ during which $\mathbf{y}(t)$ exerts measurable influence on $\mathbf{x}$. A recently developed online smoother algorithm \cite{andreou2024adaptive} is employed to compute the CIR. This online smoother computes the uncertainty reduction in the estimated state of $\mathbf{y}(t)$ when the future information of $\mathbf{x}$ is sequentially added. See the \nameref{Sec:Supplementary_Information} for more technical details.

While the CIR computed via \eqref{CIR_Subjective} is intuitive, it remains inherently subjective due to its dependence on the threshold parameter $\varepsilon$. In the following, an objective CIR formulation is developed, eliminating the need for a prescribed threshold. This threshold-independent measure is subsequently integrated into the ACI framework.

\subsection{Objective CIR}

For simplicity, assume that the relative entropy between the aforementioned complete and lagged smoother distributions decreases monotonically as more future information of $\mathbf{x}$ is incorporated. This commonly holds in practice, though the theory of the objective CIR (developed rigorously in the \nameref{Sec:Supplementary_Information}) doesn't require this condition. At a given time instant $t$, let $M = \mathcal{P}(p_t^{\text{s}},p_t^{\text{f}})$ denote the maximum value of the relative entropy between the complete and lagged smoother distributions over $t\leq T'\leq T$, which is the difference between the smoother estimate and the filter estimate (where the latter can be viewed as a degenerate smoother using no future information). Given a threshold $\varepsilon\in[0,M]$, \eqref{CIR_Subjective} yields a subjective CIR as $\uptau_\varepsilon(t)=T'_{\varepsilon}-t$. The objective CIR is then naturally defined by
\begin{equation}\label{CIR_Objective}
\uptau(t) := \frac{1}{M}\int_{0}^M \uptau_\varepsilon (t) d\varepsilon.
\end{equation}
Dividing $M$ guarantees the unit of the objective CIR is ``time''. See Panel (d) of Figure \ref{Fig:ACI_Framework_Schematic} for a schematic illustration of the objective CIR and its relationship with the subjective ones. Note that numerically evaluating the integral in \eqref{CIR_Objective} requires repeated smoother computations, leading to a computational complexity that scales quadratically with the number of discretization points. An efficient computational method of \eqref{CIR_Objective} is included in the \nameref{Sec:Supplementary_Information}.

The subjective and objective CIR can be regarded as analogs of the autocorrelation function and decorrelation time, respectively. The autocorrelation function measures the memory of a turbulent system based on the duration of its values that remain above a predetermined threshold. However, determining the memory using the autocorrelation function is subjective and can vary significantly depending on the threshold value. In contrast, the decorrelation time, which integrates the autocorrelation function, provides an objective way to define the memory length and is free from any predetermined cutoff threshold.

\section{Conditional ACI in the Presence of Additional Non-Target Variables}\label{Sec:Conditional_ACI}

In the more general context of a complex system, state variables can be grouped into three disjoint subsets: $\mathbf{x}_{\text{A}}$, $\mathbf{x}_{\text{B}}$, and $\mathbf{y}$, with $\mathbf{\vx}=(\mathbf{x}_{\text{A}}, \mathbf{x}_{\text{B}})^\mathtt{T}$ under the previous notation. The goal is to determine whether $\mathbf{y}$ is a cause of $\mathbf{x}_{\text{A}}$ in the presence of the remaining variables $\mathbf{x}_{\text{B}}$, which are called the non-target variables, including the cases of confounders or mediators. Since $\mathbf{x}_{\text{B}}$ interacts with both $\mathbf{x}_{\text{A}}$ and $\mathbf{y}$, appropriately accounting for the role of $\mathbf{x}_{\text{B}}$ while inferring the causal link from $\mathbf{y}$ to $\mathbf{x}_{\text{A}}$ becomes essential. Hereafter, let us assume that a realization for each of $\mathbf{x}_{\text{A}}$ and $\mathbf{x}_{\text{B}}$, namely $\mathbf{x}_{\text{A}}(s\leq T)$ and $\mathbf{x}_{\text{B}}(s\leq T)$, is available. Therefore, the candidate cause $\mathbf{y}$ is the only variable that does not require to have observations in this framework.  Following the argument in Section \ref{Sec:Method}, the causal relationship from $\mathbf{y}(t)$ to $(\mathbf{x}_{\text{A}}, \mathbf{x}_{\text{B}})$ is identified based on the uncertainty reduction in the smoother distribution $p_t^{\text{s}}$ related to the filter one $p_t^{\text{f}}$, where
\begin{equation*}
    \begin{split}
      p_t^{\text{f}} & = p(\mathbf{y}(t)|\mathbf{x}_{\text{A}}(s\leq t), \mathbf{x}_{\text{B}}(s\leq t)),\\
      p_t^{\text{s}} & = p(\mathbf{y}(t)|\mathbf{x}_{\text{A}}(s\leq T), \mathbf{x}_{\text{B}}(s\leq T)).
    \end{split}
\end{equation*}
The main focus here is to appropriately determine and resolve the presence of $\mathbf{x}_{\text{B}}$ from $p_t^{\text{f}}$ and $p_t^{\text{s}}$ as to infer a conditional causal relationship from $\mathbf{y}(t)$ to $\mathbf{x}_{\text{A}}$ beyond the contributions of $\mathbf{x}_{\text{B}}$.

Recall from Section \ref{Sec:Method} that ACI employs Bayesian data assimilation to estimate the state of $\mathbf{y}(t)$. ACI exploits observations of $\mathbf{x}$ to reduce the uncertainty in $\mathbf{y}(t)$, where the influence of $\mathbf{x}$ on the estimation of $\mathbf{y}(t)$ depends on its own uncertainty: smaller observational uncertainty leads to a stronger impact \cite{rozovsky2012stochastic}. When extending this framework to $\mathbf{\vx}=(\mathbf{x}_{\text{A}}, \mathbf{x}_{\text{B}})^\mathtt{T}$ for conditional ACI, the observed time series of $\mathbf{x}_{\text{B}}$ is still used in the forecast step to compute the uncertainties in $\mathbf{x}_{\text{A}}$ and $\mathbf{y}$. However, during the analysis step, the uncertainty in $\mathbf{x}_{\text{B}}$'s dynamics is deliberately excluded by assigning infinite uncertainty to its marginal likelihood before posterior computation. This can be rigorously formulated through a limit operation on the analysis component of the Bayesian data assimilation update equations, which formally leads to the following filter- and smoother-based distributions:
\begin{equation*}
        p^{\text{f}|\vx_{\text{B}}}_t := \lim_{\mathrm{Var}(\vx_{\text{B}}(t))\to+\infty} p_t^{\text{f}}, \quad p^{\text{s}|\vx_{\text{B}}}_t := \lim_{\mathrm{Var}(\vx_{\text{B}}(t))\to+\infty} p_t^{\text{s}}.
\end{equation*}
Similar to \eqref{Causal_XY_4}, if the relative entropy of $p^{\text{s}|\vx_{\text{B}}}_t$ from $p^{\text{f}|\vx_{\text{B}}}_t$ is positive, $\mathcal{P}(p^{\text{s}|\vx_{\text{B}}}_t,p^{\text{f}|\vx_{\text{B}}}_t)>0$, then $\mathbf{y}$ is identified as the cause of $\vx_{\text{A}}$ conditional to $\vx_{\text{B}}$ at time $t$. This means future data of $\vx_{\text{A}}$ incur additional information gain about $\mathbf{y}(t)$ related to its history, beyond the observational contributions of $\vx_{\text{B}}$. By computing this metric at each $t$, a time-dependent conditional causal relationship is established.

Inflating the uncertainty in $\mathbf{x}_{\text{B}}$ is straightforward using Bayesian inference for state estimation. This approach ensures that $\mathbf{x}_{\text{B}}$ does not affect the uncertainty reduction in the state estimation of $\mathbf{y}(t)$ when treated as an observational process. Consequently, any uncertainty reduction in $\mathbf{y}(t)$ results solely from $\mathbf{x}_{\text{A}}$ under this proposed pipeline. See Panel (c) of Figure \ref{Fig:ACI_Framework_Schematic} for an illustration. For dynamical systems with explicit governing equations (see \eqref{general_CTNDS}), a shortcut to exclude the influence in the uncertainty of $\mathbf{x}_{\text{B}}$ is to treat $\mathbf{x}_{\text{B}}$ as a prescribed forcing term in the reduced system defined by $(\mathbf{x}_{\text{A}},\mathbf{y})$ during Bayesian inference, with values defined by its observed time series. Nevertheless, this framework does not necessarily require modifications to the underlying dynamical system, making it compatible with any given model, potentially including operational models in atmospheric and ocean science, as the model structure and integrity remains intact.

It is important to emphasize that while treating $\mathbf{x}_{\text{B}}$ as a conditioning variable may appear analogous to the role of non-target variables in transfer entropy and other traditional causal inference methods, the way of handling the uncertainty in ACI is fundamentally different. Conventional methods treat $\mathbf{x}_{\text{B}}$ as a fixed component. In contrast, ACI explicitly accounts for $\mathbf{x}_{\text{B}}$'s dynamical influence while systematically prohibiting its uncertainty from affecting state estimation. This unique treatment allows the inferred causal relationships to reflect true dynamical interactions by isolating the contribution of uncertainty reduction from the non-target variables. As a final remark, if Bayesian data assimilation is used to compute $p(\mathbf{y}(t),\mathbf{x}_{\text{B}}(t)|\mathbf{x}_{\text{A}}(s\leq T))$ and then marginalize over $\mathbf{x}_{\text{B}}(t)$ to obtain $p(\mathbf{y}(t)|\mathbf{x}_{\text{A}}(s\leq T))$, this manipulation may lead to incorrect conclusions. This is because the correlation between $\mathbf{y}(t)$ and $\mathbf{x}_{\text{B}}(t)$ unavoidably alters the uncertainty in $\mathbf{y}(t)$ during Bayesian data assimilation, potentially introducing a spurious causal relationship between $\mathbf{y}$ and $\mathbf{x}_{\text{A}}$. This is true even if we condition over $\mathbf{x}_{\text{B}}(t)$ to obtain $p(\mathbf{y}(t)|\mathbf{x}_{\text{A}}(s\leq T),\vx_{\text{B}}(t))$. For instance, for the causal chain $\mathbf{y} \to \mathbf{x}_{\text{B}} \to \mathbf{x}_{\text{A}}$, such a method might falsely imply a direct causal link from $\mathbf{y}(t)$ to $\mathbf{x}_{\text{A}}$.

The details of the conditional ACI implementation, along with the verification of (conditional) nil causality principles under ACI, are provided in the \nameref{Sec:Supplementary_Information}.

\section{Applications to Nonlinear Systems with Intermittency and Extreme Events} \label{Sec:Examples}

We apply the ACI framework to identify causal relationships across the state space and evaluate the corresponding CIRs in complex nonlinear dynamical systems exhibiting intermittency and extreme events. The dynamical properties of these systems and the mechanisms that trigger crucial observed phenomena are well understood. This knowledge is exploited as guidance to validate the results when applying ACI. In addition, a perfect-model setup is used in the case studies carried out, meaning the system generates the synthetic observational data and the forecast model in ACI at the same. This facilitates the recovery of causal relationships in the absence of model error. Because the underlying system is chaotic and subject to stochastic forcing, a perfect model means that the model used in ACI shares the same governing equations and long-term statistical behavior as the system that generates the true signal. Additional results, including further numerical experiments, case studies, and one crucial application using sparse real-world observations, are provided in the \nameref{Sec:Supplementary_Information}.

The operational outline of the ACI metric used to infer instantaneous causal relationships in the following experiments, including the details of how we determine the associated CIRs and account for non-target variables, is described in Sections \ref{Sec:Method}--\ref{Sec:Conditional_ACI}. The developed framework is also summarized via the schematic diagram provided in Figure \ref{Fig:ACI_Framework_Schematic}. Additional mathematical results and analyses, as well as computational implementation details, can be found in the \nameref{Sec:Supplementary_Information}.

\subsection{A Nonlinear Dyad Model with Extreme Events}

Let us start with a two-dimensional model, which nevertheless has strong nonlinear features with observed extreme events. The model reads as:
\begin{subequations}\label{Dyad_model}
\begin{align}
\frac{\d x}{\d t} &= -d_x x + \gamma xy + f_x + \sigma_x\dot{W}_x\label{Dyad_model_x}\\
\frac{\d y}{\d t} &= -d_y y - \gamma x^2 + f_y + \sigma_y \dot{W}_y,\label{Dyad_model_y}
\end{align}
\end{subequations}
where $\dot{W}_x$ and $\dot{W}_y$ are independent standardized normal random variables. Within the proposed ACI framework, only $x$ is assumed to be observed over time. This is a reduced-order conceptual model for atmospheric variability. It has been used to analyze the effects of various coarse-grained procedures on processes exhibiting intermittency, large-scale bifurcations, and microscale phase transitions. It is defined by an energy-conserving condition on its quadratic nonlinearities \cite{majda2012physics}. The following parameter values are used for this model:
\begin{equation}
\begin{gathered}
    d_x=0.5, \quad \gamma=2, \quad f_x=0.5, \quad \sigma_x=0.5, \\ d_y=0.5, \quad f_y=1, \quad \sigma_y=1.
\end{gathered}
\end{equation}
Figure \ref{Fig:dyad_interaction_fig} illustrates the ACI and CIRs from $y$ to $x$. Panel (a) shows that extreme events in $x$ occur intermittently. When the combined coefficient $-d_x+\gamma y$ in \eqref{Dyad_model_x} becomes positive, the dynamics of $x$ exhibit anti-damping, leading to amplitude growth and extreme events. Conversely, when the coefficient is negative, $x$ behaves as a damped system, with fluctuations primarily driven by random noise. Panel (b) displays the ACI, revealing that phases with significant ACI values largely coincide with the onset and peak phases of extreme events in $x$. This aligns with intuition, as $y$ acts as an anti-damping source and the primary driver of these events. Once $x$ peaks and begins to decay, the ACI value drops sharply, reflecting the strong negative feedback from $x$ to $y$, which then dampens $x$.
The whisker plot in Panel (a) highlights the objective CIR, indicating a sustained influence from $y$ to $x$ during the triggering phase of extreme events but minimal influence during their demise. Notably, the long-range influence from the CIR reveals that extreme events develop gradually, with triggering conditions established well in advance. From a data assimilation perspective, when the observable signal $x$ starts to strengthen, it enhances the ability of the smoother to estimate $y$ as the signal in the future contains useful information beyond noise. As a result, a longer CIR is obtained. In contrast, once $x$ becomes sufficiently strong, its high signal-to-noise ratio ensures the filter captures all relevant triggering dynamics, leaving little room for the smoother to improve estimates. Hence, the outset of the ACI metric's decline and shorter CIR. The \nameref{Sec:Supplementary_Information} contains the filter and smoother distributions and the subjective CIR with different threshold values $\varepsilon$.

\begin{figure}[!ht]%
\centering
\includegraphics[width=1\textwidth]{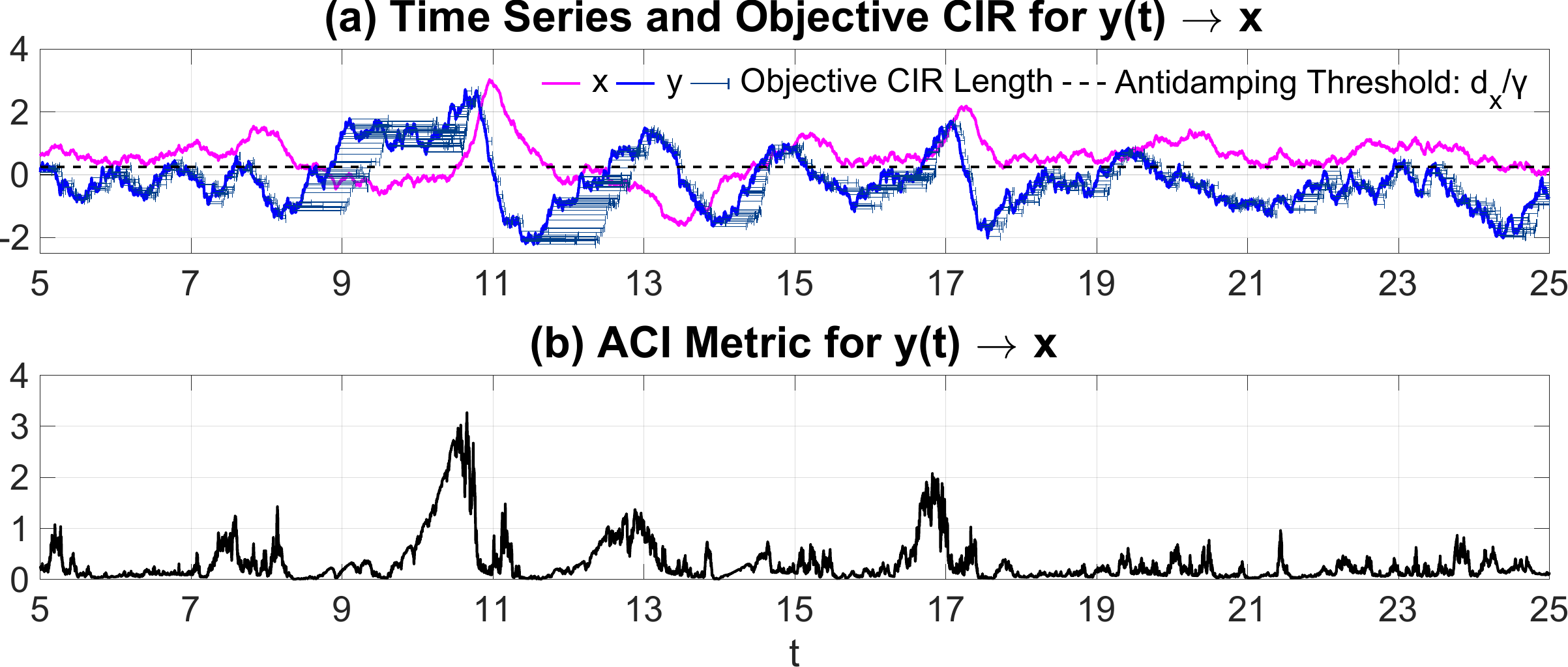}
\caption{ACI values and CIRs from $y$ to $x$ at each time instant $t$ for the nonlinear dyad model \eqref{Dyad_model}. Panel (a): Time series of $x$ (magenta) and $y$ (blue), the objective CIR represented by a whisker plot emanating forward in time from each $y(t)$ value, and the anti-damping threshold line at $d_x/\gamma$. Panel (b): ACI from $y$ to $x$ as a function of time.}\label{Fig:dyad_interaction_fig}
\end{figure}

\subsection{A Stochastic Model Capturing the El Ni\~no-Southern Oscillation (ENSO) Diversity}

El Ni\~no--Southern Oscillation (ENSO) is a dominant climate phenomenon characterized by quasi-regular periodic behaviors in sea surface temperatures (SSTs) and in the atmospheric circulation across the equatorial Pacific Ocean \cite{clarke2008introduction, sarachik2010nino}. ENSO exhibits remarkable diversity in its spatial patterns, temporal evolution, and impacts, which can be broadly categorized into two main types: Eastern Pacific (EP) and Central Pacific (CP) El Ni\~nos, where an anomalous warming center occurs in the eastern and central Pacific, respectively \cite{capotondi2015understanding}. The opposite phases with anomalous cooling SSTs are called La Ni\~na. Understanding the causes of different ENSO events is crucial for improving climate predictions and mitigating socio-economic consequences.

Although few models can accurately capture ENSO diversity, a recently developed stochastic conceptual model successfully reproduces its diverse behaviors and non-Gaussian statistics \cite{chen2022multiscale}. This model has been highlighted in a recent review \cite{vialard2025nino}, making it a suitable testbed for studying the ENSO diversity. Mathematical details are provided in the \nameref{Sec:Supplementary_Information}. The model consists of six state variables: ocean zonal current in the CP ($u$), western Pacific (WP) thermocline depth ($h_W$), CP SST ($T_C$), EP SST ($T_E$), atmospheric winds ($\tau$, intraseasonal), and background Walker circulation ($I$, decadal). The first four variables ($u$, $h_W$, $T_C$, and $T_E$) operate on interannual timescales. As a nonlinear system with state-dependent noise, the model generates extreme events and intermittency. The two SST variables allow reconstruction of spatiotemporal SST patterns across the equatorial Pacific, providing an intuitive way to identify different ENSO event types.

The variables $T_C$, $h_W$, and $\tau$ directly influence $T_E$. Figure \ref{Fig:ENSO} displays their conditional ACI and CIR values to $T_E$ (non-target variables have been resolved via the framework proposed in Section \ref{Sec:Conditional_ACI}). For EP El Ni\~no events (positive $T_E$ anomalies, shown in red), $T_C$ exhibits the strongest ACI value, slightly preceding the $T_E$ peak. This timing aligns with physical understanding, as SSTs in these regions are strongly coupled: during El Ni\~no, warm water propagates from CP to EP, producing the observed ACI lead.
The $\tau$ to $T_E$ ACI value appears noisier due to $\tau$'s intraseasonal variability, yet its evolution confirms $\tau$'s robust impact on $T_E$. Winds drive warm water propagation and exert near-instantaneous SST effects. In contrast, while $h_W$ significantly contributes to $T_E$, its ACI amplitude is weaker than $T_C$ or $\tau$.
The discharge--recharge theory \cite{jin1997equatorial} posits $h_W$--$T_E$ oscillator dynamics, but in models with refined CP variables, $h_W$'s influence on $T_E$ becomes indirect: $h_W$ first affects $T_C$, which then propagates signals to $T_E$ (WP$\rightarrow$CP$\rightarrow$EP). Consistent with this paradigm, $h_W$'s ACI value to $T_E$ peaks months before EP El Ni\~no maxima. CIRs further corroborate these physical mechanisms: $T_C$ shows the longest influence, $h_W$'s more indirect role yields intermediate CIRs, and $\tau$’s intraseasonal nature produces the shortest impacts. The \nameref{Sec:Supplementary_Information} provides additional ACI and CIR analyses for all variables across different ENSO event types. In addition, in the \nameref{Sec:Supplementary_Information}, a case study using real-world ENSO observations is examined, which does not assume a perfect model setup. The conclusions derived about the causal relationships using real-world observational data remain qualitatively similar to those with model-generated data. The results provide preliminary evidence that the ACI framework remains reasonably robust under realistic conditions, including observational noise, moderate model error, and numerical discretization effects.

\begin{figure}[!ht]%
\centering
\includegraphics[width=1\textwidth]{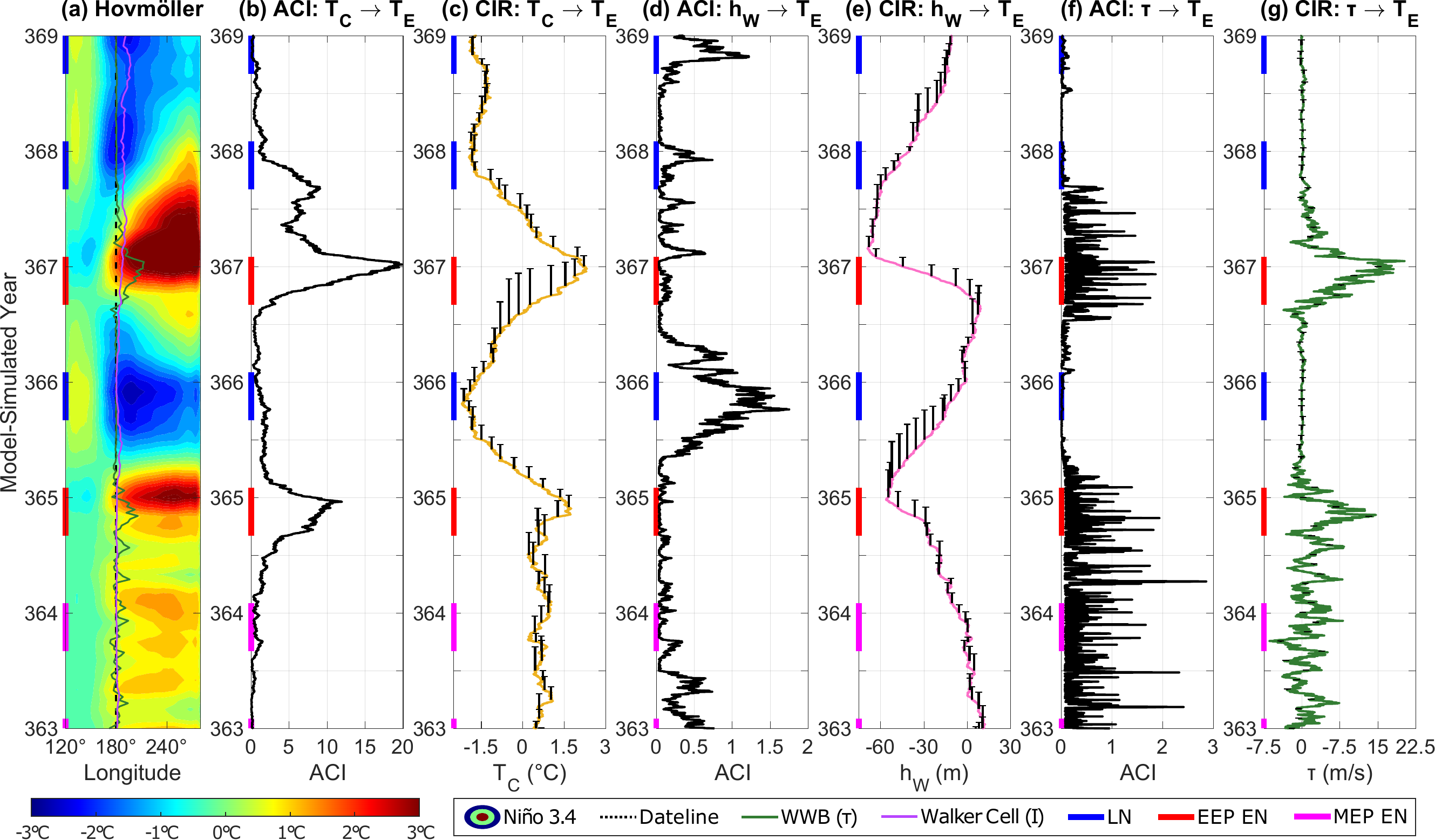}
\caption{Conditional ACI values and CIRs of the stochastic conceptual model for the ENSO diversity with $T_E$ as the effect variable over a six-year model-simulated period (where EEP EN and MEP EN stand for extreme and moderate EP El Ni\~nos, respectively, and LN for La Ni\~na). Panel (a): Hovm\"{o}ller diagram of the SST anomalies in the equatorial Pacific reconstructed from $T_C$ and $T_E$ via a spatiotemporal multivariate regression technique \cite{chen2022multiscale}. Panel (b): ACI values from $T_C$ to $T_E$ as a function of time. Panel (c): Time series of the observable $T_C$ and the objective CIR represented by a whisker plot emanating forward in time from the value of $T_C$ at each month over this six-year period. Panels (d) and (e): Same as Panels (b) and (c) but for the causal link from $h_W$ to $T_E$. Panels (f) and (g): Same as Panels (b) and (c) but for the causal link from $\tau$ to $T_E$.}\label{Fig:ENSO}
\end{figure}

\section{Conclusion and Discussions} \label{Sec:Conclusion}

In this paper, the assimilative causal inference (ACI) framework is developed for the detection of instantaneous causal relationships and of their associated causal influence range (CIR) in complex dynamical systems. ACI leverages Bayesian data assimilation to trace causes backward from observed effects. It uniquely identifies dynamic causal interactions without requiring observations from candidate causes, accommodates short datasets, and, in principle, can be scaled efficiently to high dimensions using efficient Bayesian data assimilation algorithms. Crucially, it provides online tracking of causal roles, which may reverse intermittently, and facilitates a mathematically rigorous criterion for the causal influence range, revealing how far causal impacts propagate temporally. Its ability to dynamically detect causal influence ranges and handle high-dimensional systems by employing computationally efficient Bayesian techniques highlights its potential for applications in climate science, neuroscience, and other fields. Numerical tests on nonlinear systems with extreme events and regime transitions demonstrate the effectiveness and robustness of the framework.

It is worth highlighting how ACI differs from other state-of-the-art methods in terms of expressivity, computational complexity, and underlying assumptions. Different from most methods that identify cause-and-effect relationships by running the system forward and quantifying informational responses in the target variables’ entropy conditioned on past candidate causes, which is a predictive setting typical of information-flow-based methods, ACI instead formulates causality as an inverse problem in uncertainty quantification. Within the Bayesian data assimilation framework, candidate causal variables are inferred backward from observations of their potential effects. Thus, while many existing methods extrapolate causes forward in time, ACI traces causes back from observed effects, effectively interpolating rather than extrapolating causality. This distinction makes ACI more expressive in capturing instantaneous causal structures and, in practice, often more robust.
From a computational standpoint, ACI can leverage well-developed Bayesian data assimilation tools that scale effectively to high-dimensional systems \cite{evensen2003ensemble, law2015data}. In contrast, information transfer and other model-based methods often require computing the exact density functions and their derivatives, which is computationally demanding and subject to the curse of dimensionality. Finally, ACI requires only a single time series but assumes access to a dynamical model, which enables it to identify instantaneous causal relationships. In contrast, most time-series-based methods operate under model-free assumptions and therefore focus on aggregated or time-averaged causal links rather than on causality at specific moments. Because ACI incorporates model dynamics, it does not require long time series to detect causes and effects. Instead, it can infer instantaneous causal links from a single data point combined with model information. In contrast, most purely data-driven methods rely on long time series and can only reveal causal relationships in a time-averaged sense.

Future work will focus on several important directions. First, because model error is ubiquitous in practice, it is essential to study its impact on causal inference. Understanding how model error affects inferred causal relationships can create opportunities not only to detect and correct model error, but also to leverage this information to develop parsimonious models for complex dynamical systems. Similarly, observational data may contain additional noise that can affect causal inference. The ACI framework provides a natural way to analyze how observational noise contaminates causal detection. In such a context, data assimilation can serve as a pre-filtering step to reduce the influence of observational noise, which allows for a systematic examination of its role in shaping causal inference outcomes.
Second, by building on the computationally efficient methods developed in Bayesian data assimilation, it is essential to apply ACI to higher-dimensional systems. Ensemble data assimilation is a well-established technique that routinely uses localization, covariance inflation, and other adjustments to overcome the curse of dimensionality \cite{anderson2012localization, d2014scalable, anderson2007scalable}. As a result, the ensemble size (typically on the order of $O(100)$) is often chosen independently of the system's dimension, which contributes to the scalability of Bayesian data assimilation and, consequently, of ACI. However, the sampling errors inherent in ensemble methods may introduce additional biases. Thus, it is necessary to examine whether such errors affect ACI. In addition, various coarse-grained approaches, such as approximating the full distribution with a Gaussian, are widely used in practice. Although these approximations introduce errors in computing the PDF, they are robust and highly accurate when computing diagnostic indicators, such as the relative entropy, that are used for model identification and sensitivity analysis. Thus, they may substantially accelerate ACI in practice.
Third, building upon the current forward-in-time CIR analysis, the framework allows for the development of a crucial complement that focuses on dying the backward-in-time event attribution to past causes. The forward and backward CIR tools would support distinguishing different types of tipping points in Earth systems, identifying the causes of regime switching, and detecting early warning signals.
Finally, continuous-in-time observations are adopted in this work for simplicity of presentation. Nevertheless, an analogous framework can be naturally extended to discrete-in-time observations using the corresponding Bayesian data assimilation algorithms. An interesting scientific question in this context is to understand the uncertainty in causal detection arising from the discrete nature of the data, particularly at time intervals between successive observations. This line of inquiry could also highlight an important distinction between ACI and purely data-driven methods. In ACI, the dynamical model provides a natural way to interpolate causal relationships at time instants when observations are unavailable. Therefore, it may help bridge observational gaps and maintain temporal continuity in causal inference.

\section*{Funding}
The research of N.C. is funded by Office of Naval Research N00014-24-1-2244 and Army Research Office W911NF-23-1-0118. M.A. is partially supported as a research assistant under the second grant. The research of E.B is supported by the ONR, ARO, DARPA RSDN, and the NIH and NSF under CRCNS.

\section*{Data Availability}
In Section 4.3.3 of the \nameref{Sec:Supplementary_Information} (ACI on Real-World ENSO Data), daily and monthly observations for the state variables of the ENSO model were curated from the following reanalysis datasets: \href{https://psl.noaa.gov/data/gridded/data.godas.html}{NCEP Global Ocean Data Assimilation System (GODAS)} and \href{https://psl.noaa.gov/data/gridded/data.ncep.reanalysis.html}{NCEP-NCAR Reanalysis 1 Project}. All other data in the simulations are model-generated.

\subsection*{Code Availability}
The MATLAB code used in the analyses and to generate the figures in this work can be found in the following GitHub repository: \url{https://github.com/marandmath/ACI_code}. The codebase also includes implementations for additional experiments and real-world observations that can be used for ACI and CIR analysis in the ENSO case study.

\renewcommand{\refname}{Main Text References}

\putbib

\end{bibunit}

\clearpage

\begin{bibunit}

\renewcommand\appendixname{SI.\hspace*{-0.15cm}}
\renewcommand\appendixpagename{SI.\hspace*{-0.15cm}}
\renewcommand{\figurename}{Supplementary Figure}
\setcounter{figure}{0}

\section*{Supplementary Information} 
\label{Sec:Supplementary_Information}
\addcontentsline{toc}{section}{Supplementary Information}

\begin{appendices}
    
\section{Mathematical Foundations and Theoretical Analysis of Assimilative Causal Inference}\label{Sec:Math_Foundation}

\subsection{Mathematical Framework and Notational Conventions}

In this document, \textbf{boldface} variables denote multidimensional quantities. Specifically, \textbf{l}owercase boldface variables denote column vectors, while \textbf{U}ppercase ones denote matrices. The only exception is $\vw$ (with some subscript), which denotes a vector-valued Wiener process due to literary tradition. Furthermore, for simplicity, we follow the notational convention from physics and do not distinguish between random variables and their realizations.

Let $\mathscr{B}=(\Omega, \cF, \mathbb{F}, \mathbb{P})$ be an augmented probability space for filtering over the time interval $[0,T]$, $T\in(0,+\infty)$. Denote by $\big(\mathbf{x}(t,\omega),\mathbf{y}(t,\omega)\big)\in\mathbb{R}^{k+l}$, for $t\in[0,T]$ and $\omega\in\Omega$, a $(k+l)$-dimensional partially observable stochastic process on $\mathscr{B}$, where $\vx$ is the $k$-dimensional observable component while $\vy$ is the $l$-dimensional unobservable part. Without loss of generality, we consider real-valued processes, otherwise we work with the real-valued joint vector formed by the real and imaginary parts of $(\vx,\vy)$ and for notational simplicity we henceforth drop the sample space ($\Omega$) dependence, but it is always implied. We assume $(\vx,\vy)$ is adapted to the filtration $\mathbb{F}$ and that $\vx(s\leq t)$ represents a realization of $\vx$ over $[0,t]$, i.e., a time series for a fixed $\omega\in\Omega$. For convenience, the time series of $\vx$ is assumed to be continuously observed, with the development of an analogous framework for discrete-in-time observations being possible. Finally, for explicitness, we hereafter write $\big(\cdot\big|\vx(s\leq t)\big)$ to indicate the fact that we are conditioning on the $\sigma$-algebra generated by $\{\vx(s)\}_{s\leq t}$.

\subsection{Bayesian Data Assimilation with Continuous-Time Observations}
The assimilative causal inference (ACI) framework builds fundamentally upon Bayesian data assimilation methods. Here we outline their essential mathematical formulation, particularly as applied to probabilistic state estimation in complex turbulent dynamical systems.

\subsubsection{Underlying Dynamics}
We assume that the evolution of the partially observed process $(\vx,\vy)$ is governed by the following stochastic system of coupled It\^o diffusions over $t\in[0,T]$ \cite{rozovsky2012stochastic}:
\begin{subequations} \label{eq:general_CTNDS}
    \begin{align}
    \rmd\mathbf{x}(t) &= \mathbf{f}^{\vx}(t, \mathbf{x},\mathbf{y})\rmd t+ \boldsymbol\Sigma^{\vx}_1(t,\vx,\vy)\rmd\mathbf{W}_1(t)+\boldsymbol\Sigma^{\vx}_2(t,\vx,\vy)\rmd\mathbf{W}_2(t), \label{eq:general_CTNDS1}\\
    \rmd\mathbf{y}(t) &= \mathbf{f}^{\vy}(t, \mathbf{x},\mathbf{y})\rmd t + \boldsymbol\Sigma^{\vy}_1(t,\vx,\vy)\rmd\mathbf{W}_1(t)+\boldsymbol\Sigma^{\vy}_2(t,\vx,\vy)\rmd\mathbf{W}_2(t), \label{eq:general_CTNDS2}
    \end{align}
\end{subequations}
where $\mathbf{W}_1\in\mathbb{R}^{d_1}$ and $\mathbf{W}_2\in\mathbb{R}^{d_2}$ are two real-valued independent Wiener processes that are also independent from the distribution of $(\vx(0),\vy(0))$. We note that all subsequent random ordinary and stochastic differential equations in this document are written in differential form. In \eqref{eq:general_CTNDS}, we assume the following \cite{rozovsky2012stochastic, ikeda2014stochastic, oksendal2003stochastic}.
    (I) Almost every sample path of $\vx$ and $\vy$ is continuous in $[0,T]$.
    (II) The functions appearing in \eqref{eq:general_CTNDS} are predictable for each $\vx\in\mathbb{R}^k$ and $\vy\in\mathbb{R}^l$.
    (III) Sufficient regularity conditions are enforced such that \eqref{eq:general_CTNDS} has a unique solution.
    (IV) The diffusion coefficients of $\vx$ do not depend on $\vy$: $\boldsymbol\Sigma^{\vx}_m(t,\vx,\vy)=\boldsymbol\Sigma^{\vx}_m(t,\vx)$, for $m=1,2$.
    (V) The sum of the row-based Gramians of the observational noise feedback matrices, $(\ms^\vx\circ\ms^\vx)(t,\vx):=\ms^\vx_1(t,\vx)\ms^\vx_1(t,\vx)^\mathtt{T} +\ms^\vx_2(t,\vx)\ms^\vx_2(t,\vx)^\mathtt{T}$,
    is invertible for each $t\in[0,T]$ and $\vx\in\rr^k$.

{(IV)}--{(V)} function as identifiability conditions, since they ensure that the conditional distribution of $\vy$ given the data of $\vx$ contains all available information about $\vy$ in \eqref{eq:general_CTNDS} \cite{rozovsky2012stochastic}.

\subsubsection{Bayesian Data Assimilation}
Bayesian data assimilation provides a probabilistic state estimation for the unobserved variables $\vyt$ conditioned on the observations of $\vx$. The process consists of two sequential steps:
\begin{itemize}
    \item \emph{Forecast step:} Forward integration of \eqref{eq:general_CTNDS} generates a model-based prior distribution for the unobserved state $\vyt$.
    \item \emph{Analysis step:} Observational data of $\vx$ is assimilated into the forecast through Bayesian updating, reducing the uncertainty and bias in the estimation of $\vyt$.
\end{itemize}
In Bayesian data assimilation for state estimation, the prior distribution (obtained from model forecasts) is combined with the likelihood of observations through Bayes' theorem, yielding an updated posterior distribution.

\subsubsection{Filter and Smoother}
Bayesian data assimilation is naturally divided into two approaches based on the assimilated observational time window during analysis: filtering incorporates only current and past observations ($\vx(s\leq t)$), while smoothing additionally utilizes future data ($\vx(s\leq T)$), typically producing more accurate estimates through the broader temporal context.

Under mild spatial regularity assumptions on the parameters in \eqref{eq:general_CTNDS}, both the filter and smoother distributions are absolutely continuous with respect to the Lebesgue measure on \cite{rozovsky2012stochastic}. This guarantees the existence of their probability density functions (PDFs) for all $t\in[0,T]$, which we define as:
\begin{equation}
    \begin{aligned}
        \text{\underline{\textbf{Filter}:}} \quad &p_t^{\text{f}}(\vy|\vx):=p\big(\vyt|\vx(s\leq t)\big),\\
        \text{\underline{\textbf{Smoother}:}} \quad &p_t^{\text{s}}(\vy|\vx):=p\big(\vyt|\vx(s\leq T)\big).
    \end{aligned} \label{eq:filter_smoother_distr}
\end{equation}

\subsection{Explicit Expression of the ACI Framework for Gaussian Distributions} \label{sec:ACI_Foundations}

The relative entropy (also known as the Kullback-Leibler divergence) between two given PDFs $p(\mathbf{u})$ and $q(\mathbf{u})$ is defined as \cite{kleeman2011information}:
\begin{equation} \label{eq:relative_entropy}
    \mathcal{P}\big(p(\mathbf{u}),q(\mathbf{u})\big)=\int p(\mathbf{u})\log\left(\frac{p(\mathbf{u})}{q(\mathbf{u})}\right)\d\mathbf{u},
\end{equation}
which is positive unless $p(\mathbf{u})=q(\mathbf{u})$ and is invariant under general nonlinear changes of the state variables. If
\begin{equation}  \label{eq:RE_filter_smoother}
    \mathcal{P}\big(p_t^{\text{s}}(\vy|\vx),p_t^{\text{f}}(\vy|\vx)\big)>0
\end{equation}
holds at time $t\in[0,T]$, we can then establish an instantaneous assimilative causal link where $\vyt$ is identified as the cause of $\vx$ under the ACI framework, which we denote as:
\begin{equation} \label{eq:ACI_cause_notation}
    \vyt \rightarrow \vx.
\end{equation}
In the main text, ACI is formulated through $\mathcal{P}$. Nonetheless, the ACI framework is divergence-independent, so other $f$-divergences can be adopted as to measure the discrepancy between the filter and smoother distributions. This is because their positive-definiteness solely depends on the PDFs themselves.

The relative entropy \eqref{eq:relative_entropy} benefits from a simple and explicit formula when the distributions $p$ and $q$ are real-valued Gaussian densities. Specifically, for $p(\mathbf{u})$ being the PDF of $\mathcal{N}_M(\vm{p},\mr{p})$ and $q(\mathbf{u})$ of $\mathcal{N}_M(\vm{q},\mr{q})$, then \cite{cai2002mathematical, kleeman2002measuring}:
\begin{equation} \label{eq:signaldispersion}
    \mathcal{P}\big(p(\mathbf{u}),q(\mathbf{u})\big)=\frac{1}{2}(\vm{p}-\vm{q})^\mathtt{T}\mr{q}^{-1}(\vm{p}-\vm{q}) +\frac{1}{2}\left(\mathrm{tr}(\mr{p}\mr{q}^{-1})-M-\log(\mathrm{det}(\mr{p}\mr{q}^{-1}))\right),
\end{equation}
where $M$ is the dimension of the PDFs. The quadratic-form term on the right-hand side of \eqref{eq:signaldispersion} is called the signal and measures the information gain in the mean which is weighted by $\mr{q}^{-1}$, while the second term is called the dispersion and involves only the covariance ratio $\mr{p}\mr{q}^{-1}$. Hence, \eqref{eq:signaldispersion} is known as the {signal-dispersion decomposition}.

In Algorithm \ref{algo:base_ACI} we provide a brief outline of how to implement ACI in pseudocode.

\begin{algorithm}[H]
\caption{ACI for the identification of $\vyt\rightarrow\vx$ over $t\in[0,T]$}
\label{algo:base_ACI}
\SetAlgoLined
\KwData{Observations of $\vx$ in $[0,T]$ (continuous or discrete), $p_0^{\text{f}}(\vy|\vx)$}
\KwResult{$\mathcal{P}\big(p_t^{\text{s}}(\vy|\vx),p_t^{\text{f}}(\vy|\vx)\big)$ for $t\in[0,T]$, used to assess $\vyt \rightarrow \vx$ at each time}

\For{$0\leq t\leq T$}{
Use \eqref{eq:general_CTNDS} and the observations of $\vx$ to compute $p_t^{\text{f}}(\vy|\vx)$\Comment*[l]{E.g., see \ceqref{DarkGreen}{eq:filterpdf}--\ceqref{DarkGreen}{eq:kalmangain} for \ceqref{DarkGreen}{eq:general_CTNDS} or Theorem \cref{DarkGreen}{thm:filtercgns} for the system in \ceqref{DarkGreen}{eq:CGNS}}}

\For{$T\geq t\geq 0$ \normalfont{(moving backward)}}{
Use $p_t^{\text{f}}(\vy|\vx)$ and the observations of $\vx$ to compute $p_t^{\text{s}}(\vy|\vx)$\Comment*[l]{E.g., see \ceqref{DarkGreen}{eq:smootherpdf}--\ceqref{DarkGreen}{eq:adjointoperators} for \ceqref{DarkGreen}{eq:general_CTNDS} or Theorem \cref{DarkGreen}{thm:smoothercgns} for the system in \ceqref{DarkGreen}{eq:CGNS}}}

\For{$0\leq t\leq T$}{
Calculate $\mathcal{P}\big(p_t^{\text{s}}(\vy|\vx),p_t^{\text{f}}(\vy|\vx)\big)=\int p_t^{\text{s}}(\vy|\vx)\log\left(\displaystyle\frac{p_t^{\text{s}}(\vy|\vx)}{p_t^{\text{f}}(\vy|\vx)}\right)\d\vyt$\Comment*[l]{Can also use any other type of information metric, e.g., a different $f$-divergence}
\eIf{$\mathcal{P}\big(p_t^{\text{s}}(\vy|\vx),p_t^{\text{f}}(\vy|\vx)\big)>0$ }{
    $\vyt \rightarrow \vx$\;
  }{$\vy(t)\ \cancel{\rightarrow}\ \vx$\;}
}
\end{algorithm}

\subsection{Conditional ACI in the Presence of Additional Non-Target Variables} \label{sec:ACI_Generalization}
To discuss conditional ACI, which handles the case where additional non-target (or ancillary) variables are present, we split the observed variables $\vx$ into $(\vx_{\text{\normalfont{A}}},\vx_{\text{\normalfont{B}}})\in\rr^{k_{\text{\normalfont{A}}}+k_{\text{\normalfont{B}}}}$, where $k=k_{\text{\normalfont{A}}}+k_{\text{\normalfont{B}}}$. We aim to determine whether $\vy$ is the cause of $\vx_{\text{\normalfont{A}}}$ in the presence of the remaining interfering variables $\vx_{\text{\normalfont{B}}}$. Recall that $\vyt\rightarrow\vx$ is assessed by quantifying the uncertainty reduction in the smoother distribution beyond the filter one:
\begin{equation} \label{eq:filter_smoother_ancillary}
    \mathcal{P}\big(p_t^{\text{s}}(\vy|\vx_{\text{\normalfont{A}}},\vx_{\text{\normalfont{B}}}),p_t^{\text{f}}(\vy|\vx_{\text{\normalfont{A}}},\vx_{\text{\normalfont{B}}})\big), \quad t\in[0,T].
\end{equation}
Since $\vx_{\text{B}}$ interacts with both $\vx_{\text{A}}$ and $\vy$ in \eqref{eq:general_CTNDS}, it directly affects the state estimation of $\vyt$. Therefore, establishing an instantaneous conditional assimilative causal link from $\vyt$ to $\vx_{\text{A}}$ beyond $\vx_{\text{B}}$ requires careful treatment of $\vx_{\text{B}}$'s contributions in \eqref{eq:filter_smoother_ancillary}.

To accurately assess $\vx_{\text{B}}$'s influence on $\vyt$'s estimation for establishing the conditional assimilative causal link from $\vyt$ to $\vx_{\text{A}}$ beyond the presence of $\vx_{\text{B}}$, we have to examine its contributions during the two-step Bayesian data assimilation pipeline: (1) through the model dynamics during the forecast step, and (2) via the likelihood distribution during the analysis step. This systematic approach allows us to isolate and remove $\vx_{\text{B}}$'s effects on the state estimation of $\vyt$, thereby ensuring the measured uncertainty reduction in $\vyt$ stems solely from $\vx_{\text{A}}$'s influence.

\subsubsection{Analyzing how \texorpdfstring{$\vx_{\text{\normalfont{B}}}$}{xB} Affects the State Estimation of \texorpdfstring{$\vyt$}{y(t)}}
To analytically characterize how non-target variables $\vx_{\text{B}}$ influence the $\vyt$ estimation during Bayesian data assimilation, it suffices to examine the governing evolution equations for the unnormalized filter and smoother densities associated with \eqref{eq:general_CTNDS} \cite{rozovsky2012stochastic,liptser2001statistics}. Under sufficient regularity conditions, we have that the unnormalized filter density $u_t^{\text{\normalfont{f}}}(\vy|\vx)$ satisfies a linear random partial differential equation (PDE) for $0\leq t\leq T$:
\begin{equation} \label{eq:filterpdf}
    \partial u_t^{\text{\normalfont{f}}}(\vy|\vx)=\underbrace{\mathcal{L}^*u_t^{\text{\normalfont{f}}}(\vy|\vx)\rmd t}_{\displaystyle\mathclap{\text{Forecast}}} + \underbrace{\mathcal{M}^*u_t^{\text{\normalfont{f}}}(\vy|\vx)\cdot \rmd \vx}_{\displaystyle\mathclap{\text{Analysis}}},
\end{equation}
where $\displaystyle \mathcal{L}^*$ is the \textit{forward Kolmogorov} or \textit{Fokker-Planck operator}:
\begin{equation} \label{eq:fokkerplanck}
    \mathcal{L}^* u_t(\vy):=-\nabla_{\vy}\cdot \big(\mathbf{f}^\vy u_t\big)+\frac{1}{2}\nabla_{\vy}\cdot \nabla_{\vy}\cdot\big((\ms^\vy\circ\ms^\vy)u_t\big),
\end{equation}
and $\displaystyle \mathcal{M}^*$ is the \textit{forward} or \textit{filter Kalman update operator}:
\begin{equation} \label{eq:kalmangain}
    \mathcal{M}^*u_t(\vy):=(\ms^\vx\circ\ms^\vx)^{-1/2}\big(-\nabla_\vy\cdot \big((\ms^\vx\circ\ms^\vx)^{-1/2}(\ms^\vx\circ\ms^\vy) u_t\big)+(\ms^\vx\circ\ms^\vx)^{-1/2}\vf^\vx u_t\big),
\end{equation}
while the unnormalized smoother density $u_t^{\text{\normalfont{s}}}(\vy|\vx)$ satisfies a backward linear random PDE for $T\geq t\geq 0$:
\begin{equation} \label{eq:smootherpdf}
    \smooth{\partial u_t^{\text{\normalfont{s}}}}(\vy|\vx)=\underbrace{\mathcal{L}u_t^{\text{\normalfont{s}}}(\vy|\vx)\rmd t}_{\displaystyle\mathclap{\text{Forecast}}} + \underbrace{\mathcal{M}u_t^{\text{\normalfont{s}}}(\vy|\vx)\cdot \smooth{\rmd \vx}}_{\displaystyle\mathclap{\text{Analysis}}},
\end{equation}
where $\mathcal{L}$ is the \textit{backward Kolmogorov operator} and formal adjoint of $\displaystyle \mathcal{L}^*$ in \eqref{eq:fokkerplanck}, while $\mathcal{M}$ is the \textit{backward} or \textit{smoother Kalman update operator} and formal adjoint of the operator $\displaystyle \mathcal{M}^*$ appearing in \eqref{eq:kalmangain}, which are both explicitly given by:
\begin{subequations} \label{eq:adjointoperators}
   \begin{align}
        \mathcal{L} u_t(\vy)&=\mathbf{f}^\vy\cdot \nabla_{\vy}u_t+\frac{1}{2}\mathrm{tr}\big((\ms^\vy\circ\ms^\vy)\nabla_{\vy}^2u_t\big), \label{eq:adjointoperators1}\\
        \mathcal{M}u_t(\vy)&=(\ms^\vx\circ\ms^\vx)^{-1/2}\big((\ms^\vx\circ\ms^\vx)^{-1/2}(\ms^\vx\circ\ms^\vy)\nabla_\vy u_t+(\ms^\vx\circ\ms^\vx)^{-1/2}\vf^\vx u_t\big). \label{eq:adjointoperators2}
    \end{align}
\end{subequations}
The diffusion interactions $(\ms^\vy\circ\ms^\vy)$ and $(\ms^\vx\circ\ms^\vy)$ are defined in the same manner as $\ms^\vx\circ\ms^\vx$, while $\displaystyle \smooth{\partial u_t^{\text{\normalfont{s}}}}$ and $\displaystyle \smooth{\rmd \vx}$ denote backward stochastic It\^{o} integrals (i.e., the negative of the usual differentials up to the principal (linear) part).

\subsubsection{Managing the Influence of \texorpdfstring{$\vx_{\text{B}}$}{xB} on \texorpdfstring{$\vyt$}{y(t)}'s Estimation}
\label{subsec:handling_effects}
From \eqref{eq:filterpdf} and \eqref{eq:smootherpdf}, we can deduce that as to eliminate the effect of $\vx_{\text{B}}$ on the state estimation of $\vyt$, while retaining only $\vx_{\text{A}}$'s observational influence, we can simply assign infinite uncertainty to $\vx_{\text{B}}$ during the analysis step. This follows from the Kalman update's inverse dependence on the observational uncertainty in \eqref{eq:kalmangain} and \eqref{eq:adjointoperators2}. This manipulation effectively nullifies $\vx_{\text{B}}$'s contributions to the state update, thus preventing it from impacting the uncertainty reduction (the key mechanism behind ACI), while still preserving its role in the forecast dynamics.

\paragraph{Implementation via posterior densities.}
We implement this proposition through a formal limit operation on the posterior filter and smoother PDFs (normalizations of the densities in \eqref{eq:filterpdf} and \eqref{eq:smootherpdf}, respectively),  therefore defining for each $t\in[0,T]$:

\begin{subequations} \label{eq:filter_smoother_ancillary_inf_uncert}
    \begin{align}
        p^{\text{f}|\vx_{\text{B}}}_t(\vy|\vx_{\text{A}}) &:= \lim_{\mathrm{Var}(\vx_{\text{B}}(t))\to+\infty} p\big(\vy(t)\big|\vx_{\text{A}}(s\leq t),\vx_{\text{B}}(s\leq t)\big) \label{eq:filter_smoother_ancillary_inf_uncert1} \\
        p^{\text{s}|\vx_{\text{B}}}_t(\vy|\vx_{\text{A}}) &:= \lim_{\mathrm{Var}(\vx_{\text{B}}(t))\to+\infty} p\big(\vy(t)\big|\vx_{\text{A}}(s\leq T),\vx_{\text{B}}(s\leq T)\big) \label{eq:filter_smoother_ancillary_inf_uncert2}
    \end{align}
\end{subequations}

The limit $\mathrm{Var}(\vx_{\text{B}}(t)) \to +\infty$ represents taking $\vx_{\text{B}}$'s marginal likelihood uncertainty to infinity, which removes $\vx_{\text{B}}$'s influence in the Kalman updates, which occurs through the Kalman gain's effect on the innovation process in \eqref{eq:kalmangain} and \eqref{eq:adjointoperators2}, while preserving its dynamic role during forecasts in \eqref{eq:fokkerplanck} and \eqref{eq:adjointoperators1}.

\paragraph{Conditional assimilative causal links.}
Using the distributions in \eqref{eq:filter_smoother_ancillary_inf_uncert}, we generalize the ACI framework. When:
\begin{equation} \label{eq:RE_filter_smoother_general}
    \mathcal{P}\big(p^{\text{s}|\vx_{\text{B}}}_t(\vy|\vx_{\text{A}}), p^{\text{f}|\vx_{\text{B}}}_t(\vy|\vx_{\text{A}})\big) > 0,
\end{equation}
we establish a \textit{conditional assimilative causal link} at $t\in[0,T]$, denoted as:
\begin{equation} \label{eq:ACI_cause_cond_notation}
    \big(\vyt\rightarrow\vx_{\text{A}}\big) \big| \vx_{\text{B}}.
\end{equation}
This indicates $\vyt$ as the cause of $\vx_{\text{A}}$ conditioned on $\vx_{\text{B}}$ when the relative entropy between the smoother- and filter-based distributions in \eqref{eq:filter_smoother_ancillary_inf_uncert} is nonzero.

In Algorithm \ref{algo:cond_ACI} we provide a brief outline of how to implement conditional ACI in pseudocode.

\begin{algorithm}[H]
\caption{Conditional ACI for the identification of $\big(\vyt\rightarrow\vx_{\text{A}}\big) \big| \vx_{\text{B}}$ over $t\in[0,T]$}
\label{algo:cond_ACI}
\SetAlgoLined
\KwData{Observations of $\vx$ in $[0,T]$ (continuous or discrete), $p_0^{\text{f}}(\vy|\vx)$}
\KwResult{$\mathcal{P}\big(p^{\text{\normalfont{s}}|\vx_{\text{\normalfont{B}}}}_t(\vy|\vx_{\text{\normalfont{A}}}), p^{\text{\normalfont{f}}|\vx_{\text{\normalfont{B}}}}_t(\vy|\vx_{\text{\normalfont{A}}})\big)$ for $t\in[0,T]$, used to assess $\big(\vyt\rightarrow\vx_{\text{A}}\big) \big| \vx_{\text{B}}$ at each time}

\For{$0\leq t\leq T$}{
Use \eqref{eq:general_CTNDS} and the observations of $\vx=(\vx_{\text{A}},\vx_{\text{B}})^\mathtt{T}$ to compute $p^{\text{\normalfont{f}}|\vx_{\text{\normalfont{B}}}}_t(\vy|\vx_{\text{\normalfont{A}}})$ by letting $\mathrm{Var}(\vx_{\text{B}}(t))\to+\infty$ during the corresponding analysis step of Bayesian data assimilation, defined by the forward/filter Kalman update operator $\mathcal{M}^*$ in \eqref{eq:kalmangain}\Comment*[l]{E.g., see Theorem \cref{DarkGreen}{thm:filtercgns} and the proof of Theorem \cref{DarkGreen}{thm:nilcondcausality} for the system in \ceqref{DarkGreen}{eq:CGNS} as a concrete example}}

\For{$T\geq t\geq 0$ \normalfont{(moving backward)}}{
Use $p^{\text{\normalfont{f}}|\vx_{\text{\normalfont{B}}}}_t(\vy|\vx_{\text{\normalfont{A}}})$ and the observations of $\vx=(\vx_{\text{A}},\vx_{\text{B}})^\mathtt{T}$ to compute $p^{\text{\normalfont{s}}|\vx_{\text{\normalfont{B}}}}_t(\vy|\vx_{\text{\normalfont{A}}})$ by letting $\mathrm{Var}(\vx_{\text{B}}(t))\to+\infty$ during the corresponding analysis step of Bayesian data assimilation, defined by the backward/smoother Kalman update operator $\mathcal{M}$ in \eqref{eq:adjointoperators2}\Comment*[l]{E.g., see Theorem \cref{DarkGreen}{thm:smoothercgns} and the proof of Theorem \cref{DarkGreen}{thm:nilcondcausality} for the system in \ceqref{DarkGreen}{eq:CGNS} as a concrete example}}

\For{$0\leq t\leq T$}{
Calculate $\mathcal{P}\big(p^{\text{\normalfont{s}}|\vx_{\text{\normalfont{B}}}}_t(\vy|\vx_{\text{\normalfont{A}}}), p^{\text{\normalfont{f}}|\vx_{\text{\normalfont{B}}}}_t(\vy|\vx_{\text{\normalfont{A}}})\big)=\int p^{\text{\normalfont{s}}|\vx_{\text{\normalfont{B}}}}_t(\vy|\vx_{\text{\normalfont{A}}})\log\left(\displaystyle\frac{p^{\text{\normalfont{s}}|\vx_{\text{\normalfont{B}}}}_t(\vy|\vx_{\text{\normalfont{A}}})}{p^{\text{\normalfont{f}}|\vx_{\text{\normalfont{B}}}}_t(\vy|\vx_{\text{\normalfont{A}}})}\right)\d\vyt$\Comment*[l]{Can also use any other type of information metric, e.g., a different $f$-divergence}
\eIf{$\mathcal{P}\big(p^{\text{\normalfont{s}}|\vx_{\text{\normalfont{B}}}}_t(\vy|\vx_{\text{\normalfont{A}}}), p^{\text{\normalfont{f}}|\vx_{\text{\normalfont{B}}}}_t(\vy|\vx_{\text{\normalfont{A}}})\big) > 0$ }{
    $\big(\vyt\rightarrow\vx_{\text{A}}\big) \big| \vx_{\text{B}}$\;
  }{$ \big(\vyt\ \cancel{\rightarrow}\ \vx_{\text{A}}\big) \big| \vx_{\text{B}}$\;}
}
\end{algorithm}

\subsubsection{Arguments in Favor of the Conditional ACI Framework} \label{sec:ACI_Generalized_Justification}
The generalized ACI framework presented in Section \ref{sec:ACI_Generalization} and defined by \eqref{eq:filter_smoother_ancillary_inf_uncert}--\eqref{eq:RE_filter_smoother_general} offers several key advantages for establishing \eqref{eq:ACI_cause_cond_notation}:

\begin{enumerate}[leftmargin=*,label=\textbf{(\arabic*)},itemsep=0.5ex]
    \item Complete elimination of $\vx_{\text{B}}$'s influence during data assimilation, ensuring any uncertainty reduction in $\vyt$ stems solely from $\vx_{\text{A}}$, consistent with our ACI objectives. Furthermore, during analysis, the correlations between $\vx_{\text{A}}$ and $\vx_{\text{B}}$ in their likelihood are necessarily sent to zero as the uncertainty of $\vx_{\text{B}}$ goes to infinity. This further resolves the potential for spurious associations arising over the observable state space that can flow over when inferring causality from $\vyt$ to $\vx_{\text{A}}$; see Theorem \ref{thm:nilcondcausality}. A rough analogue to this approach would be the assignment of an improper prior as an uninformative or diffuse prior in parametric Bayesian inference.

    \item While discounting $\vx_{\text{B}}$'s observational influence, we retain its time series in $\vyt$'s estimation, analogous to transfer entropy approaches where conditional causal links account for spurious effects. Our framework naturally conditions on $\vx_{\text{B}}$'s observations while assigning them uniform likelihood weight through the infinite uncertainty assumption.

    \item Preservation of the original dynamical system structure. Unlike methods that modify dynamics or eliminate governing equations (potentially yielding nonphysical results), our approach maintains model integrity while properly handling $\vx_{\text{B}}$'s spurious contributions through the Bayesian framework.

    \item Straightforward implementation. When the filter/smoother distributions in \eqref{eq:filter_smoother_distr} solve continuous random PDEs, the limiting distributions in \eqref{eq:filter_smoother_ancillary_inf_uncert} emerge naturally under continuous parameter dependence, particularly as $\mathrm{Var}(\vx_{\text{B}}(t))\to+\infty$ for the relevant covariance matrix elements.
\end{enumerate}

\subsubsection{Further Justification: Why Infinite Uncertainty Outperforms Marginalization}
To further justify our approach for establishing temporally-varying conditional assimilative causal links in the presence of ancillary variables, consider treating the non-target variables $\vx_{\text{B}}$ as unobserved. In this scenario, the smoother PDF at time $t\in[0,T]$ becomes:
\begin{equation*}
    p_t\big(\vy,\vx_{\text{B}}\big|\vx_{\text{A}}(s\leq T)\big).
\end{equation*}
One might attempt to infer a conditional causal relationship from $\vyt$ to $\vx_{\text{A}}$ by marginalizing over $\vx_{\text{B}}(t)$:
\begin{equation*}
    p_t\big(\vy\big|\vx_{\text{A}}(s\leq T)\big)=\int p_t\big(\vy,\vx_{\text{B}}\big|\vx_{\text{A}}(s\leq T)\big)\d\vx_{\text{B}}(t).
\end{equation*}
However, this approach proves problematic even for simple turbulent nonlinear dynamical systems. Crucially, the issue persists even when conditioning on $\vx_{\text{B}}(t)$ rather than marginalizing.

The fundamental flaw lies in how correlations between $\vyt$ and $\vx_{\text{B}}(t)$ affect uncertainty reduction during state estimation. These correlations can introduce spurious causal relationships that become unavoidable once the model forecast begins propagating forward, regardless of time-step size. Consider the following causal network:

\begin{figure}[H]
\centering
\resizebox{0.4\textwidth}{!}{%
\begin{circuitikz}
\tikzstyle{every node}=[font=\Huge]
\draw [ line width=2pt ] (2.5,14.75) circle (0.75cm);
\node [font=\Huge] at (2.5,14.75) {$\vy$};
\draw [ line width=2pt ] (7.25,14.75) circle (0.75cm);
\node [font=\Huge] at (7.25,14.75) {$\vx_{\text{\normalfont{B}}}$};
\draw [line width=2pt, ->, >=Stealth] (3.25,14.75) -- (6.5,14.75);
\draw [ line width=2pt ] (12,14.75) circle (0.75cm);
\node [font=\Huge] at (12,14.75) {$\vx_{\text{\normalfont{A}}}$};
\draw [line width=2pt, ->, >=Stealth] (8,14.75) -- (11.25,14.75);
\end{circuitikz}
}%
\end{figure}
For this causal chain, this marginalization (or conditioning) method might falsely suggest a direct causal link from $\vyt$ to $\vx_{\text{A}}$. While these filter- and smoother-based distributions of $\vyt$ given $\vx_{\text{A}}$ may differ after marginalization (or conditioning), we cannot attribute causation to $\vx_{\text{A}}$ under \eqref{eq:RE_filter_smoother}. The uncertainty reduction in $\vyt$ inherently incorporates information transfer from $\vx_{\text{B}}(t)$ due to their coupled dynamics, unlike the distributions in \eqref{eq:filter_smoother_ancillary_inf_uncert} where $\vx_{\text{B}}$ trajectories are known but carry infinite uncertainty in their likelihood.

\subsection{Dynamic Discovery of the Causal Influence Range (CIR)} \label{sec:CIR}
The relative entropy between the smoother and filter solutions in \eqref{eq:RE_filter_smoother} quantifies both the existence and strength of an assimilative causal link from $\vyt$ to $\vx$. Similarly, the generalized metric in \eqref{eq:RE_filter_smoother_general} measures the conditional causal link $\big(\vyt\rightarrow\vx_{\text{A}}\big) \big| \vx_{\text{B}}$ and its intensity. However, these ACI metrics alone cannot determine the \emph{temporal extent} of causal influence, namely, how many future values of $\vx$ (or $\vx_{\text{A}}$ conditioned on $\vx_{\text{B}}$) are affected by $\vyt$ over $[t,T]$. We define this temporal characteristic as the {causal influence range} ({CIR}), representing the future time window where $\vyt$'s causal impact persists. Below we develop the theory for unconditional CIR; the conditional case ({CCIR}) follows analogously by substituting the appropriate equations with those from Section \ref{sec:ACI_Generalization}.

In chaotic turbulent systems, new observations of $\vx$ influence $\vyt$'s estimation only within finite time windows. More chaotic dynamics accelerate memory decay, typically exhibiting exponential decay modulated by factors like $\vx$'s signal-to-noise ratio \cite{andreou2024adaptive}. Our framework estimates this CIR duration for $\vyt \rightarrow \vx$ by determining how many future $\vx$ values are meaningfully affected.

For arbitrary $t\in[0,T]$ and $T'\in[t,T]$, we compare two smoother distributions:
\begin{itemize}
    \item \emph{Complete} smoother $p_{t}(\vy|\vx(s\leq T))$: Optimal (minimum variance) estimation using all available data.
    \item \emph{Lagged} smoothers $p_{t}(\vy|\vx(s\leq T'))$: Suboptimal estimations using partial data ($T'\leq T$).
\end{itemize}
The divergence between these distributions reveals the temporal decay of $\vyt$'s causal influence on future $\vx$ values as $T'$ increases from $t$ toward $T$.

\subsubsection{Quantifying the CIR}

The finite nature of the CIR motivates measuring how the complete and lagged smoother distributions differ as functions of observational time $T'\in[t,T]$ when evaluating the recovery of the state of $\vy$ at $t$. We naturally quantify this discrepancy using the relative entropy:
\begin{equation} \label{eq:metric_CIR}
    \delta(T';t) := \mathcal{P}\big(p_t(\vy|\vx(s\leq T)), p_t(\vy|\vx(s\leq T'))\big), \quad 0\leq t \leq T'\leq T
\end{equation}
where the second input in $\delta$, i.e., $t$, is the time on which the posterior distributions of $\vy$ are evaluated. For analysis, we normalize the domain of $\delta$ to $\mathrm{I}=[0,1]$ via the transformation $h(\tau) = t + \tau(T-t)$:
\begin{equation} \label{eq:std_metric_CIR}
    \hat{\delta}(\tau;t) := \delta(t + \tau(T-t); t), \quad \tau\in\mathrm{I}.
\end{equation}
Let $\mathcal{M}(t) := \|\hat{\delta}(\cdot;t)\|_{L^\infty(\mathrm{I})}$ denote the maximum divergence, which exists in practical applications. Notably:
\begin{itemize}
    \item At $\tau=1$, $\hat{\delta}(1;t)=0$ by the positive-definiteness of the relative entropy.
    \item At $\tau=0$, $\hat{\delta}(0;t)$ recovers the standard ACI metric from \eqref{eq:RE_filter_smoother}.
    \item For $\tau\in(0,1)$, $\hat{\delta}(\tau;t)$ measures the lack of information from incorporating a limited portion of the future observations of $\vx$ after $t$, i.e., up until $h(\tau)$, $\vx(s\leq h(\tau))$, but not up to $T$.
\end{itemize}
For chaotic systems, $\mathcal{M}(t)$ typically occurs at $\tau=0$ or shortly thereafter. The interval where $\hat{\delta}(\tau;t)$ remains large suggests significant causal influence under ACI, since it indicates that there is substantial information gain to be incurred by additionally incorporating the future observations of $\vx$ from $[h(\tau),T]$.

\paragraph{Formal definition of the CIR.}
For a threshold $\varepsilon\geq 0$, define:
\begin{equation}\label{eq:Jt}
    \mathrm{J}_t(\varepsilon) := \{\tau\in[0,1] : \hat{\delta}(\tau;t) > \varepsilon\}.
\end{equation}
The CIR length of $\mathbf{y}(t)\rightarrow\vx$ is then given by:
\begin{equation} \label{eq:subjective_CIR_length}
    \overset{\sim}{\uptau}_{\vyt\rightarrow\vx}(\varepsilon) := (T-t)\sup\mathrm{J}_t(\varepsilon),
\end{equation}
with the convention $\overset{\sim}{\uptau}_{\vyt\rightarrow\vx}(\varepsilon)=0$ when $\mathrm{J}_t(\varepsilon)=\emptyset$ (true for $\varepsilon\geq \mathcal{M}(t)$). The associated CIR interval becomes:
\begin{equation} \label{eq:subjective_CIR}
    \widetilde{\mathrm{CIR}}_{\vyt\rightarrow\vx}(\varepsilon) := [t, t + \overset{\sim}{\uptau}_{\vyt\rightarrow\vx}(\varepsilon)].
\end{equation}
As aforementioned, for most practical chaotic dynamical systems, the maximum information deficit $\mathcal{M}(t)$ typically occurs at $\tau=0$ or shortly thereafter. This reflects the characteristic memory decay in such systems, where the influence on the state estimation of $\vyt$ from the future observations of $\vx$ in $[h(\tau),T]$ diminishes with increasing lead time $\tau$.
Note also that under the mild assumption that $\hat{\delta}(\cdot;t)$ is Lebesgue measurable for each $t\in[0,T]$, which is a condition satisfied when $\delta(\cdot;t)$ is Borel measurable (that holds for the relative entropy and most complex turbulent systems found in practice), we obtain the fundamental inequality:
\begin{equation} \label{eq:subjective_CIR_length_ineq}
    \overset{\sim}{\uptau}_{\vyt\rightarrow\vx}(\varepsilon)\geq  (T-t)\lambda_\mathrm{I}\big(\mathrm{J}(\varepsilon)\big),
\end{equation}
where $\lambda_\mathrm{I}(\cdot)$ represents the Lebesgue measure restricted to $\mathrm{I}$. Thus, $\widetilde{\mathrm{CIR}}$ provides a liberal estimate of the temporal window where $\vyt$ causally influences $\vx$.

\paragraph{Interpretation.}
The decreasing trend of $\delta(T';t)$ reflects the finite memory of chaotic dynamics:
\begin{itemize}
    \item $\delta(t;t)$ assesses whether $\vyt\rightarrow\vx$ via ACI.
    \item $\delta(T;t)=0$ shows complete information incorporation.
    \item The significant $\delta(T';t)$ region marks where $\vyt$ substantially affects future $\vx$.
\end{itemize}

\subsubsection{Subjective and Objective Perspectives of the CIR}

The current CIR framework, while intuitive, introduces subjectivity through its dependence on the threshold parameter $\varepsilon$. We therefore distinguish between two perspectives:
\begin{itemize}
    \item \textbf{Subjective CIR:} Denoted by $\overset{\sim}{\uptau}_{\vyt\rightarrow\vx}(\varepsilon)$ (length) and $\widetilde{\mathrm{CIR}}_{\vyt\rightarrow\vx}(\varepsilon)$ (interval), marked with tildes to emphasize their $\varepsilon$-dependence.
    \item \textbf{Objective CIR:} An $\varepsilon$-independent alternative developed in the sequel.
\end{itemize}

\paragraph{Objective CIR definition.}
The natural objective measure averages the subjective CIR lengths over all possible thresholds:
\begin{equation} \label{eq:objective_CIR_length}
    \uptau_{\vyt\rightarrow\vx} := \frac{1}{\mathcal{M}(t)}\int_0^{\mathcal{M}(t)} \overset{\sim}{\uptau}_{\vyt\rightarrow\vx}(\varepsilon) \d\varepsilon, \quad t\in[0,T],
\end{equation}
where $\mathcal{M}(t) = \max_{\tau\in[0,1]} \hat{\delta}(\tau;t)$. Dividing by $\mathcal{M}(t)$ guarantees the unit of the objective CIR is temporal. This averaging process:
(a) yields dimensionally consistent results (time units), (b) guarantees $\uptau_{\vyt\rightarrow\vx} \in [0,T-t]$, and (c) defaults to $0$ when $\mathcal{M}(t)=0$ (no causal link).

The corresponding {objective CIR interval} is then defined as:
\begin{equation} \label{eq:objective_CIR}
    \mathrm{CIR}_{\vyt\rightarrow\vx} := [t, t + \uptau_{\vyt\rightarrow\vx}], \quad t\in[0,T].
\end{equation}
By construction, this objective interval is always contained within the maximal subjective CIR and always contains the minimal one.

\paragraph{Analogy with correlation analysis.}
These CIR measures parallel concepts in correlation analysis:
\begin{itemize}
    \item \textbf{Subjective CIR} resembles the autocorrelation function (ACF), where memory duration depends on a subjective threshold.
    \item \textbf{Objective CIR} mirrors the decorrelation time, providing a threshold-free measure through integration.
\end{itemize}
Just as decorrelation time objectively quantifies system memory by integrating the ACF, our objective CIR integrates over all possible thresholds to remove subjectivity, by assigning an impartial uniform weight on each subjective CIR, while capturing the essential temporal influence structure.

\subsubsection{Efficient Computation of the Objective CIR}\label{sec:Computing_CIR}
While \eqref{eq:objective_CIR_length} offers a theoretically sound definition of the objective CIR length, its practical implementation faces significant computational challenges. The subjective CIR length $\overset{\sim}{\uptau}_{\vyt\rightarrow\vx}(\varepsilon)$ generally admits no analytical solution across the full parameter space $\varepsilon\in[0,\mathcal{M}(t)]$ and $t\in[0,T]$. Numerically evaluating the integral in \eqref{eq:objective_CIR_length} requires repeated smoother computations and leads to a computational complexity that scales at best quadratically with the number of discretization points adopted for quadrature.

We overcome this computational limitation by developing an efficient lower-bound approximation that becomes exact when the information loss metric $\delta(T';t)$ in \eqref{eq:metric_CIR} is non-increasing with respect to future observations (i.e., non-increasing in $T'$). This is the typical case in most applications as the memory usually decays as the lead time increases. The following theorem formalizes this approach, requiring only that $\hat{\delta}(\cdot;t)$ is Lebesgue measurable for each $t\in[0,T]$.

\begin{theorem}[Computing the Objective CIR]
    \label{thm:obj_subj_CIR_connection}
    Assume that there exists $\mathcal{M}(t):=\lVert \hat{\delta}(\cdot;t)\rVert_{L^\infty(\mathrm{I})}=\lVert \delta(\cdot;t)\rVert_{L^\infty(\mathrm{[t,T]})}>0$ for each $t\in[0,T]$. Then, for $t\in[0,T]$:
    \begin{equation}
        \begin{aligned}
            \frac{1}{\mathcal{M}(t)}\int^T_t\delta(T';t)\rmd T'&=\frac{T-t}{\mathcal{M}(t)}\int^1_0\hat{\delta}(\tau;t)\rmd\tau\\
            &=\frac{T-t}{\mathcal{M}(t)}\int_0^{\mathcal{M}(t)}\lambda_\mathrm{I}\big(\mathrm{J}_t(\varepsilon)\big)\rmd\varepsilon\\
            &\leq \frac{1}{\mathcal{M}(t)}\int_0^{\mathcal{M}(t)} \overset{\sim}{\uptau}_{\vyt\overset{{\text{\normalfont{ACI}}}}{\longrightarrow}\vx}(\varepsilon) \rmd\varepsilon=\uptau_{\vyt\overset{{\text{\normalfont{ACI}}}}{\longrightarrow}\vx},
        \end{aligned} \label{eq:obj_subj_CIR_connection}
    \end{equation}
    where $\hat{\delta}(\tau;t)$, $\mathrm{J}_t(\varepsilon)$, and $\overset{\sim}{\uptau}_{\vyt\overset{{\text{\normalfont{ACI}}}}{\longrightarrow}\vx}(\varepsilon)$ are defined in \eqref{eq:std_metric_CIR}, \eqref{eq:Jt}, and \eqref{eq:subjective_CIR_length}, respectively.
    The equality (in the last inequality in \eqref{eq:obj_subj_CIR_connection}) is achieved if and only if $\hat{\delta}(\cdot;t)$ is a nonincreasing function in $\mathrm{I}$. In that case, we also have:
    \begin{equation*}
        \mathcal{M}(t)=\hat{\delta}(0;t)=\delta(t;t)=\mathcal{P}\big(p_{t}^{\text{\ns}}(\vy|\vx),p_{t}^{\text{\nf}}(\vy|\vx)\big).
    \end{equation*}
\end{theorem}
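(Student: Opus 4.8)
The plan is to establish the chain in \eqref{eq:obj_subj_CIR_connection} one equality/inequality at a time, since each link is a standard measure-theoretic identity once the setup is in place. First I would prove the initial equality, which is nothing more than the change of variables $T' = h(\tau) = t + \tau(T-t)$, so $\rmd T' = (T-t)\rmd\tau$, applied to $\delta(T';t) = \hat\delta(\tau;t)$; this also immediately gives $\lVert\hat\delta(\cdot;t)\rVert_{L^\infty(\mathrm{I})} = \lVert\delta(\cdot;t)\rVert_{L^\infty([t,T])}$, so the hypothesis on $\mathcal{M}(t)$ is consistent. The second equality is the ``layer-cake'' (Cavalieri) representation of the integral of a nonnegative measurable function: for $\hat\delta(\cdot;t)\geq 0$ measurable on $\mathrm{I}$, $\int_0^1 \hat\delta(\tau;t)\,\rmd\tau = \int_0^\infty \lambda_\mathrm{I}\big(\{\tau : \hat\delta(\tau;t) > \varepsilon\}\big)\,\rmd\varepsilon = \int_0^{\mathcal{M}(t)} \lambda_\mathrm{I}\big(\mathrm{J}_t(\varepsilon)\big)\,\rmd\varepsilon$, where the upper limit truncates to $\mathcal{M}(t)$ because $\mathrm{J}_t(\varepsilon) = \emptyset$ for $\varepsilon \geq \mathcal{M}(t)$ (here one should note $\hat\delta(\tau;t)\leq\mathcal{M}(t)$ for a.e.\ $\tau$, or pointwise if one takes $\mathcal{M}(t)$ as an essential sup; the statement's use of $\max$ presumes attainment, which I would justify by continuity of $\delta(\cdot;t)$ in typical applications or simply carry through with the essential supremum).

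Next comes the inequality, which is the crux of the theorem and the one genuinely requiring the monotonicity discussion. The key observation is that $\mathrm{J}_t(\varepsilon) = \{\tau\in[0,1] : \hat\delta(\tau;t) > \varepsilon\}$ is a subset of $[0,1]$ whose supremum is exactly $\tilde\tau_{\vyt\to\vx}(\varepsilon)/(T-t) = \sup\mathrm{J}_t(\varepsilon)$ by \eqref{eq:subjective_CIR_length}; and for \emph{any} measurable subset $S\subseteq[0,1]$ one has $\lambda_\mathrm{I}(S) \leq \sup S$, because $S \subseteq [0,\sup S]$. Multiplying by $(T-t)$ and integrating in $\varepsilon$ over $[0,\mathcal{M}(t)]$, then dividing by $\mathcal{M}(t)$, turns this pointwise-in-$\varepsilon$ bound into $\frac{T-t}{\mathcal{M}(t)}\int_0^{\mathcal{M}(t)}\lambda_\mathrm{I}(\mathrm{J}_t(\varepsilon))\,\rmd\varepsilon \leq \frac{1}{\mathcal{M}(t)}\int_0^{\mathcal{M}(t)}\tilde\tau_{\vyt\to\vx}(\varepsilon)\,\rmd\varepsilon = \tau_{\vyt\to\vx}$, which is precisely the last line of \eqref{eq:obj_subj_CIR_connection} after recognizing the right-hand integral as the definition \eqref{eq:objective_CIR_length}. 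Measurability of $\varepsilon\mapsto\lambda_\mathrm{I}(\mathrm{J}_t(\varepsilon))$ is automatic (it is the distribution function of $\hat\delta(\cdot;t)$, hence monotone), and measurability of $\varepsilon\mapsto\tilde\tau_{\vyt\to\vx}(\varepsilon)$ follows since $\sup\mathrm{J}_t(\varepsilon)$ is non-increasing in $\varepsilon$.

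For the equality characterization, I would argue both directions. If $\hat\delta(\cdot;t)$ is non-increasing on $\mathrm{I}$, then for each $\varepsilon$ the superlevel set $\mathrm{J}_t(\varepsilon)$ is an interval of the form $[0, c_\varepsilon)$ or $[0,c_\varepsilon]$ with $c_\varepsilon = \sup\mathrm{J}_t(\varepsilon)$, so $\lambda_\mathrm{I}(\mathrm{J}_t(\varepsilon)) = c_\varepsilon = \sup\mathrm{J}_t(\varepsilon)$ exactly, collapsing the inequality to an equality; moreover a non-increasing function attains its supremum at the left endpoint, giving $\mathcal{M}(t) = \hat\delta(0;t) = \delta(t;t)$, and by the second bullet after \eqref{eq:std_metric_CIR} together with \eqref{eq:RE_filter_smoother} this equals $\mathcal{P}(p_t^{\text{\ns}}(\vy|\vx), p_t^{\text{\nf}}(\vy|\vx))$. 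Conversely, if equality holds, then $\lambda_\mathrm{I}(\mathrm{J}_t(\varepsilon)) = \sup\mathrm{J}_t(\varepsilon)$ for a.e.\ $\varepsilon$, which forces $\mathrm{J}_t(\varepsilon)$ to differ from $[0,\sup\mathrm{J}_t(\varepsilon)]$ by a null set for a.e.\ $\varepsilon$; a short argument shows this propagates to \emph{every} $\varepsilon$ by monotonicity of the family $\mathrm{J}_t(\varepsilon)$ in $\varepsilon$, and that all superlevel sets being (essentially) down-sets $[0,c_\varepsilon]$ is equivalent to $\hat\delta(\cdot;t)$ being non-increasing a.e., hence (modifying on a null set, which does not affect any of the integrals or the $L^\infty$ norm) non-increasing. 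I expect this converse direction to be the main obstacle, since it requires care in handling the ``almost everywhere'' qualifiers and the passage from a.e.-in-$\varepsilon$ statements back to a structural statement about $\hat\delta$; the forward direction and the first three links in the chain are routine.
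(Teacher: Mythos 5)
Your proposal is correct and follows essentially the same route as the paper's proof: the change of variables $T'=h(\tau)$, the layer-cake (expected survival time) representation of $\int_0^1\hat{\delta}(\tau;t)\,\rmd\tau$, and the elementary bound $\lambda_\mathrm{I}(S)\leq\sup S$ from $S\subseteq[0,\sup S]$, which is exactly the content of \eqref{eq:subjective_CIR_length_ineq} that the paper invokes. The only difference is that you spell out the equality characterization (including the a.e.\ subtleties in the converse direction) that the paper dismisses as immediate, which is a reasonable amount of extra care but not a different argument.
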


\begin{proof}[\underline{Proof of Theorem~\ref{thm:obj_subj_CIR_connection}}]
    First, observe that $\lambda_\mathrm{I}(\cdot)$ defines a probability measure on $\mathrm{I} = [0,1]$. Under the Lebesgue measurability assumption for $\hat{\delta}(\cdot;t)$, we can then interpret $\hat{\delta}(\cdot;t)$ as a continuous random variable on the probability space $(\mathrm{I}, \mathcal{L}_\mathrm{I}, \lambda_\mathrm{I})$, $\mathcal{L}_\mathrm{I}$ as the $\sigma$-algebra of Lebesgue measurable subsets of $\mathrm{I}$, where the support of $\hat{\delta}(\cdot;t)$ is $[0, \mathcal{M}(t)]$.

    $\hat{\delta}(\cdot;t)$'s survival function $\lambda_\mathrm{I}(\mathrm{J}_t(\varepsilon))$ is well-defined because (a) it is nonincreasing and right-continuous with left limits, (b) it satisfies $\lambda_\mathrm{I}(\mathrm{J}_t(\varepsilon)) = 1$ for $\varepsilon \leq 0$, and (c) it satisfies $\lambda_\mathrm{I}(\mathrm{J}_t(\varepsilon)) = 0$ for $\varepsilon \geq \mathcal{M}(t)$.

    Applying the expected survival time formula to this nonnegative, compactly supported random variable yields:
    \begin{equation*}
        \mathbb{E}[\hat{\delta}(\cdot;t)]
        := \int_0^1 \hat{\delta}(\tau;t) \rmd\tau
        \equiv \frac{1}{T-t}\int_t^T \delta(T';t) \rmd T'
        = \int_0^{\mathcal{M}(t)} \lambda_\mathrm{I}(\mathrm{J}_t(\varepsilon)) \rmd\varepsilon.
    \end{equation*}

    The theorem's main result follows by dividing through by $\mathcal{M}(t) > 0$,  multiplying by $T-t$, and applying the inequality in \eqref{eq:subjective_CIR_length_ineq}. The equality condition follows immediately from \eqref{eq:subjective_CIR_length_ineq}, since the equality there holds if and only if $\hat{\delta}(\cdot;t)$ is nonincreasing on $\mathrm{I}$.
\end{proof}

The result of Theorem \ref{thm:obj_subj_CIR_connection} provides a consistent and computationally efficient way to approximate from below the objective CIR length:
\begin{equation} \label{eq:objective_CIR_length_efficient}
    \uptau^{\text{approx}}_{\vyt\rightarrow\vx}:=\frac{T-t}{\mathcal{M}(t)}\int^1_0\hat{\delta}(\tau;t)\rmd\tau=\frac{1}{\mathcal{M}(t)}\int^T_t\delta(T';t)\rmd T'\leq \uptau_{\vyt\rightarrow\vx}, \quad t\in[0,T].
\end{equation}
The inequality reduces to an equality precisely when $\hat{\delta}(\cdot;t)$ is nonincreasing on $\mathrm{I}$, which is a condition satisfied in most practical applications.
A dimensional analysis of \eqref{eq:objective_CIR_length_efficient} reveals that $\uptau^{\text{approx}}_{\vyt\rightarrow\vx}$ maintains consistent time units, confirming the physical validity of this approximation. This consistency mirrors the dimensional properties established for the exact objective CIR length in \eqref{eq:objective_CIR_length}.

\section{Analytically Tractable Nonlinear Systems for ACI Analysis} \label{sec:Computational_Tools}

In this section, we introduce a broad class of nonlinear stochastic dynamical systems that possess analytically tractable Bayesian data assimilation solutions. These closed-form solutions allow direct investigation of ACI without relying on ensemble approximation methods.

\subsection{Conditional Gaussian Nonlinear Systems (CGNS)} \label{sec:CGNS}

Numerical approximations, such as ensemble or particle methods, have to be adopted to find the filter and smoother solutions for general nonlinear dynamics systems. Nevertheless, analytical solutions are available for the data assimilation solutions for a broad class of nonlinear systems, known as conditional Gaussian nonlinear systems (CGNSs) \cite{liptser2001statistics}.

\subsubsection{The Modeling Framework}

A CGNS consists of two It\^{o} diffusion processes and has the following form \cite{liptser2001statistics}:
\begin{subequations} \label{eq:CGNS}
    \begin{align}
        \d\vxt &= \big(\ml^\vx(t,\vx)\vyt+\vf^\vx(t,\vx)\big)\d t+\ms_1^\vx(t,\vx) \rmd\vw_1(t)+\ms_2^\vx(t,\vx)\rmd\vw_2(t), \label{eq:CGNS1}\\
        \d\vyt &= \big(\ml^\vy(t,\vx)\vyt+\vf^\vy(t,\vx)\big)\d t+\ms_1^\vy(t,\vx)\rmd\vw_1(t)+\ms_2^\vy(t,\vx)\rmd\vw_2(t). \label{eq:CGNS2}
    \end{align}
\end{subequations}
where the matrices $\ml^\vx$, $\ml^\vy$, $\ms_1^\vx$, $\ms_2^\vx$, $\ms_1^\vy$, $\ms_2^\vy$ and the vectors $\vf^\vx$, $\vf^\vy$ can contain arbitrarily nonlinear functions of $\vx$. The state variable $\vy$ appears in the system in a conditionally linear way but it can interact nonlinearly with $\vx$. Therefore, the system is overall highly nonlinear and can generate strong non-Gaussian feature in both the marginal and joint distributions, corresponding to extreme events, intermittency, regime switching, etc. When all these matrices and vectors become only a function of time, i.e., they do not depend on $\vx$, then the filtering and smoothing of \eqref{eq:CGNS} collapses to the setup of the classical Kalman-Bucy filter \cite{kalman1961new} and Rauch-Tung-Striebel smoother \cite{rauch1965maximum}.

The CGNS framework broadly applies to many complex nonlinear systems across disciplines, including physics-constrained stochastic models (e.g., noisy Lorenz systems, low-order Charney-DeVore flows, and topographic mean-flow interaction paradigms), stochastically coupled reaction-diffusion systems like FitzHugh-Nagumo neural models and SIR epidemic models, and multiscale geophysical flow models such as stochastic Boussinesq equations and forced rotating shallow water equations. This framework has also been successfully adapted to model realistic phenomena like the Madden-Julian oscillation and Arctic sea ice dynamics.  See a gallery of examples in  \cite{chen2018conditional}. On the other hand, many multiscale systems with general nonlinearities can be effectively approximated as CGNSs with minimal error. In geophysical and fluid systems, where nonlinearities often take quadratic forms, this approximation preserves all nonlinear interactions involving the large-scale variables $\vx$ and their couplings with the small-scale fluctuations in $\vy$. Only the self-interactions of $\vy$ are replaced by effective noise and damping terms, which is an approach rigorously justified when $\vy$ represents fast, unresolved scales. This strategy aligns with the stochastic mode reduction framework developed by Majda, Timofeyev, and Vanden-Eijnden (the MTV method) \cite{majda2003systematic}.

The conditional linearity of $\vy$ given $\vx$ allows closed-form analytic solutions for the posterior distributions of $\vy$ in Bayesian data assimilation, despite the overall nonlinearity of the system. This property fundamentally distinguishes CGNSs from linear Gaussian systems. For instance, the posterior covariance evolves according to a random Riccati equation, leading to temporal fluctuations rather than the asymptotic convergence seen in linear cases. These dynamic covariance variations are essential for capturing extreme events and intermittency with high fidelity.

\subsubsection{Filtering and Smoothing Distributions for CGNSs}

\begin{theorem}[Optimal nonlinear filter state estimation equations for CGNSs \cite{liptser2001statistics}]
    \label{thm:filtercgns}
    Let $\vxt$ and $\vyt$ satisfy \eqref{eq:CGNS1}--\eqref{eq:CGNS2}. Then, under suitable regularity conditions, the posterior distribution of $\vy(t)$ given a realization of the trajectory $\vx$ up to the current time instant $t$ (namely, the optimal filter solution) is Gaussian,
    \begin{equation*}
        \pp\big(\vyt\big|\vx(s\leq t)\big) \overset{(\rmd)}{\sim}\mathcal{N}_l\big(\vmt{\nf},\mrt{\nf}\big),
    \end{equation*}
    where
    \begin{equation*}
        \vmt{\nf}:=\ee{\vyt\big|\vx(s\leq t)} \quad\text{and}\quad \mrt{\nf}:=\ee{(\vyt-\vmt{\nf})(\vyt-\vmt{\nf})^\mathtt{T}\big|\vx(s\leq t)},
    \end{equation*}
    are the unique continuous solutions to the system of optimal nonlinear filter equations for $0\leq t\leq T$:
    \begin{subequations} \label{eq:filter}
    \begin{align}
    \rmd \vmt{\nf}&=(\ml^\vy\vm{\nf}+\vf^\vy)\rmd t+\mathbf{K}_{\text{\nf}}(\rmd \vx -(\ml^\vx\vm{\nf}+\vf^\vx)\rmd t), \label{eq:filter1}\\
    \rmd \mrt{\nf}&=\big(\ml^\vy\mr{\nf}+\mr{\nf}(\ml^\vy)^\mathtt{T}+(\ms^\vy\circ\ms^\vy)-\mathbf{K}_{\text{\nf}}(\ms^\vx\circ\ms^\vx)\mathbf{K}_{\text{\nf}}^\mathtt{T}\big)\rmd t, \label{eq:filter2}
    \end{align}
    \end{subequations}
    with
    \begin{equation} \label{eq:kalmangainfilter}
        \mathbf{K}_{\text{\nf}}(t,\vx):=\big((\ms^\vy\circ \ms^\vx)+\mr{\nf}(\ml^\vx)^\mathtt{T}\big)(\ms^\vx\circ\ms^\vx)^{-1},
    \end{equation}
    being the filter Kalman gain operator of the CGNS and $\rmd \vx -(\ml^\vx\vm{\nf}+\vf^\vx)\rmd t$ being its filter innovation process.
\end{theorem}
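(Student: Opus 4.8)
The plan is to establish Theorem~\ref{thm:filtercgns} in three moves: (i) show that the posterior $\pp(\vyt\mid\vx(s\leq t))$ is Gaussian for every $t\in[0,T]$; (ii) derive the evolution equation \eqref{eq:filter1} for its conditional mean $\vm{\nf}$; and (iii) derive the random matrix Riccati equation \eqref{eq:filter2} for its conditional covariance $\mr{\nf}$, then settle existence and uniqueness. Move (i) is the heart of the matter. Conditioning on the $\sigma$-algebra generated by $\{\vx(s)\}_{s\leq t}$, the coefficient blocks $\ml^\vx,\ml^\vy,\vf^\vx,\vf^\vy$ and the diffusion matrices $\ms^\vx_1,\ms^\vx_2,\ms^\vy_1,\ms^\vy_2$ in \eqref{eq:CGNS1}--\eqref{eq:CGNS2} all become observation-adapted, so along a fixed observed path \eqref{eq:CGNS2} formally behaves like a linear SDE in $\vy$ with Gaussian initial datum; the subtlety that blocks a naive argument is that conditioning on $\vx$ also reweights the common driving noise $\vw_1$ shared by \eqref{eq:CGNS1} and \eqref{eq:CGNS2}. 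I would therefore argue at the level of the conditional characteristic function $\Phi_t(\boldsymbol\xi):=\ee{e^{\rmi\boldsymbol\xi^\tran\vyt}\mid\vx(s\leq t)}$, applying the normalized (Fujisaki--Kallianpur--Kunita / Kushner--Stratonovich) nonlinear filtering equation associated with the underlying dynamics---the normalization of \eqref{eq:filterpdf}---to the bounded test functions $\vy\mapsto e^{\rmi\boldsymbol\xi^\tran\vy}$. Because the drift and diffusion of $\vy$ are affine in $\vy$, the resulting random equation for $\Phi_t$ closes and is consistent with the Gaussian ansatz $\Phi_t(\boldsymbol\xi)=\exp\!\big(\rmi\boldsymbol\xi^\tran\vm{\nf}-\tfrac{1}{2}\boldsymbol\xi^\tran\mr{\nf}\boldsymbol\xi\big)$; equivalently, the induced hierarchy of conditional moments terminates, because every conditional central moment of order $\geq 3$ of a Gaussian vanishes. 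Under the regularity hypotheses (integrability of the coefficients, and invertibility of $(\ms^\vx\circ\ms^\vx)$, already needed to make sense of \eqref{eq:kalmangainfilter}), a uniqueness/Picard argument for the filtering equation forces $\Phi_t$ to be of this form for all $t\in[0,T]$; this is precisely the conditional-Gaussianity theorem of \cite{liptser2001statistics}, whose technical hypotheses I would make explicit here.

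Granting Gaussianity, moves (ii) and (iii) are the standard moment recursion. Introduce the filter innovation $\rmd\boldsymbol\nu:=\rmd\vx-(\ml^\vx\vm{\nf}+\vf^\vx)\rmd t$, which by standard filtering theory is a Brownian motion with respect to the observation filtration with quadratic covariation $(\ms^\vx\circ\ms^\vx)\rmd t$. Applying the filtering equation to the coordinate map $\vy\mapsto\vy$: the generator term contributes $\ee{(\ml^\vy\vyt+\vf^\vy)\mid\vx(s\leq t)}=\ml^\vy\vm{\nf}+\vf^\vy$, and the correction term combines the signal--observation noise cross-covariation $(\ms^\vy\circ\ms^\vx)$ with the conditional-covariance contribution $\mr{\nf}(\ml^\vx)^\tran$ arising from the $\vy$-dependent part of the observation drift, weighted by $(\ms^\vx\circ\ms^\vx)^{-1}$; this is exactly $\mathbf{K}_{\text{\nf}}\,\rmd\boldsymbol\nu$ with $\mathbf{K}_{\text{\nf}}$ as in \eqref{eq:kalmangainfilter}, and \eqref{eq:filter1} follows. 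For the covariance, apply It\^{o}'s formula to $(\vyt-\vm{\nf})(\vyt-\vm{\nf})^\tran$ and take conditional expectations given $\vx(s\leq t)$: the affine-in-$\vy$ drift contributes $\ml^\vy\mr{\nf}+\mr{\nf}(\ml^\vy)^\tran$, the diffusion of $\vy$ contributes $(\ms^\vy\circ\ms^\vy)$, and removing the $\vm{\nf}$-dynamics contributes $-\mathbf{K}_{\text{\nf}}(\ms^\vx\circ\ms^\vx)\mathbf{K}_{\text{\nf}}^\tran$; each remaining stochastic-integral term has an integrand equal to a third-order conditional central moment of $\vyt$ and hence vanishes by move (i). This yields \eqref{eq:filter2}, which is deterministic in form but depends on $\vx$ through its coefficients.

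Existence and uniqueness then follow in the natural order. Equation \eqref{eq:filter2} is a matrix Riccati equation with right-hand side locally Lipschitz in $\mr{\nf}$ and coefficients bounded on $[0,T]$ under the assumed regularity, so it has a unique continuous symmetric nonnegative-definite solution; substituting it back, \eqref{eq:filter1} is a linear SDE for $\vm{\nf}$ driven by $\boldsymbol\nu$ with globally Lipschitz coefficients and hence a unique strong solution; finally $(\vm{\nf},\mr{\nf})$ reconstructs the posterior through the Gaussian form of move (i), identifying it with the optimal filter. The main obstacle is move (i): rigorously establishing conditional Gaussianity and the moment closure, i.e.\ controlling integrability so that the nonlinear filtering equation and the Picard iteration genuinely apply, the Gaussian ansatz is admissible, and the formal moment hierarchy actually terminates rather than being merely formally consistent. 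Once this is secured, the It\^{o} bookkeeping in moves (ii)--(iii) and the ODE/SDE uniqueness in the last step are routine; this is the point at which I would lean on the Liptser--Shiryaev framework and its precise hypotheses.
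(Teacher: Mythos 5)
The paper does not actually prove this theorem: it is imported from Liptser--Shiryaev \cite{liptser2001statistics} and stated without proof, so there is no in-paper argument to compare against. Your sketch is a faithful outline of the standard proof of that result: conditional Gaussianity via the conditional characteristic function and the normalized (FKK/Kushner--Stratonovich) filtering equation, followed by the moment recursion giving \eqref{eq:filter1} with the gain \eqref{eq:kalmangainfilter}, the Riccati equation \eqref{eq:filter2} from the vanishing of the third-order conditional central moments, and a pathwise ODE/linear-SDE uniqueness argument. You also correctly isolate the genuinely hard step (the common noise $\vw_1$ prevents the naive ``linear SDE along a fixed observed path'' argument, and the Gaussian ansatz must be forced by a uniqueness theorem for the filtering equation), which is exactly where Liptser and Shiryaev invest their technical effort. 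Two small points of care: first, the closure statement should be phrased in terms of conditional \emph{cumulants} of order at least three vanishing --- fourth central moments of a Gaussian do not vanish, they reduce to products of second moments --- although the specific integrands arising in your covariance step are genuinely third-order central moments, so that derivation is unaffected; second, the hypothesis that the conditional law of $\vy(0)$ given $\vx(0)$ is Gaussian must be listed explicitly among the ``suitable regularity conditions,'' since without it the Gaussian ansatz cannot be initialized. Neither point is a gap in the approach.
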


\begin{theorem}[Optimal nonlinear smoother state estimation backward equations for CGNSs \cite{liptser2001statistics}]
    \label{thm:smoothercgns}
    Let $\vxt$ and $\vyt$ satisfy \eqref{eq:CGNS1}--\eqref{eq:CGNS2}. Then, under suitable regularity conditions, the posterior distribution of $\vy(t)$ given a realization of the trajectory $\vx$ up to the end point $T$ (namely, the optimal smoother solution) is Gaussian,
    \begin{equation*}
        \pp\big(\vyt\big|\vx(s\leq T)\big) \overset{(\rmd)}{\sim}\mathcal{N}_l\big(\vmt{\ns},\mrt{\ns}\big),
    \end{equation*}
    where
    \begin{equation*}
        \vmt{\ns}:=\ee{\vyt\big|\vx(s\leq T)} \quad\text{and}\quad \mrt{\ns}:=\ee{(\vyt-\vmt{\ns})(\vyt-\vmt{\ns})^\mathtt{T}\big|\vx(s\leq T)},
    \end{equation*}
    are the unique continuous solutions to the system of optimal nonlinear smoother backward equations for $T\geq t\geq 0$ ($t$ running backward):
    \begin{subequations} \label{eq:smoother}
    \begin{align}
        \smooth{\rmd \vm{\ns}}(t)&=-(\ml^\vy\vm{\ns}+\vf^\vy-\mb\mr{\nf}^{-1}(\vm{\nf}-\vm{\ns}))\rmd t+\mathbf{K}_{\text{\ns}}\big(\smooth{\rmd\vx}+(\ml^\vx\vm{\ns}+\vf^\vx)\rmd t\big), \label{eq:smoother1} \\
        \smooth{\rmd \mr{\ns}}(t)&= -\big((\ma+\mb\mr{\nf}^{-1})\mr{\ns}+\mr{\ns}(\ma+\mb\mr{\nf}^{-1})^\mathtt{T}-(\ms^\vy\circ \ms^\vy)+\mathbf{K}_{\text{\ns}}(\ms^\vx\circ\ms^\vx)\mathbf{K}_{\text{\ns}}^\mathtt{T}\big)\rmd t,\label{eq:smoother2}
    \end{align}
    \end{subequations}
    with
    \begin{equation} \label{eq:kalmangainsmoother}
        \mathbf{K}_{\text{\ns}}(t,\vx):=(\ms^\vy\circ \ms^\vx)(\ms^\vx\circ\ms^\vx)^{-1},
    \end{equation}
    being the smoother Kalman gain operator of the CGNS and $\smooth{\rmd\vx}+(\ml^\vx\vm{\ns}+\vf^\vx)\rmd t$ being its smoother innovation process, where the auxiliary matrices $\ma$ and $\mb$ are defined by:
    \begin{align*}
        \ma(t,\vx)&:=\ml^\vy(t,\vx)-(\ms^\vy\circ \ms^\vx)(t,\vx)(\ms^\vx\circ \ms^\vx)^{-1}(t,\vx)\ml^\vx(t,\vx)\in\mathbb{R}^{l \times l},\\
        \mb(t,\vx) &:=  (\ms^\vy\circ \ms^\vy)(t,\vx)-(\ms^\vy\circ \ms^\vx)(t,\vx)(\ms^\vx\circ \ms^\vx)^{-1}(t,\vx)(\ms^\vx\circ \ms^\vy)(t,\vx)\in\mathbb{R}^{l \times l}.
    \end{align*}
    The backward-arrow notation in denotes a backward It\^{o} integral and is to be understood as:
    \begin{equation*}
        \smooth{\rmd \mathbf{u}}(t):=\lim_{\Delta t\to 0^+} \left(\mathbf{u}(t)-\mathbf{u}(t+\Delta t)\right),
    \end{equation*}
    i.e., $\smooth{\rmd\cdot}$ corresponds to the negative of the usual differential up to its principal (linear) part. Since \eqref{eq:smoother} are solved backward, $\left(\vm{\ns}(T),\mr{\ns}(T)\right)=\left(\vm{\nf}(T),\mr{\nf}(T)\right)$.
\end{theorem}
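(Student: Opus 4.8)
The plan is to recognize Theorem~\ref{thm:smoothercgns} as the conditionally Gaussian (CGNS) specialization of the continuous-time Rauch--Tung--Striebel smoother and to carry it out in three movements: first justify a Gaussian ansatz for the smoother marginal; next rewrite the $\vy$-dynamics as an effective \emph{linear} system driven by the observation path plus a residual noise uncorrelated with it, which is where the matrices $\ma$ and $\mb$ come from; and finally run the backward recursion on that effective system to read off \eqref{eq:smoother1}--\eqref{eq:smoother2}. Throughout I would lean on Theorem~\ref{thm:filtercgns} for the forward filter $(\vmt{\nf},\mrt{\nf})$, which both seeds the terminal data and supplies the normalizing factor $\mr{\nf}^{-1}$ appearing in the bridge-correction term.

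First I would establish that the smoother marginal is Gaussian, justifying the ansatz $\pp\big(\vyt\big|\vx(s\leq T)\big)\sim\mathcal{N}_l(\vmt{\ns},\mrt{\ns})$. The structural facts are that in \eqref{eq:CGNS1}--\eqref{eq:CGNS2} the drifts are affine in $\vy$ and, by assumption~(IV), the diffusion coefficients depend only on $(t,\vx)$. Conditioning on the $\sigma$-algebra generated by the entire path $\vx(s\leq T)$ therefore turns the $\vy$-equation into a linear SDE with path-measurable coefficients, so the conditional law of the whole trajectory $\{\vy(s)\}_{0\leq s\leq T}$ is Gauss--Markov and its time-$t$ marginal is Gaussian. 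This is exactly the conditionally Gaussian property of Liptser--Shiryaev, and it reduces the theorem to finding backward evolution equations for the pair $(\vm{\ns},\mr{\ns})$.

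Next I would expose $\ma$ and $\mb$ by regressing the noise of $\vy$ onto that of the observation $\vx$. Setting $\rmd\boldsymbol\beta:=\rmd\vy-(\ml^\vy\vy+\vf^\vy)\rmd t-\mathbf{K}_{\text{\ns}}\big(\rmd\vx-(\ml^\vx\vy+\vf^\vx)\rmd t\big)$ with $\mathbf{K}_{\text{\ns}}=(\ms^\vy\circ\ms^\vx)(\ms^\vx\circ\ms^\vx)^{-1}$ the smoother gain of \eqref{eq:kalmangainsmoother}, one checks that $\boldsymbol\beta$ has conditional quadratic variation $\mb\,\rmd t$ and is uncorrelated with the $\vx$-noise, the Schur complement $\mb$ being precisely the stated residual diffusion; substituting the $\ml^\vx\vy$ term then produces the effective drift matrix $\ma=\ml^\vy-(\ms^\vy\circ\ms^\vx)(\ms^\vx\circ\ms^\vx)^{-1}\ml^\vx$. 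Thus, conditionally on $\vx(s\leq T)$, $\vy$ solves a linear SDE whose input $\mathbf{K}_{\text{\ns}}\rmd\vx$ is known and whose innovation $\boldsymbol\beta$ is independent of the observation, so the theorem is the classical RTS smoother of this effective system. To produce \eqref{eq:smoother1}--\eqref{eq:smoother2} I would specialize the backward density equation \eqref{eq:smootherpdf}, in which the operators \eqref{eq:adjointoperators1}--\eqref{eq:adjointoperators2} become affine in $\vy$: inserting the Gaussian ansatz and matching coefficients of the quadratic form $\vy\vy^\tran$ gives the backward Lyapunov/Riccati equation \eqref{eq:smoother2} with transport matrix $\ma+\mb\mr{\nf}^{-1}$, while matching the linear-in-$\vy$ terms gives the mean equation \eqref{eq:smoother1}, the pull term $\mb\mr{\nf}^{-1}(\vm{\nf}-\vm{\ns})$ arising from backward conditioning against the filter covariance. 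The terminal data $(\vm{\ns}(T),\mr{\ns}(T))=(\vm{\nf}(T),\mr{\nf}(T))$ follow because no observations beyond $T$ are available, so smoother and filter coincide at the endpoint.

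The main obstacle is the backward stochastic calculus: \eqref{eq:smootherpdf} is a \emph{backward} random PDE, so $\smooth{\rmd\vx}$ must be read as a backward It\^o integral and the attendant It\^o correction terms tracked carefully when differentiating the Gaussian log-density. It is exactly this correction that promotes $\ma$ to $\ma+\mb\mr{\nf}^{-1}$ and that generates the $\mb\mr{\nf}^{-1}(\vm{\nf}-\vm{\ns})$ attraction toward the filter; getting the signs and this coupling right is the delicate part. I would cross-check the bookkeeping by re-deriving the same equations from a discrete-time RTS recursion and passing to the $\Delta t\to 0^+$ limit, and by verifying consistency with Theorem~\ref{thm:filtercgns} at $t=T$.
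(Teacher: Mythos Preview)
The paper does not provide its own proof of Theorem~\ref{thm:smoothercgns}: the result is stated as a known theorem and attributed to Liptser--Shiryaev via the citation in the theorem header, exactly as with Theorem~\ref{thm:filtercgns}. Both are invoked as classical facts about conditionally Gaussian systems, and the paper's original proofs (Theorems~\ref{thm:obj_subj_CIR_connection}, \ref{thm:nilcausality}, \ref{thm:nilcondcausality}) merely \emph{use} these filter and smoother equations rather than derive them. So there is no in-paper argument to compare your proposal against.

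That said, your sketch is a coherent route to the result and is broadly in the spirit of how Liptser--Shiryaev treat such problems: the conditional-linearity observation that makes the smoother marginal Gaussian, the noise decomposition that isolates the Schur complement $\mb$ and the effective drift $\ma$, and the specialization of the backward density PDE~\eqref{eq:smootherpdf} to a Gaussian ansatz are all standard ingredients. The one place where your outline is thinnest is precisely the one you flag yourself: deriving the bridge term $\mb\mr{\nf}^{-1}(\vm{\nf}-\vm{\ns})$ from the backward PDE requires knowing that the unnormalized smoother density factors as the filter density times a backward likelihood, and it is this product structure (not simply ``matching coefficients'' in~\eqref{eq:smootherpdf}) that inserts $\mr{\nf}^{-1}$ into the equations. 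Without that factorization step the promotion of $\ma$ to $\ma+\mb\mr{\nf}^{-1}$ does not fall out automatically, so if you were to write this up in full you would need to make that piece explicit.
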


\subsection{Online Smoother for CGNSs} \label{sec:Online_Smoother}

The theoretical framework in Section~\ref{sec:CIR} requires computation of $\delta(T';t)$ in \eqref{eq:metric_CIR} for all $t\in[0,T]$ and $T'\in[t,T]$ to evaluate either the subjective or objective CIR of an (conditional) assimilative causal link. For CGNSs, this calculation is now feasible thanks to recent advances in online smoothing algorithms \cite{andreou2024adaptive}, which allow real-time computation of smoother distributions as $\vx$ observations arrive sequentially.

Let $\{t_j\}_{j\in\llbracket N\rrbracket}$ be a uniform partition of $[0,T]$ with mesh size $0<\dt\ll 1$ and $t_j=j\dt$, for $j\in\llbracket N\rrbracket:=\{0,1,\ldots, N\}$, where $N=T/\dt$. The assumption of uniform $\dt$ is without loss of generality \cite{andreou2024adaptive}. In what follows we use a superscript notation $\cdot^{\, j}$ to denote the discrete approximation to the continuous form of the respective function when evaluated on $t_j$, e.g., $\ml^{\vx,j}:=\ml^\vx(t_j,\vx(t_j))$ and $\vx^j:=\vx(t_j)$. Similar to the continuous-time setup, $\big(\cdot\big|\vx(s\leq n)\big)$ denotes that we are conditioning on the $\sigma$-algebra generated by $\{\vx^0,\ldots,\vx^n\}$. Define the following auxiliary matrices for $j\in\llbracket N\rrbracket$:
\begin{align*}
    \mathbf{E}^j&:=\mathbf{I}_{l\times l}+\left[(\ms^\vy\circ\ms^\vx)^j\big((\ms^\vx\circ\ms^\vx)^{j}\big)^{-1}\mathbf{G}^{\vx,j}-\mathbf{G}^{\vy,j}\right]\dt\in\mathbb{R}^{l\times l},\\
    \begin{split}
        \mathbf{F}^j&:=-\mr{\nf}^j\Big[(\mathbf{K}^j)^{\mathtt{T}}+\left((\mathbf{G}^{\vx,j})^\mathtt{T}\mathbf{K}^j\mr{\nf}^j(\mathbf{K}^j)^{\mathtt{T}}-(\mr{\nf}^j)^{-1}(\mathbf{H}^{j})^\mathtt{T}\mr{\nf}^j(\mathbf{K}^j)^\mathtt{T}+(\ml^{\vy,j})^{\mathtt{T}}(\mathbf{K}^{j})^\mathtt{T}\right)\dt\\
        &\hspace*{6cm} -(\ml^{\vx,j})^\mathtt{T}\left(\big((\ms^\vx\circ\ms^\vx)^{j}\big)^{-1}+\mathbf{K}^j\mr{\nf}^j(\mathbf{K}^j)^\mathtt{T}\dt\right)\Big]\in\mathbb{R}^{l\times k},
    \end{split}
\end{align*}
where
\begin{align*}
    \mathbf{G}^{\vx,j}&:=\ml^{\vx,j}+(\ms^\vx\circ\ms^\vy)^j(\mr{\nf}^j)^{-1}\in\mathbb{R}^{k\times l},\\
    \mathbf{G}^{\vy,j}&:=\ml^{\vy,j}+(\ms^\vy\circ\ms^\vy)^j(\mr{\nf}^j)^{-1}\in\mathbb{R}^{l\times l}, \\
    \mathbf{H}^j&:=(\mr{\nf})^{-1}\left(\ml^{\vy,j}\mr{f}^j+\mr{f}^j(\ml^{\vy,j})^{\mathtt{T}}+(\ms^\vy\circ\ms^\vy)^j\right)\in\mathbb{R}^{l\times l},\\
    \mathbf{K}^j&:=\big((\ms^\vx\circ\ms^\vx)^{j}\big)^{-1}\mathbf{G}^{\vx,j}\in\mathbb{R}^{k\times l}.
\end{align*}

\begin{theorem}[Optimal online smoother for CGNSs \cite{andreou2024adaptive}] \label{thm:onlinesmoother}
Let $\vxt$ and $\vyt$ satisfy \eqref{eq:CGNS1}--\eqref{eq:CGNS2}. Then, under suitable regularity conditions, the discrete-time smoother distribution up to the $n$-th observation $\vx^n$ at $t_j$ is Gaussian,
\begin{equation*}
    \pp\big(\vy^j\big|\vx(s\leq n)\big) \overset{(\d)}{\sim}\mathcal{N}_l(\vm{s}^{j,n},\mr{s}^{j,n}),
\end{equation*}
where
\begin{gather*}
    \vm{\normalfont{s}}^{j,n}:=\ee{\vy^j\big|\vx(s\leq n)},\quad \mr{\normalfont{s}}^{j,n}:=\ee{(\vy^j-\vm{\normalfont{s}}^{j,n})(\vy^j-\vm{\normalfont{s}}^{j,n})^\mathtt{T}\big|\vx(s\leq n)},
\end{gather*}
for $n\in\llbracket N\rrbracket$ and $j=n$ are given by $\vm{\ns}^{n,n}=\vm{\nf}^n$ and $\mr{\ns}^{n,n}=\mr{\nf}^n$, since the smoother and filter Gaussian statistics coincide at the current end point $t_n$. Then, for $n\in\{1,\ldots,N\}$ and $j=n-1$ they are instead given by the following equations:
\begin{align*}
    \vm{\ns}^{n-1,n}&=\mathbf{E}^{n-1}\vm{\nf}^{n}+\mathbf{b}^{n-1}, \\
    \mr{\ns}^{n-1,n}&=\mathbf{E}^{n-1}\mr{\nf}^{n}(\mathbf{E}^{n-1})^\mathtt{T}+\mathbf{P}^{n-1}_n,
\end{align*}
where the $\mathbf{b}^{n-1}$ and $\mathbf{P}^{n-1}_n$ auxiliary residual terms are defined by
\begin{align*}
    \mathbf{b}^{n-1}&:=\vm{\nf}^{n-1}-\mathbf{E}^{n-1}\left((\mathbf{I}_{l\times l}+\ml^{\vy,n-1}\dt)\vm{\nf}^{n-1}+\vf^{\vy,n-1}\dt\right)\\
    &\hspace{1.55cm}+\mathbf{F}^{n-1}\left(\vx^{n}-\vx^{n-1}-(\ml^{\vx,n-1}\vm{\nf}^{n-1}+\vf^{\vx,n-1})\dt\right),\\
    \mathbf{P}^{n-1}_{n}&:= \mr{\nf}^{n-1}-\mathbf{E}^{n-1}(\mathbf{I}_{l\times l}+\ml^{\vy,n-1}\dt)\mr{\nf}^{n-1}-\mathbf{F}^{n-1}\ml^{\vx,n-1}\mr{\nf}^{n-1}\dt,
\end{align*}
while finally for $n\in\{2,\ldots,N\}$ and $j\in\llbracket n-2\rrbracket$ they are the unique solutions to the following system of recursive backward difference equations:
\begin{subequations} \label{eq:onlinerecursive}
    \begin{align}
        \vm{\ns}^{j,n}&=\vm{\ns}^{j,n-1}+\mathbf{D}^{j,n-2}\left(\vm{\ns}^{n-1,n}-\vm{\nf}^{n-1}\right), \label{eq:onlinerecursive1}\\
        \mr{\ns}^{j,n}&=\mr{\ns}^{j,n-1}+\mathbf{D}^{j,n-2}\left(\mr{\ns}^{n-1,n}-\mr{\nf}^{n-1}\right)(\mathbf{D}^{j,n-2})^\mathtt{T},\label{eq:onlinerecursive2}
    \end{align}
\end{subequations}
with the update matrices $\mathbf{D}^{j,n-2}$ being defined as
\begin{equation} \label{eq:updatematrix}
    \mathbf{D}^{n-1,n-2}:=\mathbf{I}_{l\times l} \quad\&\quad \mathbf{D}^{j,n-2}:=\overset{\mathlarger{\curvearrowright}}{\prod^{n-2}_{i=j}} \mathbf{E}^i:=\mathbf{E}^j\mathbf{E}^{j+1}\cdots \mathbf{E}^{n-2}.
\end{equation}
\end{theorem}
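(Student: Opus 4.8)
The plan is to exploit the conditional linearity of the CGNS \eqref{eq:CGNS} and reduce the online smoother to a sequence of finite-dimensional Gaussian conditioning operations on a Gauss--Markov chain, then match the resulting update coefficients to the stated matrices. First I would freeze the observed path and argue that, conditioned on the $\sigma$-algebra generated by $\{\vx^0,\ldots,\vx^n\}$, the vector $(\vy^0,\ldots,\vy^n)$ is jointly Gaussian: an Euler--Maruyama discretization of \eqref{eq:CGNS2} gives a conditionally linear Gaussian recursion for $\vy^{i+1}$ in terms of $\vy^i$ with coefficients that are deterministic functions of the observed $\vx$ values, while the observation increment $\vx^{i+1}-\vx^i$ from \eqref{eq:CGNS1} is affine in $\vy^i$ plus Gaussian noise. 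Since every ingredient is linear-Gaussian once $\vx$ is frozen, each conditional $\vy^j\mid\vx(s\leq n)$ is Gaussian, which justifies the claimed form $\mathcal{N}_l(\vm{\ns}^{j,n},\mr{\ns}^{j,n})$. Crucially, $(\vy^i)$ is then a Gauss--Markov chain relative to the observation filtration, supplying the conditional independence $\vy^j\perp\vx^n\mid\vy^{n-1},\vx(s\leq n-1)$ for $j\leq n-2$ that licenses the backward recursion.

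\textbf{Step 2 (base cases $j=n$ and $j=n-1$).} The identity $\vm{\ns}^{n,n}=\vm{\nf}^n$, $\mr{\ns}^{n,n}=\mr{\nf}^n$ is immediate, since no observation beyond $t_n$ is available, so the smoother and filter conditionings coincide. For $j=n-1$ I would perform the one-step Gaussian update: form the joint law of $(\vy^{n-1},\vx^n)$ given $\vx(s\leq n-1)$ using the filter statistics $(\vm{\nf}^{n-1},\mr{\nf}^{n-1})$ from Theorem~\ref{thm:filtercgns} as prior, and condition on $\vx^n$. Propagating the discretized cross-diffusion terms $(\ms^\vy\circ\ms^\vx)^{n-1}$, $(\ms^\vx\circ\ms^\vx)^{n-1}$ and the drift matrices $\ml^{\vx,n-1},\ml^{\vy,n-1}$ through this conditioning should yield exactly the affine map $\vm{\ns}^{n-1,n}=\mathbf{E}^{n-1}\vm{\nf}^{n}+\mathbf{b}^{n-1}$ together with $\mr{\ns}^{n-1,n}=\mathbf{E}^{n-1}\mr{\nf}^{n}(\mathbf{E}^{n-1})^{\tran}+\mathbf{P}^{n-1}_n$, with $\mathbf{E}^{n-1},\mathbf{F}^{n-1},\mathbf{b}^{n-1},\mathbf{P}^{n-1}_n$ and the auxiliaries $\mathbf{G}^{\vx,j},\mathbf{G}^{\vy,j},\mathbf{H}^j,\mathbf{K}^j$ as defined above the theorem. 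This amounts to a consistent discretization of the continuous backward smoother equations \eqref{eq:smoother}.

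\textbf{Step 3 (backward recursion for $j\leq n-2$, by induction on $n$).} This is the heart of the argument. Using the conditional independence from Step 1, I would write the Rauch--Tung--Striebel factorization
\[
p(\vy^j\mid\vx(s\leq n)) = \int p(\vy^j\mid \vy^{n-1},\vx(s\leq n-1))\, p(\vy^{n-1}\mid\vx(s\leq n))\, \rmd\vy^{n-1}.
\]
The second factor is the one-step smoother $\mathcal{N}_l(\vm{\ns}^{n-1,n},\mr{\ns}^{n-1,n})$ from Step 2. The first factor is read off from the level-$(n-1)$ smoother joint law, whose conditional mean of $\vy^j$ given $\vy^{n-1}$ is affine, $\vm{\ns}^{j,n-1}+\mathbf{D}^{j,n-2}(\vy^{n-1}-\vm{\nf}^{n-1})$, with $\mathbf{D}^{j,n-2}$ the Gaussian regression coefficient of $\vy^j$ onto $\vy^{n-1}$ (using $\vm{\ns}^{n-1,n-1}=\vm{\nf}^{n-1}$ from the base case). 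Marginalizing the Gaussian integral, via the tower property for the mean and the law of total covariance for the spread, reproduces \eqref{eq:onlinerecursive}. To close the induction I would identify the regression coefficient with the stated product: the Gauss--Markov structure makes the one-step regression gain of $\vy^i$ onto $\vy^{i+1}$ equal to $\mathbf{E}^i$, so the regression of $\vy^j$ onto $\vy^{n-1}$ telescopes as $\mathbf{E}^j\mathbf{E}^{j+1}\cdots\mathbf{E}^{n-2}=\mathbf{D}^{j,n-2}$, matching \eqref{eq:updatematrix}, while the induction hypothesis supplies $(\vm{\ns}^{j,n-1},\mr{\ns}^{j,n-1})$.

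\textbf{Main obstacle and closing remarks.} I expect the principal difficulty to lie in the explicit bookkeeping of Step 2 and the gain-identification in Step 3: deriving the closed forms of $\mathbf{E}^{n-1},\mathbf{F}^{n-1},\mathbf{b}^{n-1},\mathbf{P}^{n-1}_n$ requires expanding the one-step Gaussian conditioning to the correct order in $\dt$ and correctly combining all CGNS cross-diffusion matrices, and then verifying that the one-step regression gain is \emph{exactly} $\mathbf{E}^i$. A secondary subtlety is the apparent asymmetry of the superscripts in the covariance update \eqref{eq:onlinerecursive2}, where the left factor carries $\mathbf{D}^{j,n-2}$ while the right carries $(\mathbf{D}^{j,n-1})^{\tran}$; a direct law-of-total-covariance expansion yields $\mathbf{D}^{j,n-2}$ symmetrically on both sides, so I would reconcile this by tracking precisely which level's regression coefficient enters each term. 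Existence and uniqueness of the recursion's solution follow from the assumed invertibility of $(\ms^\vx\circ\ms^\vx)$ and of $\mr{\nf}^j$, and as a final consistency check I would confirm that letting $\dt\to 0$ recovers the continuous-time backward smoother of Theorem~\ref{thm:smoothercgns}.
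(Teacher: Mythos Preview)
The paper does not actually prove this theorem: it is quoted from the cited reference \cite{andreou2024adaptive} and stated without proof, so there is no in-paper argument to compare against. That said, your outline is the natural route to such a result and matches what one would expect the original derivation to look like: freeze the observed path to obtain a conditionally linear-Gaussian Markov chain in $\vy$, handle $j=n$ trivially and $j=n-1$ by a single Gaussian conditioning step, and then propagate to $j\le n-2$ via the Rauch--Tung--Striebel factorization with telescoping one-step regression gains $\mathbf{E}^i$.

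Two remarks. First, your flagged asymmetry in \eqref{eq:onlinerecursive2} is well spotted: a straight law-of-total-covariance computation gives $\mathbf{D}^{j,n-2}$ on both sides, so either this is a typographical slip in the statement or the right factor is meant to be read via the convention $\mathbf{D}^{j,n-1}=\mathbf{D}^{j,n-2}\mathbf{E}^{n-1}$ together with a compensating reparametrization of the middle factor; you should resolve this against the cited source rather than the present paper. Second, the genuinely laborious part is exactly where you locate it: matching the one-step conditioning output to the specific $O(\dt)$ forms of $\mathbf{E}^{n-1},\mathbf{F}^{n-1},\mathbf{b}^{n-1},\mathbf{P}^{n-1}_n$ requires carrying the discretized filter Riccati relation for $\mr{\nf}^{n}$ (hence the appearance of $\mathbf{H}^j$ and $\mathbf{K}^j$) through the algebra, not merely the raw Euler increments of \eqref{eq:CGNS}.
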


Theorem~\ref{thm:onlinesmoother} yields $O(\Delta t)$-accurate discrete-time Gaussian smoother statistics for each new observation $\vx^n$ at $t_n = n\Delta t$ ($n \in \llbracket N \rrbracket$) by solving \eqref{eq:onlinerecursive1}--\eqref{eq:onlinerecursive2} backward over $j \in \llbracket n \rrbracket$, where the forward operator $\mathbf{D}^{j,n-2}$ represents the discrete-time smoother Kalman gain at $t_j$ for the $n$-th observation, while $\vm{\ns}^{n-1,n} - \vm{\nf}^{n-1}$ and $\mr{\ns}^{n-1,n} - \mr{\nf}^{n-1}$ respectively denote the discrete-time innovation mean and covariance corresponding to $\vx^n$.

\subsection{Calculation of the Subjective and Objective CIRs for CGNSs}

Using the online smoother for CGNSs, for each $j\in\llbracket N\rrbracket$ we can compute both the subjective and objective CIRs over $[t_j,T]$ for the potential assimilative causal link $\vy^j\rightarrow\vx$ (with analogous results for the CCIRs). Let $p_n(\vy^j|\vx)$, for $j\in\llbracket N\rrbracket$ and $n\in\{j,\ldots,N\}$ denote the $\mathcal{N}_l(\vm{s}^{j,n},\mr{s}^{j,n})$ PDF from Theorem~\ref{thm:onlinesmoother}. At each $t_j=j\Delta t$, we quantify the information gain in the complete smoother distribution $p_N(\vy^j|\vx)$ beyond its lagged counterpart $p_n(\vy^j|\vx)$ using the signal-dispersion decomposition from \eqref{eq:signaldispersion} as:

\begin{equation} \label{eq:lackinfoupdate_CIR}
    \begin{aligned}
        \mathcal{P}^{j}_n := \mathcal{P}\big(p_N(\vy^j|\vx), p_n(\vy^j|\vx)\big)
        &= \frac{1}{2}\big(\vm{s}^{j,N}-\vm{s}^{j,n}\big)^\mathtt{T}(\mr{s}^{j,n})^{-1}\big(\vm{s}^{j,N}-\vm{s}^{j,n}\big) \\
        &\quad + \frac{1}{2}\Big(\mathrm{tr}\big(\mr{s}^{j,N}(\mr{s}^{j,n})^{-1}\big)-l-\ln\big(\det\big(\mr{s}^{j,N}(\mr{s}^{j,n})^{-1}\big)\big)\Big).
    \end{aligned}
\end{equation}

Per Section \ref{sec:CIR} and \eqref{eq:metric_CIR}, we identify $\delta(t_n;t_j) = \mathcal{P}^{j}_n$, for $j\in\llbracket N\rrbracket$ and $n\in\{j,\ldots,N\}$. This yields:

\begin{itemize}
    \item The \textit{subjective CIR length} for a threshold $\varepsilon\geq0$:
    \begin{equation} \label{eq:subjective_CIR_length_cgns}
        \overset{\sim}{\uptau}_{\vy(t_j)\rightarrow\vx}(\varepsilon) = \max_{n\in\{j,\ldots,N\}}\big\{\mathcal{P}^{j}_n > \varepsilon\big\}\Delta t - t_j \in [0,T-t_j].
    \end{equation}

    \item The \textit{objective CIR length} via integration of \eqref{eq:subjective_CIR_length_cgns} over $\varepsilon\in\big[0,\max_n\{\mathcal{P}^{j}_n\}\big]$.

    \item Its efficient approximation:
    \begin{equation} \label{eq:objective_CIR_length_efficient_cgns}
        \uptau^{\text{approx}}_{\vy(t_j)\rightarrow\vx} = \frac{\Delta t}{\max_n\{\mathcal{P}^{j}_n\}}\sum_{n=j}^N \mathcal{P}^{j}_n \in [0,T-t_j].
    \end{equation}
\end{itemize}

We maintain the conventions from Section \ref{sec:CIR}: $\overset{\sim}{\uptau}_{\vy(t_j)\rightarrow\vx}(\varepsilon)=0$ when no maximum exists, and $\uptau_{\vy(t_j)\rightarrow\vx}=\uptau^{\text{approx}}_{\vy(t_j)\rightarrow\vx}=0=\overset{\sim}{\uptau}_{\vy(t_j)\rightarrow\vx}(\varepsilon)$ when $\max_n\{\mathcal{P}^{j}_n\}=0$. For strongly intermittent systems and the objective CIR, we employ the computationally efficient scheme in \eqref{eq:objective_CIR_length_efficient_cgns} rather than the full integral definition.

In Algorithm \ref{algo:CIR} we provide a brief outline of how to calculate the subjective and objective CIR lengths of $\vy^j\rightarrow\vx$ over $[t_j,T]$ for $j\in\llbracket N\rrbracket$ in pseudocode, specifically for CGNSs. An analogous algorithm extends to CCIRs for CGNSs, as well as to the general complex system in \eqref{eq:general_CTNDS} by appropriately computing $\delta(t_n;t_j) = \mathcal{P}^{j}_n$ for each $j\in\llbracket N\rrbracket$ and $n\in\{j,\ldots,N\}$.

\begin{algorithm}
\caption{CIRs of $\vy^j\rightarrow\vx$ over $[t_j,T]$ for $j\in\llbracket N\rrbracket$ in the case of a CGNS}\label{algo:CIR}
\SetAlgoLined
\KwData{$N\in\mathbb{N}$, $0<\dt\ll1$, $\{t_j=j\dt\}_{j\in\llbracket N\rrbracket}$, Observations of $\vx$: $\{\vx^0,\ldots,\vx^N\}$, $\vm{\nf}^0$, $\mr{\nf}^0$, $\varepsilon\geq 0$}
\KwResult{$\overset{\sim}{\uptau}_{\vy(t_j)\rightarrow\vx}(\varepsilon)$ for arbitrary $\varepsilon$, $\uptau_{\vy(t_j)\rightarrow\vx}$, and $\uptau^{\text{approx}}_{\vy(t_j)\rightarrow\vx}$, all for $j\in\llbracket N\rrbracket$}
\For{$j\in\llbracket N\rrbracket$}{
    \For{$n\in\{j,\ldots,N\}$}{
    Calculate $p_n(\vy^j|\vx)$, the Gaussian PDF corresponding to $\mathcal{N}_l(\vm{s}^{j,n},\mr{s}^{j,n})$, per Theorem \ref{thm:onlinesmoother}\;
    }
    Calculate $\delta(t_n;t_j) = \mathcal{P}^{j}_n$ using \eqref{eq:lackinfoupdate_CIR} for each $n\in\{j,\ldots,N\}$\;
    \eIf{$\varepsilon< \max_{n}\{\mathcal{P}^{j}_n\}$}{$\overset{\sim}{\uptau}_{\vy(t_j)\rightarrow\vx}(\varepsilon) = \max_{n\in\{j,\ldots,N\}}\big\{\mathcal{P}^{j}_n > \varepsilon\big\}\Delta t - t_j$\;}{$\overset{\sim}{\uptau}_{\vy(t_j)\rightarrow\vx}(\varepsilon) = 0$\;}

    \eIf{$\max_{n}\{\mathcal{P}^{j}_n\}>0$}{\If{$\overset{\sim}{\uptau}_{\vy(t_j)\rightarrow\vx}(\varepsilon)$ \normalfont{is known for many} $\varepsilon\in\big[0, \max_{n}\{\mathcal{P}^{j}_n\}\big]$}{$\displaystyle\uptau_{\vy(t_j)\rightarrow\vx}=\frac{1}{\max_{n}\{\mathcal{P}^{j}_n\}}\int_0^{ \max_{n}\{\mathcal{P}^{j}_n\}} \overset{\sim}{\uptau}_{\vyt\rightarrow\vx}(\varepsilon) \d\varepsilon$\;}
    $\displaystyle\uptau^{\text{approx}}_{\vy(t_j)\rightarrow\vx} = \frac{\Delta t}{\max_n\{\mathcal{P}^{j}_n\}}\sum_{n=j}^N \mathcal{P}^{j}_n$\Comment*[l]{If $\max_{n}\{\mathcal{P}^{j}_n\}$ is essentially negligible, e.g., $\max_{n}\{\mathcal{P}^{j}_n\}<10^{-4}$, we can set $\uptau^{\text{approx}}_{\vy(t_j)\rightarrow\vx} =0$, which necessarily coincides with $\uptau_{\vy(t_j)\rightarrow\vx}\approx0$, as to avoid operationally inflated values due to numerical imprecision}
    }{$\uptau_{\vy(t_j)\rightarrow\vx}=\uptau^{\text{approx}}_{\vy(t_j)\rightarrow\vx}=0$\;}
    
}
\end{algorithm}

\section{ACI-Based Principles of Nil Causality} \label{sec:ACI_Theorems}
Before investigating complex turbulent systems, we first validate the ACI framework's theoretical foundations using the analytically tractable CGNS structure. Crucially, any reliable causal metric must satisfy two requirements: (1)~It should be asymmetric in its state components, sometimes referred to as Schreiber's principle \cite{schreiber2000measuring}, and (2)~it should correctly identify unidirectional relationships, yielding zero measure in non-causal directions; in other words, it must obey the \textit{principle of nil causality} \cite{liang2016information}, which is ``an event cannot cause another if their dynamics are independent''. Both ACI and its conditional variant naturally satisfy (1) for general complex systems. As for the principle of nil causality, for CGNSs this principle takes a rigorous form: when $\vx$'s evolution is independent of $\vy$, the ACI framework guarantees $\vy(t)\ \cancel{\rightarrow}\ \vx$ for all $t\in[0,T]$, confirming the absence of spurious causality. This extends to conditional ACI, mutatis mutandis.

\subsection{Principle of Nil Assimilative Causality for CGNSs}
\begin{theorem}[Principle of Nil Assimilative Causality for CGNSs] \label{thm:nilcausality}
    Let $\vxt$ and $\vyt$ satisfy \eqref{eq:CGNS1}--\eqref{eq:CGNS2}. When $\ml^\vx\equiv \mathbf{0}_{k\times l}$ and $(\ms^{\vy}\circ \ms^{\vx})\equiv \mathbf{0}_{l\times k}$ for every $t$ and $\vx$, then the ground-truth causal network is:

    \begin{figure}[H]
    \centering
    \resizebox{0.23\textwidth}{!}{%
    \begin{circuitikz}
    \tikzstyle{every node}=[font=\LARGE]
    \draw [ line width=2pt ] (2.5,14.75) circle (0.75cm);
    \node [font=\Huge] at (2.5,14.75) {$\vy$};
    \draw [ line width=2pt ] (7.25,14.75) circle (0.75cm);
    \node [font=\Huge] at (7.25,14.75) {$\vx$};
    \draw [line width=2pt, ->, >=Stealth] (3.25,14.75) -- (6.5,14.75);
    \draw [line width=2pt, short] (6.25,15.5) -- (3.5,14);
    \end{circuitikz}
    }%
    \end{figure}

    \noindent This is validated by the ACI framework per \eqref{eq:filter_smoother_distr} and \eqref{eq:RE_filter_smoother}--\eqref{eq:ACI_cause_notation}:
    \begin{equation*}
        \vy(t)\ \cancel{\rightarrow}\ \vx, \quad \forall\ t\in[0,T],
    \end{equation*}
    since in this case we have
    \begin{equation*}
        p_t^{\text{\ns}}(\vy|\vx)=p_t^{\text{\nf}}(\vy|\vx), \quad t\in[0,T].
    \end{equation*}
\end{theorem}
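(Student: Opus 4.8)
The plan is to establish the stronger pointwise identity $p_t^{\text{s}}(\vy|\vx) = p_t^{\text{f}}(\vy|\vx)$ for all $t\in[0,T]$, from which $\vy(t)\cancel{\rightarrow}\vx$ follows immediately by the positive-definiteness of the relative entropy in \eqref{eq:relative_entropy} and the ACI criterion \eqref{eq:RE_filter_smoother}. Since the system \eqref{eq:CGNS1}--\eqref{eq:CGNS2} is a CGNS, both the filter and smoother posteriors are Gaussian by Theorems~\ref{thm:filtercgns} and~\ref{thm:smoothercgns}, so it suffices to show that their first two moments agree: $\vm{\ns}(t)=\vm{\nf}(t)$ and $\mr{\ns}(t)=\mr{\nf}(t)$ for every $t$. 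First I would substitute the two hypotheses $\ml^\vx\equiv\mathbf{0}_{k\times l}$ and $(\ms^\vy\circ\ms^\vx)\equiv\mathbf{0}_{l\times k}$ into the auxiliary quantities appearing in the smoother equations \eqref{eq:smoother}, \eqref{eq:kalmangainsmoother}, and the definitions of $\ma,\mb$. Observe that $\mathbf{K}_{\text{\ns}}=(\ms^\vy\circ\ms^\vx)(\ms^\vx\circ\ms^\vx)^{-1}\equiv\mathbf{0}$, that $\ma\equiv\ml^\vy$ (the second term vanishes since it carries a factor $(\ms^\vy\circ\ms^\vx)$), and that $\mb\equiv(\ms^\vy\circ\ms^\vy)$ (likewise). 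Also note from \eqref{eq:kalmangainfilter} that under these hypotheses the filter Kalman gain $\mathbf{K}_{\text{\nf}}=\big((\ms^\vy\circ\ms^\vx)+\mr{\nf}(\ml^\vx)^\mathtt{T}\big)(\ms^\vx\circ\ms^\vx)^{-1}\equiv\mathbf{0}$ as well, so the filter innovation no longer feeds back into $\vm{\nf},\mr{\nf}$.

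Next I would compare the resulting reduced equations. For the covariance: the smoother backward equation \eqref{eq:smoother2} collapses to $\smooth{\rmd\mr{\ns}}=-\big((\ml^\vy+\mb\mr{\nf}^{-1})\mr{\ns}+\mr{\ns}(\ml^\vy+\mb\mr{\nf}^{-1})^\mathtt{T}-(\ms^\vy\circ\ms^\vy)\big)\rmd t$, while the filter equation \eqref{eq:filter2} collapses to $\rmd\mr{\nf}=\big(\ml^\vy\mr{\nf}+\mr{\nf}(\ml^\vy)^\mathtt{T}+(\ms^\vy\circ\ms^\vy)\big)\rmd t$. The key algebraic check is that $\mr{\nf}(t)$ solves the backward smoother ODE: writing $\smooth{\rmd\mr{\nf}}=-\rmd\mr{\nf}=-\big(\ml^\vy\mr{\nf}+\mr{\nf}(\ml^\vy)^\mathtt{T}+(\ms^\vy\circ\ms^\vy)\big)\rmd t$ and adding and subtracting $\mb\mr{\nf}^{-1}\mr{\nf}+\mr{\nf}(\mb\mr{\nf}^{-1})^\mathtt{T} = \mb + \mb^\mathtt{T} = 2(\ms^\vy\circ\ms^\vy)$ (using symmetry of $\mb=(\ms^\vy\circ\ms^\vy)$ and of $\mr{\nf}$), one recovers exactly the reduced \eqref{eq:smoother2} with $\mr{\ns}$ replaced by $\mr{\nf}$. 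Since both satisfy the same backward ODE with the same terminal condition $\mr{\ns}(T)=\mr{\nf}(T)$ (stated at the end of Theorem~\ref{thm:smoothercgns}) and the coefficients are regular enough for uniqueness (the ``suitable regularity conditions''), uniqueness of continuous solutions forces $\mr{\ns}(t)=\mr{\nf}(t)$ on $[0,T]$. An identical argument handles the mean: with $\mathbf{K}_{\text{\ns}}\equiv\mathbf{0}$, \eqref{eq:smoother1} becomes $\smooth{\rmd\vm{\ns}}=-\big(\ml^\vy\vm{\ns}+\vf^\vy-\mb\mr{\nf}^{-1}(\vm{\nf}-\vm{\ns})\big)\rmd t$; substituting $\vm{\ns}=\vm{\nf}$ makes the residual term $\mb\mr{\nf}^{-1}(\vm{\nf}-\vm{\ns})$ vanish and leaves $\smooth{\rmd\vm{\nf}}=-(\ml^\vy\vm{\nf}+\vf^\vy)\rmd t$, which is precisely the filter mean equation \eqref{eq:filter1} with $\mathbf{K}_{\text{\nf}}\equiv\mathbf{0}$, run backward; same terminal data $\vm{\ns}(T)=\vm{\nf}(T)$ and uniqueness close the argument.

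I expect the main obstacle to be purely the bookkeeping of verifying that $\mr{\nf}$ genuinely satisfies the backward smoother Riccati equation --- i.e., the identity $\mb\mr{\nf}^{-1}\mr{\nf}+\mr{\nf}\mr{\nf}^{-1}\mb=2(\ms^\vy\circ\ms^\vy)$ and the cancellation it induces --- rather than anything conceptual; once the hypotheses are plugged in, every ``smoother correction'' term is seen to carry a latent factor of $\mathbf{K}_{\text{\ns}}$ or $(\ms^\vy\circ\ms^\vx)$ and therefore drops out. A secondary point worth stating explicitly is that the hypotheses decouple $\vx$ from $\vy$ at the level of the dynamics: in \eqref{eq:CGNS1}, $\ml^\vx\equiv\mathbf{0}$ removes the $\ml^\vx\vyt$ drift term and $(\ms^\vy\circ\ms^\vx)\equiv\mathbf{0}$ means the noise of $\vx$ shares no common component with that of $\vy$, so $\vx$ evolves autonomously and observing its trajectory --- past or future --- conveys no information about $\vy$; the moment computation above is the rigorous manifestation of this heuristic. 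Finally, the Gaussianity from Theorems~\ref{thm:filtercgns}--\ref{thm:smoothercgns} upgrades the equality of moments to equality of the full densities, and invoking \eqref{eq:RE_filter_smoother}--\eqref{eq:ACI_cause_notation} gives $\vy(t)\cancel{\rightarrow}\vx$ for all $t\in[0,T]$, completing the proof.
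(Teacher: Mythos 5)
Your proposal is correct, and its backbone --- showing that both Kalman gain operators $\mathbf{K}_{\text{\nf}}$ and $\mathbf{K}_{\text{\ns}}$ vanish under the hypotheses, reducing $\ma$ to $\ml^\vy$ and $\mb$ to $(\ms^\vy\circ\ms^\vy)$, and then closing the mean argument via uniqueness of the backward linear ODE with terminal data $\vm{\ns}(T)=\vm{\nf}(T)$ --- is exactly the paper's route (the paper phrases the mean step as a homogeneous ODE for the difference $\vm{\ns}-\vm{\nf}$ rather than plugging $\vm{\nf}$ into the smoother equation, but these are the same uniqueness argument). The one place you genuinely diverge is the covariance step: you verify algebraically that $\mr{\nf}$ solves the reduced backward Sylvester/Riccati equation \eqref{eq:smoother2}, using $\mb\mr{\nf}^{-1}\mr{\nf}+\mr{\nf}\mr{\nf}^{-1}\mb^\mathtt{T}=2(\ms^\vy\circ\ms^\vy)$ (your computation checks out), and again invoke backward uniqueness. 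The paper instead deduces $\mr{\ns}=\mr{\nf}$ directly from $\vm{\ns}=\vm{\nf}$ by a probabilistic argument: once the means agree, $\mr{\ns}(t)=\ee{(\vyt-\vmt{\nf})(\vyt-\vmt{\nf})^\mathtt{T}|\vx(s\leq T)}$, and this collapses to $\mr{\nf}(t)$ via properties of conditional expectations, $\cF_t^\vx\subseteq\cF_T^\vx$, and the $\cF_t^\vx$-measurability of $\mr{\nf}(t)$. Your version is more self-contained and uses the same machinery twice, at the cost of the extra Riccati bookkeeping you flagged; the paper's version is shorter but leans on measure-theoretic properties of the CGNS filter covariance. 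Either is acceptable, and both yield $p_t^{\text{\ns}}(\vy|\vx)=p_t^{\text{\nf}}(\vy|\vx)$ by Gaussianity, hence $\vy(t)\ \cancel{\rightarrow}\ \vx$ for all $t\in[0,T]$.
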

\noindent\underline{\textbf{Note on Theorem \ref{thm:nilcausality}:}} The condition $\ml^\vx\equiv \mathbf{0}_{k\times l}$ prohibits $\vy$ from entering the mean dynamics of $\vx$, while $(\ms^{\vy}\circ \ms^{\vx})\equiv \mathbf{0}_{l\times k}$ nullifies the cross-interactions between their noise feedbacks. The combination of these two assumptions removes the possibility of $\vy$ contributing to the evolution of $\vx$ in the dynamics, both through its drift coefficient, as well as via the full diffusion coefficient of the system in \eqref{eq:CGNS}. As a result: $\vy\ \cancel{\rightarrow}\ \vx$.
\begin{proof}[\underline{Proof of Theorem \ref{thm:nilcausality}}]
    Since $\ml^\vx\equiv \mathbf{0}_{k\times l}$ and $(\ms^{\vy}\circ \ms^{\vx})\equiv \mathbf{0}_{l\times k}$, by \eqref{eq:kalmangainfilter} we have that the filter Kalman gain operator of the CGNS vanishes. As a result, the filter equations for the mean and covariance matrix reduce to their model-forecast part:
    \begin{align*}
    \rmd \vmt{\nf}&=(\ml^\vy\vm{\nf}+\vf^\vy)\rmd t, \\
    \rmd \mrt{\nf}&=\big(\ml^\vy\mr{\nf}+\mr{\nf}(\ml^\vy)^\mathtt{T}+(\ms^\vy\circ\ms^\vy)\big)\rmd t.
    \end{align*}
    The equations for the filter statistics become decorrelated under this regime, meaning they can be solved independently. Similarly to its filter counterpart, the smoother Kalman gain operator of the CGNS also vanishes, meaning the smoother equations likewise become:
    \begin{align*}
        \smooth{\rmd \vm{\ns}}(t)&=-(\ml^\vy\vm{\ns}+\vf^\vy-(\ms^\vy\circ \ms^\vy)\mr{\nf}^{-1}(\vm{\nf}-\vm{\ns}))\rmd t, \\
        \smooth{\rmd \mr{\ns}}(t)&= -\big((\ml^\vy+(\ms^\vy\circ \ms^\vy)\mr{\nf}^{-1})\mr{\ns}+\mr{\ns}(\ml^\vy+(\ms^\vy\circ \ms^\vy)\mr{\nf}^{-1})^\mathtt{T}-(\ms^\vy\circ \ms^\vy)\big)\rmd t.
    \end{align*}
    Taking the difference between the filter and smoother equations we have
    \begin{equation} \label{eq:diff_filter_smoother}
        \smooth{\rmd (\vm{\ns}-\vm{\nf})}(t)=\smooth{\rmd \vm{\ns}}(t)+\rmd\vm{\nf}(t)=-(\ml^\vy+(\ms^\vy\circ \ms^\vy)\mr{\nf}^{-1})(\vm{\ns}-\vm{\nf})\rmd t, \quad T\geq t\geq 0,
    \end{equation}
    and so by $\vm{\ns}(T)=\vm{\nf}(T)$, the linearity of \eqref{eq:diff_filter_smoother}, and its uniqueness of solution, we retrieve
    \begin{equation*}
        \vm{\ns}(t)=\vm{\nf}(t), \quad t\in[0,T].
    \end{equation*}
    As an immediate consequence, we have
    \begin{align*}
        \mrt{\ns}&=\ee{(\vyt-\vmt{\ns})(\vyt-\vmt{\ns})^\mathtt{T}\big|\vx(s\leq T)}\\
        &=\ee{(\vyt-\vmt{\nf})(\vyt-\vmt{\nf})^\mathtt{T}\big|\vx(s\leq T)}=\mrt{\nf},\quad t\in[0,T],
    \end{align*}
    where in the last equality we have used the stability property of conditional expectations, $\cF_t^\vx\subseteq\cF_T^\vx$, and the $\cF_t^\vx$-measurability of $\mrt{\nf}$, with $\cF_t^\vx$ denoting the $\sigma$-algebra generated by $\{\vx(s)\}_{s\leq t}$ for $t\in[0,T]$ \cite{liptser2001statistics}. Combining these, we end up with:
    \begin{equation*}
        p_t^{\text{\ns}}(\vy|\vx)=p_t^{\text{\nf}}(\vy|\vx), \quad t\in[0,T],
    \end{equation*}
    which by \eqref{eq:RE_filter_smoother} yields $\vy(t)\ \cancel{\rightarrow}\ \vx$ for each $t\in[0,T]$.
\end{proof}

\subsection{Principle of Nil Conditional Assimilative Causality for CGNSs}
The following theorem extends Theorem~\ref{thm:nilcausality} to establish the principle of nil \textit{conditional} assimilative causality for CGNSs. To formulate this result, we first introduce some necessary notation. Following Section~\ref{sec:ACI_Generalization}, we consider the state variables $(\vx_{\text{A}},\vx_{\text{B}},\vy)$ and reformulate the CGNS in \eqref{eq:general_CTNDS} as:
\begin{subequations} \label{eq:CGNS_cond}
    \begin{align}
        \d\vx_{\text{\normalfont{A}}}(t) &= \big(\ml^{\vx_{\text{\normalfont{A}}}}(t,\vx)\vyt+\vf^{\vx_{\text{\normalfont{A}}}}(t,\vx)\big)\d t+\ms_1^{\vx_{\text{\normalfont{A}}}}(t,\vx) \rmd\vw_1(t)+\ms_2^{\vx_{\text{\normalfont{A}}}}(t,\vx)\rmd\vw_2(t), \label{eq:CGNS_cond1} \\
        \d\vx_{\text{\normalfont{B}}}(t) &= \big(\ml^{\vx_{\text{\normalfont{B}}}}(t,\vx)\vyt+\vf^{\vx_{\text{\normalfont{B}}}}(t,\vx)\big)\d t+\ms_1^{\vx_{\text{\normalfont{B}}}}(t,\vx) \rmd\vw_1(t)+\ms_2^{\vx_{\text{\normalfont{B}}}}(t,\vx)\rmd\vw_2(t), \label{eq:CGNS_cond2} \\
        \d\vyt &= \big(\ml^\vy(t,\vx)\vyt+\vf^\vy(t,\vx)\big)\d t+\ms_1^\vy(t,\vx)\rmd\vw_1(t)+\ms_2^\vy(t,\vx)\rmd\vw_2(t), \label{eq:CGNS_cond3}
    \end{align}
\end{subequations}
where by using block-matrix notation we have:
\begin{gather*}
    \ml^\vx=\begin{pmatrix}
        \ml^{\vx_{\text{\normalfont{A}}}}\\ \ml^{\vx_{\text{\normalfont{B}}}}
    \end{pmatrix}, \quad \vf^\vx=\begin{pmatrix}
        \vf^{\vx_{\text{\normalfont{A}}}}\\ \vf^{\vx_{\text{\normalfont{B}}}}
    \end{pmatrix}, \quad \ms_m^\vx=\begin{pmatrix}
        \ms_m^{\vx_{\text{\normalfont{A}}}}\\ \ms_m^{\vx_{\text{\normalfont{B}}}}
    \end{pmatrix}, \ m=1,2,\\
    \ml^{\vx_{\square}}\in\mathbb{R}^{k_{\square}\times l},\quad \vf^{\vx_{\square}}\in\mathbb{R}^{k_{\square}}, \quad \ms_m^{\vx_{\square}}\in\mathbb{R}^{k_{\square}\times d_m}, \quad m=1,2, \ \square\in\{\text{\normalfont{A}},\text{\normalfont{B}}\}.
\end{gather*}
Under this formulation, $(\ms^\vx\circ\ms^\vx)$ can be depicted as a $2\times 2$ block matrix:
\begin{equation*}
    (\ms^\vx\circ\ms^\vx)=\ms_1^\vx(\ms_1^\vx)^\mathtt{T}+\ms_2^\vx(\ms_2^\vx)^\mathtt{T}=\begin{pmatrix}
        (\ms^{\vx_{\text{\normalfont{A}}}}\circ\ms^{\vx_{\text{\normalfont{A}}}}) & (\ms^{\vx_{\text{\normalfont{A}}}}\circ\ms^{\vx_{\text{\normalfont{B}}}}) \\
        (\ms^{\vx_{\text{\normalfont{B}}}}\circ\ms^{\vx_{\text{\normalfont{A}}}}) & (\ms^{\vx_{\text{\normalfont{B}}}}\circ\ms^{\vx_{\text{\normalfont{B}}}})
    \end{pmatrix}.
\end{equation*}
We also define the Schur complement of the $(\ms^{\vx_{\text{\normalfont{A}}}}\circ\ms^{\vx_{\text{\normalfont{A}}}})$ and $(\ms^{\vx_{\text{\normalfont{B}}}}\circ\ms^{\vx_{\text{\normalfont{B}}}})$ blocks with respect to $(\ms^\vx\circ\ms^\vx)$ \cite{gallier2019notes}:
\begin{equation}
    \begin{gathered}
        (\ms^\vx\circ\ms^\vx) / (\ms^{\vx_{\text{\normalfont{A}}}}\circ\ms^{\vx_{\text{\normalfont{A}}}}):= (\ms^{\vx_{\text{\normalfont{B}}}}\circ\ms^{\vx_{\text{\normalfont{B}}}}) - (\ms^{\vx_{\text{\normalfont{B}}}}\circ\ms^{\vx_{\text{\normalfont{A}}}})(\ms^{\vx_{\text{\normalfont{A}}}}\circ\ms^{\vx_{\text{\normalfont{A}}}})^{-1}(\ms^{\vx_{\text{\normalfont{A}}}}\circ\ms^{\vx_{\text{\normalfont{B}}}}),\\
        (\ms^\vx\circ\ms^\vx) / (\ms^{\vx_{\text{\normalfont{B}}}}\circ\ms^{\vx_{\text{\normalfont{B}}}}):= (\ms^{\vx_{\text{\normalfont{A}}}}\circ\ms^{\vx_{\text{\normalfont{A}}}}) - (\ms^{\vx_{\text{\normalfont{A}}}}\circ\ms^{\vx_{\text{\normalfont{B}}}})(\ms^{\vx_{\text{\normalfont{B}}}}\circ\ms^{\vx_{\text{\normalfont{B}}}})^{-1}(\ms^{\vx_{\text{\normalfont{B}}}}\circ\ms^{\vx_{\text{\normalfont{A}}}}),
    \end{gathered} \label{eq:schur_complements}
\end{equation}
under the tacit assumption that $(\ms^{\vx_{\text{\normalfont{A}}}}\circ\ms^{\vx_{\text{\normalfont{A}}}})$ and $(\ms^{\vx_{\text{\normalfont{B}}}}\circ\ms^{\vx_{\text{\normalfont{B}}}})$ are invertible; $(\ms^\vx\circ\ms^\vx)$ is positive definite if and only if either one of $(\ms^{\vx_{\text{\normalfont{A}}}}\circ\ms^{\vx_{\text{\normalfont{A}}}})$ or $(\ms^{\vx_{\text{\normalfont{B}}}}\circ\ms^{\vx_{\text{\normalfont{B}}}})$ and its associated Schur complement are positive definite matrices \cite{gallier2019notes}. In the case where both $(\ms^{\vx_{\text{\normalfont{A}}}}\circ\ms^{\vx_{\text{\normalfont{A}}}})$ and $(\ms^{\vx_{\text{\normalfont{B}}}}\circ\ms^{\vx_{\text{\normalfont{B}}}})$ are positive definite, then $(\ms^\vx\circ\ms^\vx)^{-1}$ enjoys the following explicit representation:
\begin{equation}
   \begin{aligned}
       (\ms^\vx\circ\ms^\vx)^{-1}&=\begin{pmatrix}
           \big((\ms^\vx\circ\ms^\vx) / (\ms^{\vx_{\text{\normalfont{B}}}}\circ\ms^{\vx_{\text{\normalfont{B}}}})\big)^{-1} & \mathbf{0}_{k_{\text{\normalfont{A}}} \times k_{\text{\normalfont{B}}}} \\
           \mathbf{0}_{k_{\text{\normalfont{B}}} \times k_{\text{\normalfont{A}}}} & \big((\ms^\vx\circ\ms^\vx) / (\ms^{\vx_{\text{\normalfont{A}}}}\circ\ms^{\vx_{\text{\normalfont{A}}}})\big)^{-1}
       \end{pmatrix}\\
       &\hspace*{2cm}\times \begin{pmatrix}
           \mathbf{I}_{k_{\text{\normalfont{A}}}\times k_{\text{\normalfont{A}}}} & -(\ms^{\vx_{\text{\normalfont{A}}}}\circ\ms^{\vx_{\text{\normalfont{B}}}})(\ms^{\vx_{\text{\normalfont{B}}}}\circ\ms^{\vx_{\text{\normalfont{B}}}})^{-1} \\
           -(\ms^{\vx_{\text{\normalfont{B}}}}\circ\ms^{\vx_{\text{\normalfont{A}}}})(\ms^{\vx_{\text{\normalfont{A}}}}\circ\ms^{\vx_{\text{\normalfont{A}}}})^{-1} & \mathbf{I}_{k_{\text{\normalfont{B}}}\times k_{\text{\normalfont{B}}}}
       \end{pmatrix}.
   \end{aligned} \label{eq:obs_gramian_explicit}
\end{equation}

With these preliminaries, we now establish the principle of nil conditional assimilative causality for CGNSs: when (i)~$\vx_{\text{A}}$'s evolution is independent of $\vy$, and (ii)~the uncertainty levels of $\vx_{\text{A}}$ and $\vx_{\text{B}}$ are non-interacting, the generalized ACI framework correctly identifies the absence of conditional causation. Formally, $\big(\vy(t)\ \cancel{\rightarrow}\ \vx_{\text{\normalfont{A}}}\big)\big|\vx_{\text{\normalfont{B}}}$, for all $t\in[0,T]$.

\begin{theorem}[Principle of Nil Conditional Assimilative Causality for CGNSs] \label{thm:nilcondcausality}
    Let $\vxt$ and $\vyt$ satisfy \eqref{eq:CGNS1}--\eqref{eq:CGNS2}. When $\ml^{\vx_{\text{\normalfont{A}}}}\equiv \mathbf{0}_{k_{\text{\normalfont{A}}}\times l}$, $(\ms^{\vy}\circ \ms^{\vx_{\text{\normalfont{A}}}})\equiv \mathbf{0}_{l\times k_{\text{\normalfont{B}}}}$, and $(\ms^{\vx_{\text{\normalfont{A}}}}\circ \ms^{\vx_{\text{\normalfont{B}}}})\equiv \mathbf{0}_{k_{\text{\normalfont{A}}}\times k_{\text{\normalfont{B}}}}$ for every $t$ and $\vx$, then the ground-truth causal network is:

    \begin{figure}[H]
    \centering
    \resizebox{0.4\textwidth}{!}{%
    \begin{circuitikz}
    \tikzstyle{every node}=[font=\Huge]
    \draw [ line width=2pt ] (2.5,14.75) circle (0.75cm);
    \node [font=\Huge] at (2.5,14.75) {$\vy$};
    \draw [ line width=2pt ] (7.25,14.75) circle (0.75cm);
    \node [font=\Huge] at (7.25,14.75) {$\vx_{\text{\normalfont{B}}}$};
    \draw [line width=2pt, <->, >=Stealth] (3.25,14.75) -- (6.5,14.75);
    \draw [ line width=2pt ] (12,14.75) circle (0.75cm);
    \node [font=\Huge] at (12,14.75) {$\vx_{\text{\normalfont{A}}}$};
    \draw [line width=2pt, <->, >=Stealth] (8,14.75) -- (11.25,14.75);
    \draw [line width=2pt, ->, >=Stealth] (2.5,15.5) .. controls (4,16.5) and (10.5,16.75) .. (12,15.5) ;
    \draw [line width=2pt, short] (9.25,16.75) -- (5,15.75);
    \end{circuitikz}
    }%
    \end{figure}

    \noindent This is validated by the ACI framework per \eqref{eq:filter_smoother_ancillary_inf_uncert}--\eqref{eq:ACI_cause_cond_notation}:
    \begin{equation*}
        \big(\vy(t)\ \cancel{\rightarrow}\ \vx_{\text{\normalfont{A}}}\big)\big|\vx_{\text{\normalfont{B}}}, \quad \forall\ t\in[0,T],
    \end{equation*}
    since in this case we have
    \begin{equation*}
        p_t^{\text{\ns}|\vx_{\text{\normalfont{B}}}}(\vy|\vx_{\text{\normalfont{A}}})=p_t^{\text{\nf}|\vx_{\text{\normalfont{B}}}}(\vy|\vx_{\text{\normalfont{A}}}), \quad t\in[0,T].
    \end{equation*}
\end{theorem}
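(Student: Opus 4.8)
The plan is to reduce the statement to the unconditional principle, Theorem~\ref{thm:nilcausality}, but executed inside the conditional ACI pipeline of Section~\ref{Sec:Math_Foundation}\ref{sec:ACI_Generalization}. For a CGNS the distributions $p_t^{\text{\nf}|\vx_{\text{B}}}$ and $p_t^{\text{\ns}|\vx_{\text{B}}}$ of \eqref{eq:filter_smoother_ancillary_inf_uncert} are produced by running the optimal nonlinear filter and smoother (Theorems~\ref{thm:filtercgns} and~\ref{thm:smoothercgns}), except that the observational noise Gramian $(\ms^\vx\circ\ms^\vx)$ appearing in every \emph{analysis} term — the Kalman gains \eqref{eq:kalmangainfilter} and \eqref{eq:kalmangainsmoother}, the innovation weighting, and the $\mathbf{K}(\ms^\vx\circ\ms^\vx)\mathbf{K}^\mathtt{T}$ corrections in \eqref{eq:filter2} and \eqref{eq:smoother2} — is replaced by its $\mathrm{Var}(\vx_{\text{B}}(t))\to+\infty$ limit, the forecast operators being left intact. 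The first task is therefore to identify that limiting inverse: since $(\ms^{\vx_{\text{A}}}\circ\ms^{\vx_{\text{B}}})\equiv\mathbf{0}$ by hypothesis, $(\ms^\vx\circ\ms^\vx)$ is block-diagonal, so \eqref{eq:schur_complements}--\eqref{eq:obs_gramian_explicit} collapse to $(\ms^\vx\circ\ms^\vx)^{-1}=\mathrm{diag}\big((\ms^{\vx_{\text{A}}}\circ\ms^{\vx_{\text{A}}})^{-1},(\ms^{\vx_{\text{B}}}\circ\ms^{\vx_{\text{B}}})^{-1}\big)$, and inflating the $\vx_{\text{B}}$ block drives this to $\mathrm{diag}\big((\ms^{\vx_{\text{A}}}\circ\ms^{\vx_{\text{A}}})^{-1},\mathbf{0}_{k_{\text{B}}\times k_{\text{B}}}\big)$. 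This channel-decoupling is precisely what the third hypothesis provides, and it is also what makes the ground-truth network the depicted one, with $\vy$ reaching $\vx_{\text{A}}$ through neither drift nor diffusion, not even indirectly through $\vx_{\text{B}}$.

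Next I would substitute the remaining two hypotheses, $\ml^{\vx_{\text{A}}}\equiv\mathbf{0}_{k_{\text{A}}\times l}$ and $(\ms^{\vy}\circ\ms^{\vx_{\text{A}}})\equiv\mathbf{0}_{l\times k_{\text{A}}}$, into the Kalman gains. Writing $\ml^\vx$ and $(\ms^\vy\circ\ms^\vx)$ in the block forms $\big(\ml^{\vx_{\text{A}}};\ml^{\vx_{\text{B}}}\big)$ and $\big[(\ms^\vy\circ\ms^{\vx_{\text{A}}}),(\ms^\vy\circ\ms^{\vx_{\text{B}}})\big]$, these two vanishing hypotheses kill the $\vx_{\text{A}}$ blocks of $(\ms^\vy\circ\ms^\vx)+\mr{\nf}(\ml^\vx)^\mathtt{T}$ and of $(\ms^\vy\circ\ms^\vx)$; multiplying on the right by the limiting block-diagonal $\mathrm{diag}\big((\ms^{\vx_{\text{A}}}\circ\ms^{\vx_{\text{A}}})^{-1},\mathbf{0}\big)$ then annihilates the entire product, so $\mathbf{K}_{\text{\nf}}$ and $\mathbf{K}_{\text{\ns}}$ both vanish identically in the limiting pipeline, and the auxiliary matrices collapse to $\ma=\ml^\vy$ and $\mb=(\ms^\vy\circ\ms^\vy)$ because the product $(\ms^\vy\circ\ms^\vx)(\ms^\vx\circ\ms^\vx)^{-1}$ entering their definitions is already zero.

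From here the computation is verbatim the proof of Theorem~\ref{thm:nilcausality}. The conditional filter \eqref{eq:filter} reduces to its model-forecast part, $\rmd\vmt{\nf}=(\ml^\vy\vm{\nf}+\vf^\vy)\rmd t$ and $\rmd\mrt{\nf}=\big(\ml^\vy\mr{\nf}+\mr{\nf}(\ml^\vy)^\mathtt{T}+(\ms^\vy\circ\ms^\vy)\big)\rmd t$, and the conditional smoother \eqref{eq:smoother} becomes identical to the reduced smoother system in that proof. Subtracting yields $\smooth{\rmd(\vm{\ns}-\vm{\nf})}=-(\ml^\vy+(\ms^\vy\circ\ms^\vy)\mr{\nf}^{-1})(\vm{\ns}-\vm{\nf})\rmd t$ for $T\geq t\geq 0$; with the terminal identification $\big(\vm{\ns}(T),\mr{\ns}(T)\big)=\big(\vm{\nf}(T),\mr{\nf}(T)\big)$, linearity and uniqueness of solutions force $\vm{\ns}(t)=\vm{\nf}(t)$ on $[0,T]$, and then $\mr{\ns}(t)=\mr{\nf}(t)$ follows as well — either because $\mr{\ns}=\mr{\nf}$ solves the same backward covariance ODE with the same terminal value, or by the stability of conditional expectations together with $\cF_t^{\vx_{\text{A}}}\subseteq\cF_T^{\vx_{\text{A}}}$, exactly as in Theorem~\ref{thm:nilcausality}. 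Being limits of the Gaussian CGNS posteriors, $p_t^{\text{\nf}|\vx_{\text{B}}}$ and $p_t^{\text{\ns}|\vx_{\text{B}}}$ are themselves Gaussian, so matching means and covariances gives $p_t^{\text{\ns}|\vx_{\text{B}}}(\vy|\vx_{\text{A}})=p_t^{\text{\nf}|\vx_{\text{B}}}(\vy|\vx_{\text{A}})$ for all $t\in[0,T]$, whence $\mathcal{P}\big(p_t^{\text{\ns}|\vx_{\text{B}}},p_t^{\text{\nf}|\vx_{\text{B}}}\big)=0$ and \eqref{eq:RE_filter_smoother_general} yields $\big(\vy(t)\cancel{\rightarrow}\vx_{\text{A}}\big)\big|\vx_{\text{B}}$ for every $t$.

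The step I expect to be the main obstacle is the first one: making rigorous that the $\mathrm{Var}(\vx_{\text{B}}(t))\to+\infty$ limit can be passed inside the filter and smoother systems and that it collapses the Kalman operators to the forms above. Concretely, one must verify that $(\ms^{\vx_{\text{A}}}\circ\ms^{\vx_{\text{A}}})$ — a principal block of the positive-definite Gramian, hence invertible under assumption~(V) — stays non-degenerate along the limit; that the $\mathbf{K}(\ms^\vx\circ\ms^\vx)\mathbf{K}^\mathtt{T}$ corrections converge rather than producing an indeterminate $0\cdot\infty$ (they do: $\mathbf{K}_{\text{\nf}}(\ms^\vx\circ\ms^\vx)\mathbf{K}_{\text{\nf}}^\mathtt{T}$ equals $\big((\ms^\vy\circ\ms^\vx)+\mr{\nf}(\ml^\vx)^\mathtt{T}\big)(\ms^\vx\circ\ms^\vx)^{-1}\big((\ms^\vy\circ\ms^\vx)+\mr{\nf}(\ml^\vx)^\mathtt{T}\big)^\mathtt{T}$, which has a finite limit — in fact $\mathbf{0}$ under the two vanishing hypotheses); and that the limiting filter covariance stays invertible so $\mr{\nf}^{-1}$ in \eqref{eq:smoother} is well defined. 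Once this limiting reduction is secured, the rest is a routine reprise of Theorem~\ref{thm:nilcausality}.
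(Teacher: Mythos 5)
Your proposal is correct and follows essentially the same route as the paper's proof: use the third hypothesis to make $(\ms^\vx\circ\ms^\vx)$ block-diagonal, interpret $\mathrm{Var}(\vx_{\text{B}}(t))\to+\infty$ as $(\ms^{\vx_{\text{B}}}\circ\ms^{\vx_{\text{B}}})^{-1}\to\mathbf{0}$, show both Kalman gain operators vanish in the limit (justified, as in the paper, by continuous dependence of the Gaussian posterior statistics on the parameters), and then rerun the linear backward difference-of-means argument with terminal matching and uniqueness to get equality of the conditional filter and smoother distributions, hence zero relative entropy. The only cosmetic deviations are that you explicitly evaluate the limiting auxiliary matrices $\mathbf{A}_{|\vx_{\text{B}}}=\ml^\vy$, $\mathbf{B}_{|\vx_{\text{B}}}=(\ms^\vy\circ\ms^\vy)$ (the paper leaves them generic, which suffices for the difference equation) and offer the equally valid check that $\mr{\text{\nf}}$ itself solves the backward covariance equation, neither of which changes the substance.
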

\noindent\underline{\textbf{Note on Theorem \ref{thm:nilcondcausality}:}}
The assumption $\ml^{\vx_{\text{A}}} \equiv \mathbf{0}_{k_{\text{A}} \times l}$ eliminates direct influence of $\vy$ on $\vx_{\text{A}}$'s mean dynamics, while $(\ms^{\vy} \circ \ms^{\vx_{\text{A}}}) \equiv \mathbf{0}_{l \times k_{\text{B}}}$ removes their noise feedback coupling. However, these conditions alone are insufficient to prevent indirect $\vy$-$\vx_{\text{A}}$ interactions through $\vx_{\text{B}}$ via observable noise cross-correlations. We therefore additionally impose $(\ms^{\vx_{\text{A}}} \circ \ms^{\vx_{\text{B}}}) \equiv \mathbf{0}_{k_{\text{A}} \times k_{\text{B}}}$, rendering $(\ms^\vx \circ \ms^\vx)$ block-diagonal. Collectively, these ensure:
$\big(\vy\ \cancel{\longrightarrow}\ \vx_{\text{\normalfont{A}}}\big)\big|\vx_{\text{\normalfont{B}}}$.

\begin{proof}[\underline{Proof of Theorem \ref{thm:nilcondcausality}}]
    For CGNS, the filter and smoother means evolve according to linear random ODEs, while their covariance matrices satisfy forward Riccati and backward symmetric Sylvester random ODEs respectively \cite{kandil2003matrix}. Under the regularity conditions of Theorems~\ref{thm:filtercgns} and~\ref{thm:smoothercgns}, these Gaussian statistics exhibit continuous dependence on both the model parameters in \eqref{eq:CGNS} and their initial/terminal conditions \cite{han2017random,neckel2013random}. This continuity allows direct computation of the posterior PDFs in \eqref{eq:filter_smoother_ancillary_inf_uncert} through the limit $\mathrm{Var}(\vx_{\text{B}}(t))\to+\infty$ applied to \eqref{eq:filter} and \eqref{eq:smoother}. Crucially, Gaussianity is preserved under this limit since a normal distribution depends continuously on its mean and covariance. Specifically, $p^{\text{f}|\vx_{\text{B}}}_t(\vy|\vx_{\text{A}})$ remains Gaussian $\mathcal{N}_l\big(\boldsymbol{\mu}_{\text{f}|\vx_{\text{B}}}(t),\mathbf{R}_{\text{f}|\vx_{\text{B}}}(t)\big)$, where:
    \begin{subequations} \label{eq:ancillary_filter_cgns}
        \begin{align}
            \rmd\boldsymbol{\mu}_{\text{\nf}|\vx_{\text{\normalfont{B}}}}(t)&=(\ml^\vy\boldsymbol{\mu}_{\text{\nf}|\vx_{\text{\normalfont{B}}}}+\vf^\vy)\rmd t+\mathbf{K}_{\text{\nf}|\vx_{\text{\normalfont{B}}}}(\rmd \vx -(\ml^\vx\boldsymbol{\mu}_{\text{\nf}|\vx_{\text{\normalfont{B}}}}+\vf^\vx)\rmd t), \label{eq:ancillary_filter_cgns1}\\
            \rmd\mathbf{R}_{\text{\nf}|\vx_{\text{\normalfont{B}}}}(t)&=\big(\ml^\vy\mathbf{R}_{\text{\nf}|\vx_{\text{\normalfont{B}}}}+\mathbf{R}_{\text{\nf}|\vx_{\text{\normalfont{B}}}}(\ml^\vy)^\mathtt{T}+(\ms^\vy\circ\ms^\vy)-\mathbf{K}_{\text{\nf}|\vx_{\text{\normalfont{B}}}}(\ms^\vx\circ\ms^\vx)\mathbf{K}_{\text{\nf}|\vx_{\text{\normalfont{B}}}}^\mathtt{T}\big)\rmd t, \label{eq:ancillary_filter_cgns2}
        \end{align}
    \end{subequations}
    with
    \begin{equation*}
        \mathbf{K}_{\text{\nf}|\vx_{\text{\normalfont{B}}}}(t,\vx):=\lim_{\mathrm{Var}(\vx_{\text{\normalfont{B}}}(t))\to+\infty}\mathbf{K}_{\text{\nf}}(t,\vx),
    \end{equation*}
    while
    $p^{\text{s}|\vx_{\text{\normalfont{B}}}}_t(\vy|\vx_{\text{\normalfont{A}}})$ is a Gaussian density corresponding to $\mathcal{N}_l\big(\boldsymbol{\mu}_{\text{\ns}|\vx_{\text{\normalfont{B}}}}(t),\mathbf{R}_{\text{\ns}|\vx_{\text{\normalfont{B}}}}(t)\big)$, where
    \begin{subequations} \label{eq:ancillary_smoother_cgns}
        \begin{align}
            \smooth{\rmd\boldsymbol{\mu}_{\text{\ns}|\vx_{\text{\normalfont{B}}}}}(t)&=-\big(\ml^\vy\boldsymbol{\mu}_{\text{\ns}|\vx_{\text{\normalfont{B}}}}+\vf^\vy-\mathbf{B}_{|\vx_{\text{\normalfont{B}}}}\mathbf{R}_{\text{\nf}|\vx_{\text{\normalfont{B}}}}^{-1}(\boldsymbol{\mu}_{\text{\nf}|\vx_{\text{\normalfont{B}}}}-\boldsymbol{\mu}_{\text{\ns}|\vx_{\text{\normalfont{B}}}})\big)\rmd t\nonumber\\
            &\hspace*{5cm}+\mathbf{K}_{\text{\ns}|\vx_{\text{\normalfont{B}}}}\big(\smooth{\rmd\vx}+(\ml^\vx\boldsymbol{\mu}_{\text{\ns}|\vx_{\text{\normalfont{B}}}}+\vf^\vx)\rmd t\big), \label{eq:ancillary_smoother_cgns1}\\
            \smooth{\rmd\mathbf{R}_{\text{\ns}|\vx_{\text{\normalfont{B}}}}}(t)&=-\big((\mathbf{A}_{|\vx_{\text{\normalfont{B}}}}+\mathbf{B}_{|\vx_{\text{\normalfont{B}}}}\mathbf{R}_{\text{\nf}|\vx_{\text{\normalfont{B}}}}^{-1})\mathbf{R}_{\text{\ns}|\vx_{\text{\normalfont{B}}}}+\mathbf{R}_{\text{\ns}|\vx_{\text{\normalfont{B}}}}(\mathbf{A}_{|\vx_{\text{\normalfont{B}}}}+\mathbf{B}_{|\vx_{\text{\normalfont{B}}}}\mathbf{R}_{\text{\nf}|\vx_{\text{\normalfont{B}}}}^{-1})^\mathtt{T}\nonumber\\
            &\hspace*{5cm}-(\ms^\vy\circ \ms^\vy)+\mathbf{K}_{\text{\ns}|\vx_{\text{\normalfont{B}}}}(\ms^\vx\circ\ms^\vx)\mathbf{K}_{\text{\ns}|\vx_{\text{\normalfont{B}}}}^\mathtt{T}\big)\rmd t, \label{eq:ancillary_smoother_cgns2}
        \end{align}
    \end{subequations}
    with
    \begin{gather*}
        \mathbf{K}_{\text{\ns}|\vx_{\text{\normalfont{B}}}}(t,\vx):=\lim_{\mathrm{Var}(\vx_{\text{\normalfont{B}}}(t))\to+\infty}\mathbf{K}_{\text{\ns}}(t,\vx),\\
        \mathbf{A}_{|\vx_{\text{\normalfont{B}}}}(t,\vx):=\lim_{\mathrm{Var}(\vx_{\text{\normalfont{B}}}(t))\to+\infty}\ma(t,\vx),\\
        \mathbf{B}_{|\vx_{\text{\normalfont{B}}}}(t,\vx):=\lim_{\mathrm{Var}(\vx_{\text{\normalfont{B}}}(t))\to+\infty}\mb(t,\vx).
    \end{gather*}
    As the measurability of the posterior Gaussian statistics remains unaffected under the formal limit $\mathrm{Var}(\vx_{\text{\normalfont{B}}}(t))\to+\infty$, we just need to prove
    \begin{equation*}
        \boldsymbol{\mu}_{\text{\ns}|\vx_{\text{\normalfont{B}}}}(t)=\boldsymbol{\mu}_{\text{\nf}|\vx_{\text{\normalfont{B}}}}(t), \quad t\in[0,T],
    \end{equation*}
    as this immediately yields $\mathbf{R}_{\text{\ns}|\vx_{\text{\normalfont{B}}}}\equiv\mathbf{R}_{\text{\nf}|\vx_{\text{\normalfont{B}}}}$, similarly to the proof of Theorem \ref{thm:nilcausality}.

    Following a similar procedure to the proof of Theorem \ref{thm:nilcausality}, we first determine how the filter and smoother Kalman gain operators reduce subject to the assumptions of this theorem. Under this regime, by using block-matrix algebra and \eqref{eq:schur_complements}--\eqref{eq:obs_gramian_explicit}, we have from \eqref{eq:kalmangainfilter} that
    \begin{align*}
        \mathbf{K}_{\text{\nf}}&=\big((\ms^\vy\circ \ms^\vx)+\mr{\nf}(\ml^\vx)^\mathtt{T}\big)(\ms^\vx\circ\ms^\vx)^{-1}\\
        &=\begin{pmatrix}
            \mathbf{0}_{l\times k_{\text{\normalfont{A}}}} & (\ms^\vy\circ \ms^{\vx_{\text{\normalfont{B}}}})+\mr{\nf}(\ml^{\vx_{\text{\normalfont{B}}}})^\mathtt{T}
        \end{pmatrix}\\
        &\hspace*{1cm}\times\begin{pmatrix}
           \big((\ms^\vx\circ\ms^\vx) / (\ms^{\vx_{\text{\normalfont{B}}}}\circ\ms^{\vx_{\text{\normalfont{B}}}})\big)^{-1} & \mathbf{0}_{k_{\text{\normalfont{A}}} \times k_{\text{\normalfont{B}}}} \\
           \mathbf{0}_{k_{\text{\normalfont{B}}} \times k_{\text{\normalfont{A}}}} & \big((\ms^\vx\circ\ms^\vx) / (\ms^{\vx_{\text{\normalfont{A}}}}\circ\ms^{\vx_{\text{\normalfont{A}}}})\big)^{-1}
       \end{pmatrix}\\
       &=\begin{pmatrix}
            \mathbf{0}_{l\times k_{\text{\normalfont{A}}}} & \big((\ms^\vy\circ \ms^{\vx_{\text{\normalfont{B}}}})+\mr{\nf}(\ml^{\vx_{\text{\normalfont{B}}}})^\mathtt{T}\big)\big((\ms^\vx\circ\ms^\vx) / (\ms^{\vx_{\text{\normalfont{A}}}}\circ\ms^{\vx_{\text{\normalfont{A}}}})\big)^{-1}
        \end{pmatrix}\\
        &=\begin{pmatrix}
            \mathbf{0}_{l\times k_{\text{\normalfont{A}}}} & \big((\ms^\vy\circ \ms^{\vx_{\text{\normalfont{B}}}})+\mr{\nf}(\ml^{\vx_{\text{\normalfont{B}}}})^\mathtt{T}\big)(\ms^{\vx_{\text{\normalfont{B}}}}\circ\ms^{\vx_{\text{\normalfont{B}}}})^{-1}
        \end{pmatrix},
    \end{align*}
    since under the conditions of this theorem we simply have
    \begin{equation*}
        (\ms^\vx\circ\ms^\vx)^{-1}=\begin{pmatrix}
           (\ms^{\vx_{\text{\normalfont{A}}}}\circ\ms^{\vx_{\text{\normalfont{A}}}})^{-1} & \mathbf{0}_{k_{\text{\normalfont{A}}} \times k_{\text{\normalfont{B}}}} \\
           \mathbf{0}_{k_{\text{\normalfont{B}}} \times k_{\text{\normalfont{A}}}} & (\ms^{\vx_{\text{\normalfont{B}}}}\circ\ms^{\vx_{\text{\normalfont{B}}}})^{-1}
       \end{pmatrix}.
    \end{equation*}
    As such, by interpreting $\mathrm{Var}(\vx_{\text{\normalfont{B}}}(t))\to+\infty$ as to mean $(\ms^{\vx_{\text{\normalfont{B}}}}\circ\ms^{\vx_{\text{\normalfont{B}}}})^{-1}\to\mathbf{0}_{k_{\text{\normalfont{B}}}\times k_{\text{\normalfont{B}}}}$ for each $t$ and $\vx$ in this regime, by the result we just established we see that
    \begin{equation*}
        \mathbf{K}_{\text{\nf}|\vx_{\text{\normalfont{B}}}}(t,\vx)=\lim_{\mathrm{Var}(\vx_{\text{\normalfont{B}}}(t))\to+\infty}\mathbf{K}_{\text{\nf}}(t,\vx)=\mathbf{0}_{l\times k}.
    \end{equation*}
    Analogously, we can see that for the smoother Kalman gain operator we have from \eqref{eq:kalmangainsmoother}:
    \begin{align*}
        \mathbf{K}_{\text{\ns}}(t,\vx)&=(\ms^\vy\circ \ms^\vx)(\ms^\vx\circ\ms^\vx)^{-1}\\
        &=\begin{pmatrix}
            \mathbf{0}_{l\times k_{\text{\normalfont{A}}}} & (\ms^\vy\circ \ms^{\vx_{\text{\normalfont{B}}}})
        \end{pmatrix}\begin{pmatrix}
           (\ms^{\vx_{\text{\normalfont{A}}}}\circ\ms^{\vx_{\text{\normalfont{A}}}})^{-1} & \mathbf{0}_{k_{\text{\normalfont{A}}} \times k_{\text{\normalfont{B}}}} \\
           \mathbf{0}_{k_{\text{\normalfont{B}}} \times k_{\text{\normalfont{A}}}} & (\ms^{\vx_{\text{\normalfont{B}}}}\circ\ms^{\vx_{\text{\normalfont{B}}}})^{-1}
       \end{pmatrix}\\
       &=\begin{pmatrix}
            \mathbf{0}_{l\times k_{\text{\normalfont{A}}}} & (\ms^\vy\circ \ms^{\vx_{\text{\normalfont{B}}}})(\ms^{\vx_{\text{\normalfont{B}}}}\circ\ms^{\vx_{\text{\normalfont{B}}}})^{-1}
        \end{pmatrix},
    \end{align*}
    and so, in the same vain as in the filter-based result, we end up with
    \begin{equation*}
        \mathbf{K}_{\text{\ns}|\vx_{\text{\normalfont{B}}}}(t,\vx)=\lim_{\mathrm{Var}(\vx_{\text{\normalfont{B}}}(t))\to+\infty}\mathbf{K}_{\text{\ns}}(t,\vx)=\mathbf{0}_{l\times k}.
    \end{equation*}
    Applying now these results to \eqref{eq:ancillary_filter_cgns1} and \eqref{eq:ancillary_smoother_cgns1}, we retrieve
    \begin{align*}
        \rmd\boldsymbol{\mu}_{\text{\nf}|\vx_{\text{\normalfont{B}}}}(t)&=(\ml^\vy\boldsymbol{\mu}_{\text{\nf}|\vx_{\text{\normalfont{B}}}}+\vf^\vy)\rmd t,\\
        \smooth{\rmd\boldsymbol{\mu}_{\text{\ns}|\vx_{\text{\normalfont{B}}}}}(t)&=-\big(\ml^\vy\boldsymbol{\mu}_{\text{\ns}|\vx_{\text{\normalfont{B}}}}+\vf^\vy-\mathbf{B}_{|\vx_{\text{\normalfont{B}}}}\mathbf{R}_{\text{\nf}|\vx_{\text{\normalfont{B}}}}^{-1}(\boldsymbol{\mu}_{\text{\nf}|\vx_{\text{\normalfont{B}}}}-\boldsymbol{\mu}_{\text{\ns}|\vx_{\text{\normalfont{B}}}})\big)\rmd t.
    \end{align*}
    Then, as in the proof of Theorem \ref{thm:nilcausality}, we have
    \begin{equation} \label{eq:diff_filter_smoother_cond}
        \smooth{\rmd (\boldsymbol{\mu}_{\text{\ns}|\vx_{\text{\normalfont{B}}}}-\boldsymbol{\mu}_{\text{\nf}|\vx_{\text{\normalfont{B}}}})}(t)=-(\ml^\vy+\mathbf{B}_{|\vx_{\text{\normalfont{B}}}}\mathbf{R}_{\text{\nf}|\vx_{\text{\normalfont{B}}}}^{-1})(\boldsymbol{\mu}_{\text{\ns}|\vx_{\text{\normalfont{B}}}}-\boldsymbol{\mu}_{\text{\nf}|\vx_{\text{\normalfont{B}}}})\rmd t, \quad T\geq t\geq 0,
    \end{equation}
    and since $\vm{\ns}(T)=\vm{\nf}(T)$, which by the continuous dependence of the filter and smoother means on their initial and terminal conditions, respectively, translates to
    \begin{equation*}
        \boldsymbol{\mu}_{\text{\ns}|\vx_{\text{\normalfont{B}}}}(T)=\boldsymbol{\mu}_{\text{\nf}|\vx_{\text{\normalfont{B}}}}(T),
    \end{equation*}
    then due to this, the linearity of \eqref{eq:diff_filter_smoother_cond}, and its uniqueness of solution, we recover
    \begin{equation*}
        \boldsymbol{\mu}_{\text{\ns}|\vx_{\text{\normalfont{B}}}}(t)=\boldsymbol{\mu}_{\text{\nf}|\vx_{\text{\normalfont{B}}}}(t), \quad t\in[0,T].
    \end{equation*}
    As already mentioned, this has the immediate consequence that
    \begin{equation*}
        \mathbf{R}_{\text{\ns}|\vx_{\text{\normalfont{B}}}}(t)=\mathbf{R}_{\text{\nf}|\vx_{\text{\normalfont{B}}}}(t), \quad t\in[0,T],
    \end{equation*}
    which yields the desired result of
    \begin{equation*}
        p_t^{\text{\ns}|\vx_{\text{\normalfont{B}}}}(\vy|\vx_{\text{\normalfont{A}}})=p_t^{\text{\nf}|\vx_{\text{\normalfont{B}}}}(\vy|\vx_{\text{\normalfont{A}}}), \quad t\in[0,T],
    \end{equation*}
    which in turn, by \eqref{eq:RE_filter_smoother_general}, establishes $\big(\vy(t)\ \cancel{\rightarrow}\ \vx_{\text{\normalfont{A}}}\big)\big|\vx_{\text{\normalfont{B}}}$ for each $t\in[0,T]$.
\end{proof}

\begin{rem}[Analytical Meaning of $\mathrm{Var}(\vx_{\text{\normalfont{B}}}(t))\to+\infty$]
When $(\ms^{\vx_{\text{A}}} \circ \ms^{\vx_{\text{B}}}) \equiv \mathbf{0}_{k_{\text{A}} \times k_{\text{B}}}$, the interpretation of ``assigning infinite uncertainty to $\vx_{\text{B}}$'s marginal likelihood'' (Section~\ref{sec:ACI_Generalization}) becomes straightforward. In this case, $\mathrm{Var}(\vx_{\text{B}}(t)) \to +\infty$ simply corresponds to $(\ms^{\vx_{\text{B}}} \circ \ms^{\vx_{\text{B}}})^{-1} \to \mathbf{0}_{k_{\text{B}} \times k_{\text{B}}}$ for all $t$ and $\vx$, since $(\ms^\vx \circ \ms^\vx)$ becomes block-diagonal. For general turbulent systems or when this condition fails, the interpretation of this limiting procedure requires more care. Here, $\mathrm{Var}(\vx_{\text{B}}(t)) \to +\infty$ demands rigorous analysis of the structure of $\vx_{\text{B}}$'s marginal likelihood and its impact on the Kalman gain operators. Only through such analysis can we properly nullify $\vx_{\text{B}}$'s influence when testing for conditional assimilative causal links $\big(\vy(t) \rightarrow \vx_{\text{A}}\big) \big| \vx_{\text{B}}$. We defer this detailed investigation to future work.
\end{rem}

\begin{rem}[Special Case: ACI Framework and Reduced CGNS Dynamics] \label{rem:reduceddynamics}
    When $(\ms^{\vx_{\text{A}}} \circ \ms^{\vx_{\text{B}}}) \equiv \mathbf{0}_{k_{\text{A}} \times k_{\text{B}}}$ in a CGNS, implementing the condition within the generalized ACI framework $(\ms^{\vx_{\text{B}}} \circ \ms^{\vx_{\text{B}}})^{-1} \to \mathbf{0}_{k_{\text{B}} \times k_{\text{B}}}$ (Section~\ref{sec:ACI_Generalization}) leads to an equivalent reduced system. In this reduced system, the state estimation of $\vyt$ becomes governed by the CGNS defined in \eqref{eq:CGNS_cond1} and \eqref{eq:CGNS_cond3}, where $(\vx_{\text{A}},\vy)$ form the state variables, with $\vx_{\text{B}}$ being reduced to a deterministic forcing term defined by its observed values (analogous to a control or input term).
\end{rem}

\section{Numerical Studies: ACI Performance in Nonlinear Systems with Intermittency, Regime Switching and Extreme Events}\label{Sec:Numerics}
This section includes three numerical studies that exploit the ACI framework to study complex dynamical systems with intermittency, non-Gaussian features, regime switching, and extreme events. All test cases employ CGNS models, leveraging their analytical tractability while capturing these nonlinear phenomena. Additionally, unless otherwise specified, the efficient approximation in \eqref{eq:objective_CIR_length_efficient} is used to compute the objective (C)CIR lengths.

\subsection{A Nonlinear Dyad Model with Intermittent Extreme Events}\label{Sec:Dyad}
For completeness, we restate the nonlinear dyad model described in the main text (in differential form):
\begin{subequations}\label{eq:Dyad_model_SI}
\begin{align}
\d x &= (-d_x x + \gamma xy + f_x)\d t + \sigma_x\rmd W_x\label{eq:Dyad_model_x_SI}\\
\d y &= (-d_y y - \gamma x^2 + f_y)\d t + \sigma_y \rmd W_y.\label{eq:Dyad_model_y_SI}
\end{align}
\end{subequations}
This is a reduced-order conceptual model for atmospheric variability. It has been used to analyze the effects of various coarse-grained procedures on processes exhibiting intermittency, large-scale bifurcations, and microscale phase transitions. It is defined by an energy-conserving condition on its quadratic nonlinearities \cite{majda2012physics}. The coupling parameter $\gamma > 0$ plays a crucial role by ensuring significant positive $y$ values ($y>d_x/\gamma$) trigger extreme events in $x$.

Supplementary Figure \ref{Fig:dyad_interaction_fig_1} presents the data assimilation results for the dyad model in \eqref{eq:Dyad_model_SI}. The key distinction between the filter and smoother distributions in estimating $y$ occurs prior to extreme events in $x$. This behavior is expected since the filter, operating without knowledge of future observations, cannot fully anticipate the triggering mechanism in $y$. In contrast, the smoother benefits from future extreme event data, enabling more accurate state estimation of y with reduced uncertainty.

\begin{figure}[!ht]%
\centering
\includegraphics[width=1\textwidth]{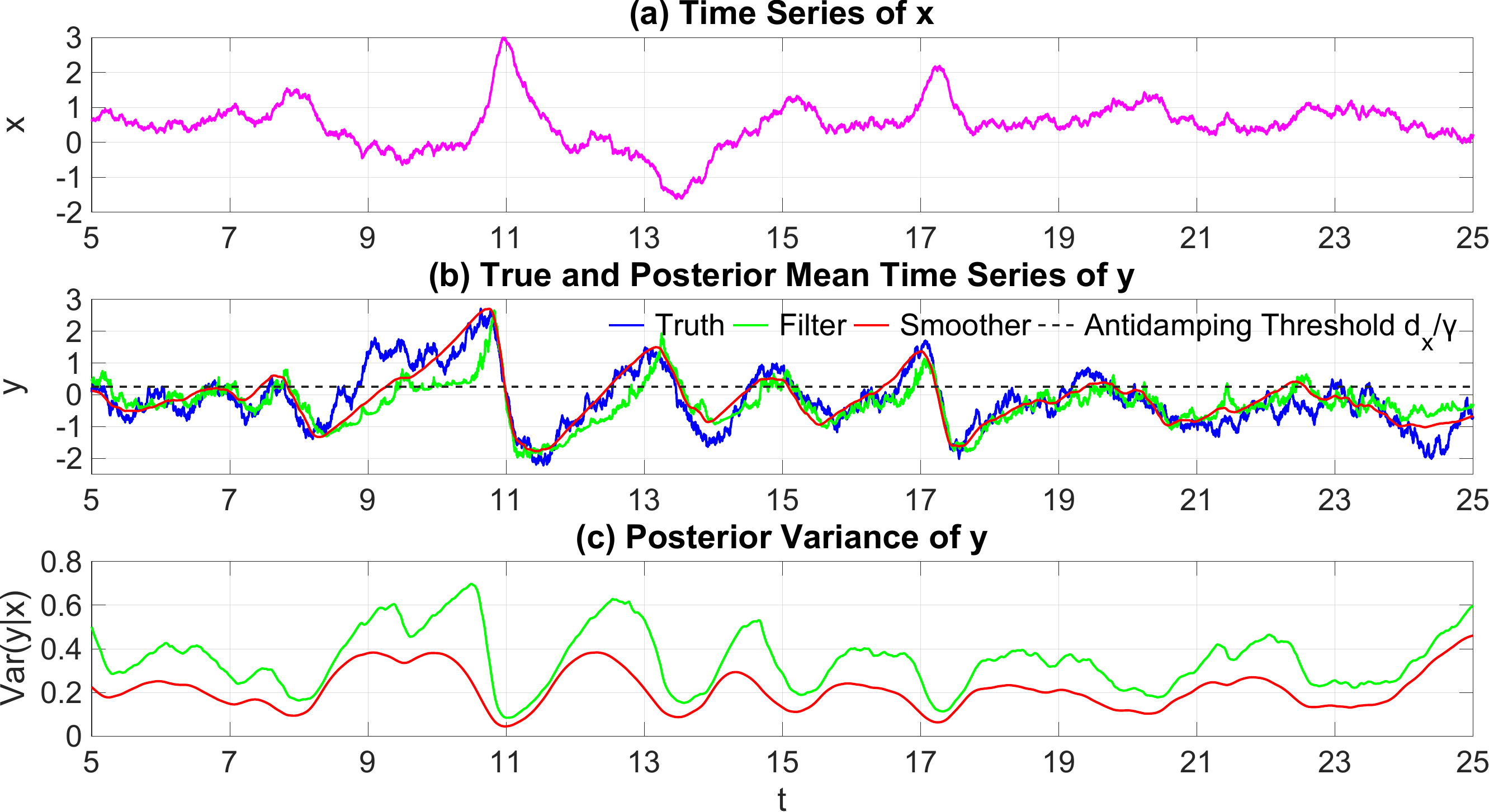}
\caption{Data assimilation of the dyad model \eqref{eq:Dyad_model_SI}. Panel (a): A single realization of the observed variable $x$. Panel (b): The true hidden signal $y$ (blue) alongside the posterior mean estimates from filtering (green) and smoothing (red), where the smoother is the complete smoother using all the information in future. The dashed line marks the anti-damping threshold, above which the net damping $-d_x + \gamma y$ in \eqref{eq:Dyad_model_x_SI} becomes positive. Panel (c): Posterior variance of the filtered and smoothed estimates of $y$.}
\label{Fig:dyad_interaction_fig_1}
\end{figure}

Supplementary Figure \ref{Fig:dyad_interaction_fig_2} displays the ACI and CIR analyses. Several important patterns emerge:
\begin{itemize}
  \item First, the ACI value (Panel (d)) reaches its maximum during $y$'s strongest anti-damping phase, corresponding to $y$'s peak instantaneous influence on $x$.
  \item Second, the subjective CIR attains its highest values (shown by the deep red shading in Panel (b)) slightly before $y>d_x/\gamma$ (initiating an extreme event) and for small $\varepsilon$ thresholds (indicating a longer-range influence). This reveals that extreme events develop gradually, with triggering conditions established well in advance. Notably, the objective CIR's temporal extent does not reach the actual peak of extreme events. This finding mirrors the decorrelation time (the integration of the autocorrelation function) in complex dynamical systems, where the true causal influence often persists weakly beyond the formal timescale indicated by the objective function.
  
  \item Third, and most significantly, the objective CIR shortens as the system approaches $y$'s positive peak (corresponding to $x$'s extreme event buildup phase). At this stage, the filter can reliably detect the emerging pattern without requiring future information. This transition naturally partitions the time series at each extreme event, marking distinct dynamical regimes: a build-up phase with long-range dependence (where future information improves estimation) and an event phase where the trajectory becomes locally predictable. The short CIR also persists during $y$'s demise, where $x$ is the driving factor behind the system dynamics (high signal-to-noise ratio) and controls $y$ via $-\gamma x^2$.

  \item Finally, Panel (c) showcases the temporal evolution of the objective CIR for $y(t)\rightarrow x$, calculated using both the definition in \eqref{eq:objective_CIR_length} and its approximation in \eqref{eq:objective_CIR_length_efficient}. This confirms the results of Theorem \ref{thm:obj_subj_CIR_connection} and \eqref{eq:objective_CIR_length_efficient}, and clearly demonstrates the high skill that this computationally efficient approximation achieves while circumventing the expensive quadrature entailed by the $\varepsilon$-integrals of the subjective CIRs in the definition. Significantly, it captures the exact profile of the exact definition. This is true both during the quiescent phases of the signal, but importantly also at the onset of the extreme events generated by $y$.
\end{itemize}
These results demonstrate that extreme events in this system are not sudden occurrences, but rather the outcome of gradually evolving conditions. The triggering mechanism begins significantly earlier than the actual event, with effects that propagate both forward and backward in time, as evidenced by the CIR patterns.

\begin{figure}[!ht]%
\centering
\includegraphics[width=1\textwidth]{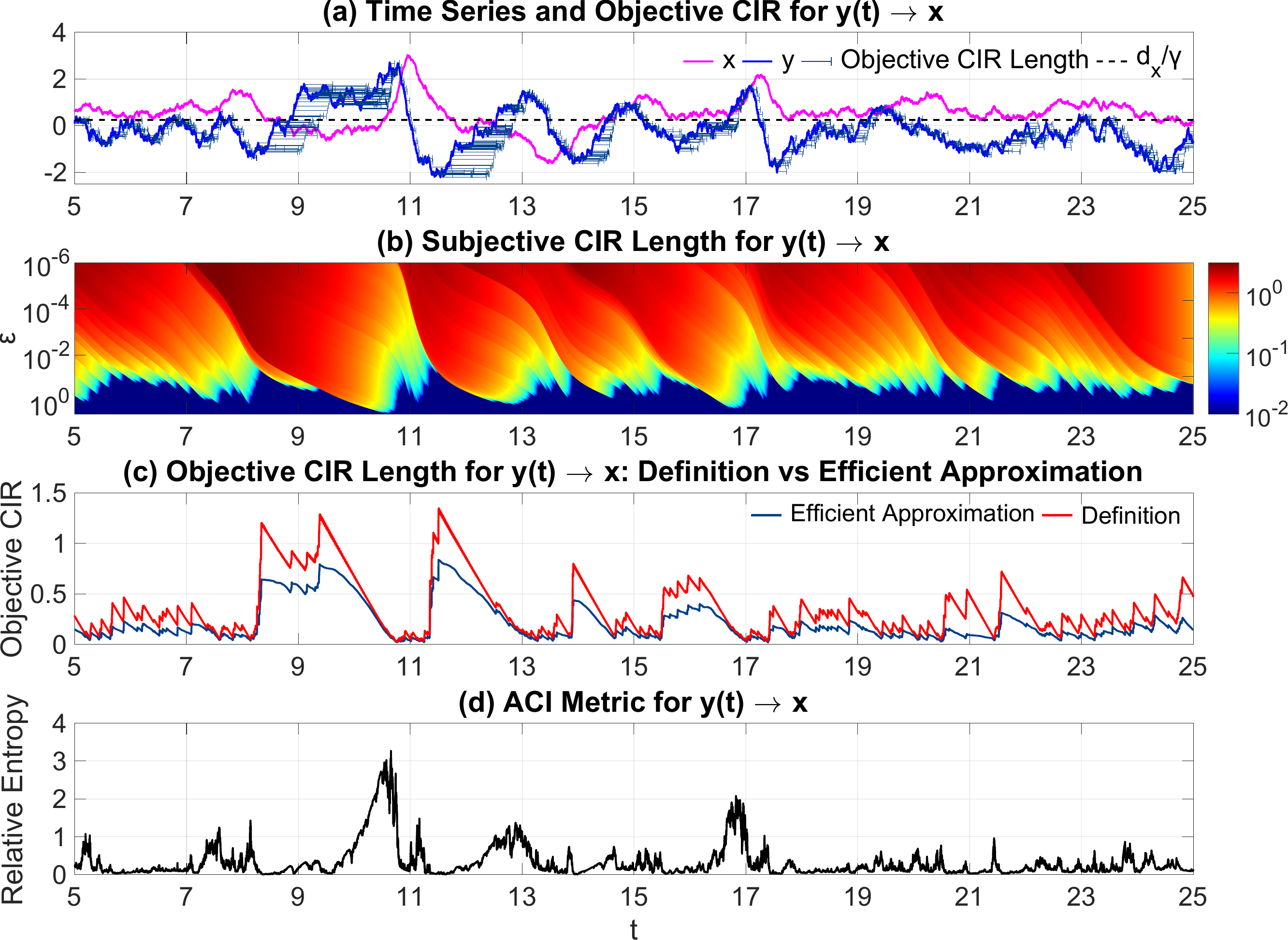}
\caption{ACI values and CIRs for the nonlinear dyad model \eqref{eq:Dyad_model_SI} from $y$ to $x$ as functions of time. Panel (a): Time series of $x$ (magenta) and $y$ (blue), with the objective CIR depicted as whiskers extending forward in time from each $y(t)$. The dashed horizontal line marks the anti-damping threshold $d_x/\gamma$. Panel (b): Subjective CIR (shaded region, logarithmically scaled) as a function of the threshold $\varepsilon$ (logarithmic, reversed $y$-axis). Panel (c): Objective CIRs from $y$ to $x$ over time, calculated using both the definition \eqref{eq:objective_CIR_length} (in red) and the computationally efficient approximation \eqref{eq:objective_CIR_length_efficient} (in navy blue), corresponding to the whiskers in Panel (a). Panel (d): ACI values from $y$ to $x$ over time.}
\label{Fig:dyad_interaction_fig_2}
\end{figure}

\subsection{A Noisy Predator--Prey Model}

The predator--prey model (also known as the Lotka-Volterra model) is fundamentally important across scientific disciplines as it captures the universal dynamics of interacting populations through simple yet powerful mathematics  \cite{cooke1981mathematical}. Originating in ecology to explain cyclical fluctuations between species like lynx and hares, its core principles have been adapted to model diverse systems, from disease spread in epidemiology to competition in economics and even chemical oscillations. The nonlinear feedback mechanisms in the model provide crucial insights into understanding stability, resilience, and emergent patterns of many natural and scientific problems.

Let us consider a noisy version of the predator--prey model:
\begin{subequations} \label{eq:predator_prey}
\begin{align}
    \d x &= (\beta xy -\alpha x)\d t + \sigma_x\rmd W_x,\label{eq:predator} \\
    \d y &= (\gamma y - \delta xy)\d t + \sigma_y\rmd W_y, \label{eq:prey}
\end{align}
\end{subequations}
In \eqref{eq:predator_prey}, $x$ and $y$ represent the population densities of a predator species and its prey, respectively, with their time derivatives ($\mathrm{d}x/\mathrm{d}t$ and $\mathrm{d}y/\mathrm{d}t$) describing their instantaneous population growth rates. The predator dynamics are governed by two parameters: $\alpha$, the predator's natural death rate, and $\beta$, which quantifies how prey availability enhances predator growth. The prey dynamics depend on $\gamma$, which is the maximum intrinsic growth rate of the prey, and $\delta$, which captures the negative impact of predators on the prey population. To ensure results are biologically realistic in finite-length simulations, small additive noise terms ($\sigma_x$ and $\sigma_y$) are included, preventing populations from reaching nonphysical negative values. The parameter values in the study here are as follows:
\begin{gather*}
\alpha = 0.4,\quad \beta = 0.1,\quad \sigma_x = 0.3,\quad \gamma = 1.1,\quad \delta = 0.4, \quad \sigma_y = 0.3.
\end{gather*}
Since both $\beta$ and $\delta$ are positive, larger prey population $y$ enhances the anti-damping effect in the $x$ equation, while larger predator population $x$ intensifies the damping in $y$. The resulting coupled variations in $x$ and $y$ produce intermittent phase alternations in the system dynamics. Note that since \eqref{eq:predator_prey} is conditionally linear both in $x$ and $y$, it is a bidirectional CGNS, meaning both posterior distributions, $x|y$ and $y|x$, are Gaussian.

Supplementary Figure \ref{Fig:predator_prey_fig_1} displays the data assimilation results. Panels (a)--(c) and (d)--(f) show the results by observing $y$ (recovering $x$) and observing $x$ (recovering $y$), respectively. The state estimation is more informative when the observed signal has a large value, corresponding to when the estimated variable induces an extreme event in the former's evolution. Additionally, the uncertainty reduction in the smoother related to the filter is more significant when the predator $x$ is the observed variable, which accounts for the choice of the coupling, quadratic feedback parameter values; $\delta>\beta$.

\begin{figure}[!ht]%
\centering
\includegraphics[width=1\textwidth]{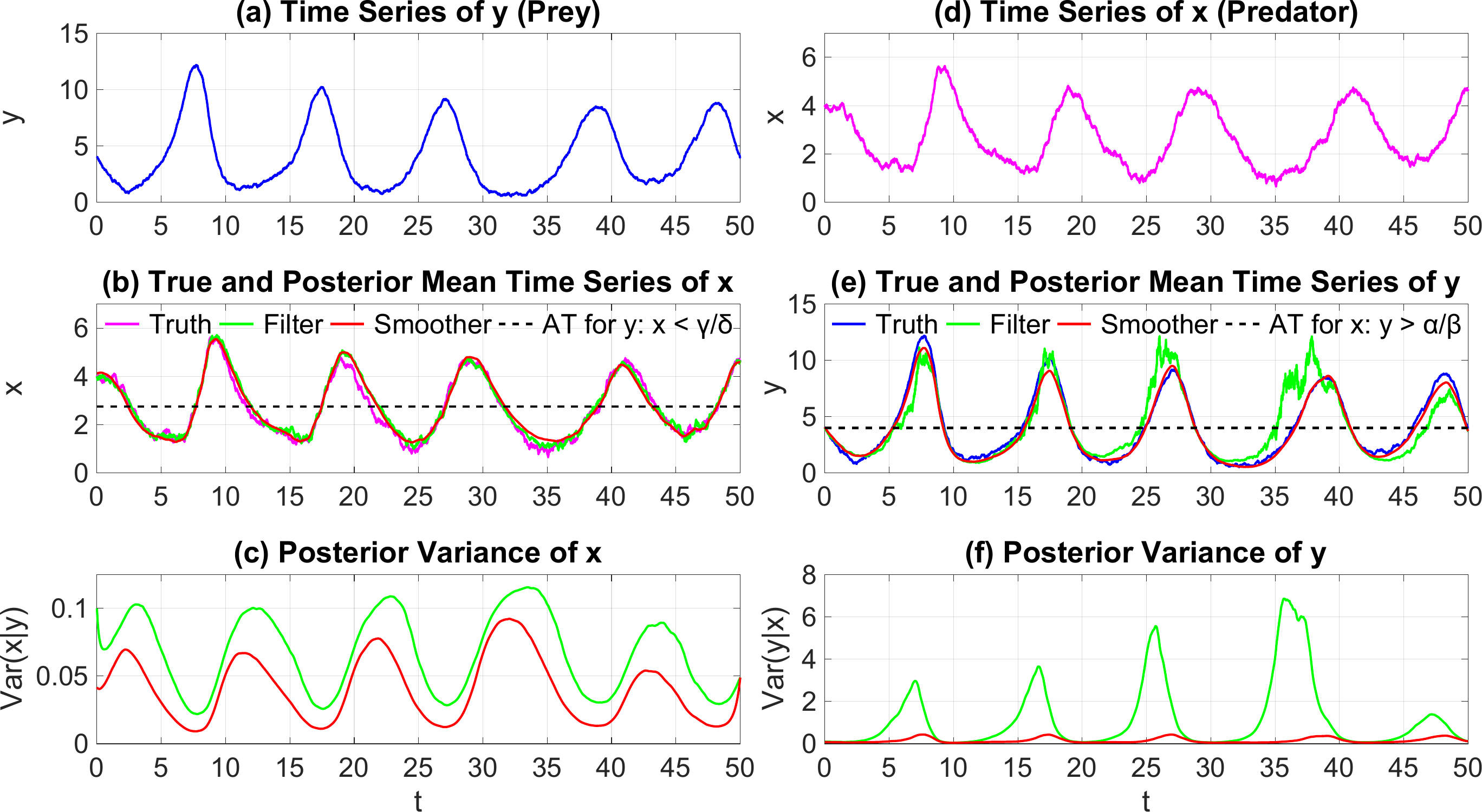}
\caption{Data assimilation of the noisy predator--prey model \eqref{eq:predator_prey}. Panels (a)--(c) and (d)--(f) show the results by observing $y$ (recovering $x$) and observing $x$ (recovering $y$), respectively. The dashed lines in Panels (b) and (e) indicate the anti-damping threshold values.}
\label{Fig:predator_prey_fig_1}
\end{figure}

Supplementary Figure \ref{Fig:predator_prey_fig_2} presents the ACI value (instantaneous causal strength) and CIR (influence duration) between predator ($x$) and prey ($y$) populations. The interaction exhibits two distinct regimes:
\begin{enumerate}
  \item Predator-to-Prey Causality ($x$-to-$y$; Panels (a)--(b)):
  \begin{itemize}
    \item When the predator population $x$ is below the threshold $\gamma/\delta$, the net damping in the prey equation becomes positive (anti-damping phase), allowing $y$ to grow, though at a progressively slower rate as $x$ increases.
    \item Once $x$ exceeds $\gamma/\delta$, strong positive damping emerges causing $y$ to decline. This demonstrates how predator growth first suppresses then reverses prey population trends.
    \item Notably, when $y$ is small, $x$ shows weak instantaneous influence (low ACI values) but exhibits extended CIRs, which reveals how predator reduction leads to delayed prey resurgence.
  \end{itemize}
  \item Prey-to-Predator Causality ($y$-to-$x$; Panels (c)--(d)):
  \begin{itemize}
    \item The prey population $y$ acts as an anti-damping term for predators during its abundance (when $y>\alpha/\beta$), directly driving $x$ growth with persistent temporal effects (long CIRs).
    \item The subsequent prey collapse (sharp $y$ decrease) when the predator population reaches the critical quantity ($x>\gamma/\delta$) reflects predator overconsumption rather than causing predator dynamics (causal link reversal), which is consistent with the $x$-to-$y$ causality shown in Panels (a)--(b).
    \item Due to the stronger coupling feedback in $y$, $\delta>\beta$, the ACI metric for $y(t)\rightarrow x$ is stronger. Furthermore, the damping effect that $x$'s growth induces has a more immediate impact on the prey population $y$, when compared to the more delayed effect that prey prosperity $y$ has on the predators $x$.
  \end{itemize}
\end{enumerate}
Remarkably, the causal relationship between predator ($x$) and prey ($y$) is bidirectional during specific phases. Prior to the prey population peak, while $y$ is growing but $x$ remains below the anti-damping threshold, the variables exhibit strong mutual interaction: the increasing prey population $y$ drives predator growth ($x$), while simultaneously, the rising predator population $x$ suppresses (but does not yet reverse) the prey's growth rate. This creates a transient period of coupled positive feedback ($y \to x$) and negative feedback ($x \to y$).

\begin{figure}[!ht]%
\centering
\includegraphics[width=1\textwidth]{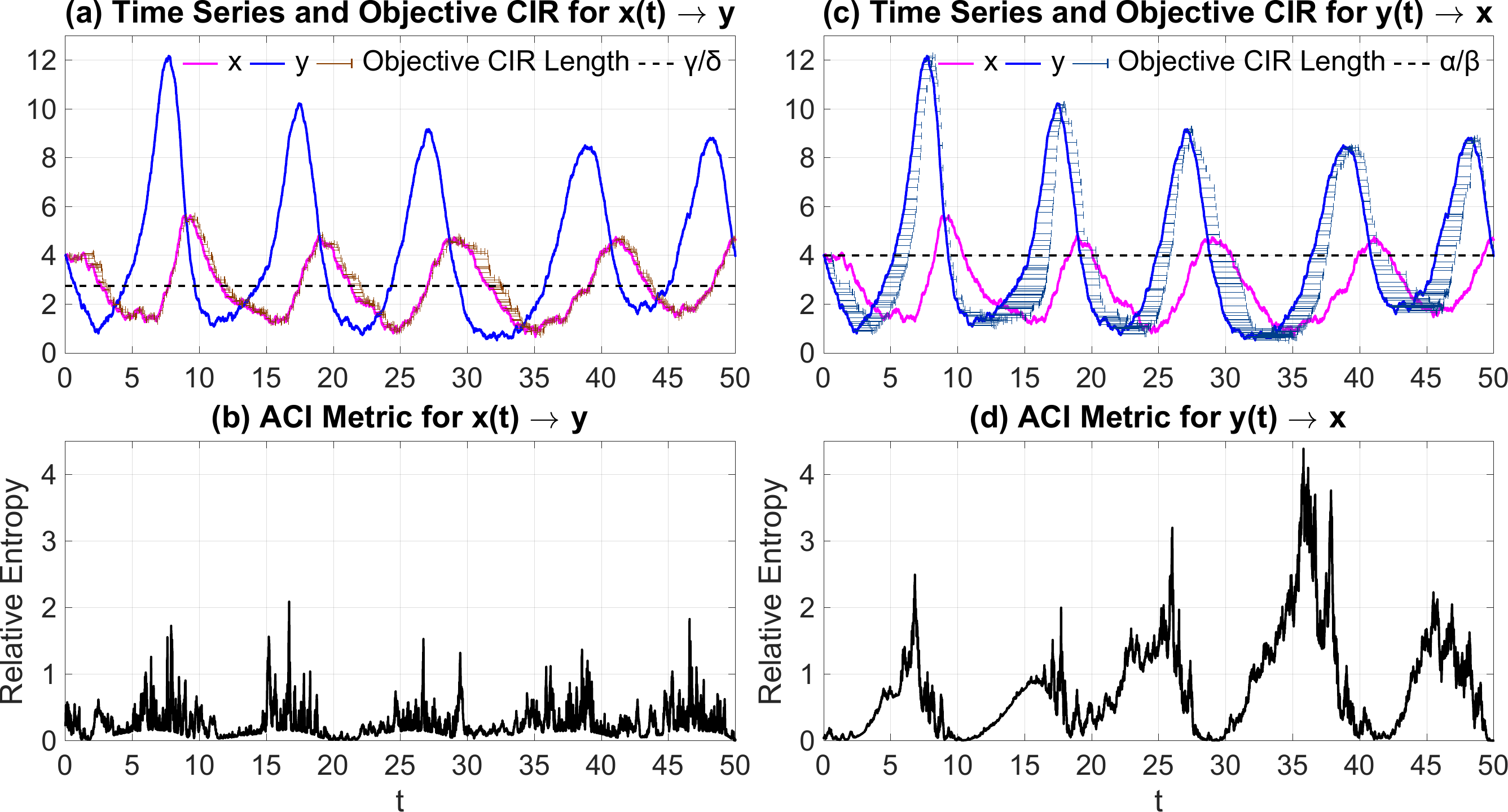}
\caption{ACI values and CIRs for the noisy predator--prey model \eqref{eq:predator_prey}. Only the objective CIRs are shown. Panels (a)--(b) and (c)--(d) show the results from $x(t)$ to $y$ and from $y(t)$ to $x$, respectively. The dashed lines in Panels (a) and (b) indicate the anti-damping threshold values in the equations of $y$ and $x$, respectively.}
\label{Fig:predator_prey_fig_2}
\end{figure}

\subsection{A Stochastic Model Capturing the El Ni\~no--Southern Oscillation (ENSO) Diversity}

\subsubsection{Model Details}

As was described in the main text, although few models can accurately capture ENSO diversity, a recently developed stochastic conceptual model successfully reproduces its diverse behaviors and non-Gaussian statistics \cite{chen2022multiscale}. This model has been highlighted in a recent review \cite{vialard2025nino}, making it a suitable testbed for studying El Ni\~no diversity. Using the defined abbreviations from the main text, the model consists of six state variables: ocean zonal current in the CP ($u$), WP thermocline depth ($h_W$), CP SST ($T_C$), EP SST ($T_E$), atmospheric winds ($\tau$, intraseasonal), and background Walker circulation ($I$, decadal). The variables ($u$, $h_W$, $T_C$, and $T_E$) operate on interannual timescales and model the anomalies away from the corresponding climatology. As a nonlinear system with state-dependent noise, the model generates extreme events and intermittency. Note that the decadal variable represents the strength of the Walker circulation. The two SST variables ($T_C$ and $T_E$) allow reconstruction of spatiotemporal SST patterns across the equatorial Pacific, providing an intuitive way to identify different ENSO event types. The model reads:
{\allowdisplaybreaks
\begin{subequations}\label{eq:conceptual_model}
\begin{align}
    \d u&=\big(-ru-\delta_u\frac{T_C+T_E}{2}+\beta_u(I)\tau\big)\rmd t+\sigma_u\rmd W_u, \label{eq:adv}\\
    \d h_W&=\big(-rh_W-\delta_h\frac{T_C+T_E}{2}+\beta_h(I)\tau\big)\rmd t+\sigma_h\rmd W_h, \label{eq:thermocline}\\
    \d T_C&=\big(\left(r_C-c_1(t,T_C)\right)T_C+\zeta_CT_E+\gamma_Ch_W+\sigma(I)u+C_u+\beta_C(I)\tau\big)\rmd t+\sigma_C\rmd W_C,\label{eq:sstac}\\
    \d T_E&= \big(\left(r_E-c_2(t)\right)T_E-\zeta_ET_C+\gamma_Eh_W+\beta_E(I)\tau\big)\rmd t+\sigma_E\rmd W_E,\label{eq:sstae}\\
    \d \tau&=-d_\tau\tau\rmd t+\sigma_\tau(t,T_C)\rmd W_\tau, \label{eq:wind}\\
    \d I&=-\lambda(I-m)\rmd t+\sigma_I(I)\rmd W_I. \label{eq:walker}
\end{align}
\end{subequations}
}
While the full model details are available in \cite{chen2022multiscale}, its basic mechanism can be summarized as follows. EP El Ni\~no events are typically associated with a strong thermocline buildup in the WP, whereas CP EN and La Ni\~na events are primarily triggered by advective processes. In the latter case, when $\sigma(I) \propto I$ is larger, indicating a strengthening Walker circulation, the advection term $u$ becomes dynamically significant in driving $T_C$. Unlike the standard discharge--recharge oscillator model, which couples $h_W$ and $T_E$, the $T_C$ variable acts as a transitional component that is expected to have a more direct influence on $T_E$.

\subsubsection{Additional Results Using Model-Generated Data}

Supplementary Figure \ref{Fig:ENSO_T_E_fig} shows conditional ACI values and CCIRs for model-simulated EP events (target variable: $T_E$). We note that the variables not appearing in the title of each column are resolved in the corresponding causal relationship through the conditional ACI framework (similarly for Supplementary Figure \ref{Fig:ENSO_T_C_fig}). Among the three potential causal variables, $T_C$ has the strongest ACI value, peaking slightly before $T_E$ during EP El Ni\~no events (red positive $T_E$ anomalies). This timing makes physical sense because SSTs in these regions are strongly coupled: during El Ni\~no, warm water moves from CP to EP, explaining why $T_C$ leads. The $\tau$-$T_E$ ACI value is noisier due to $\tau$'s short-term variability, but still shows $\tau$'s clear impact on $T_E$. Winds both push warm water and directly affect SSTs quickly. While $h_W$ does influence $T_E$, its ACI value is weaker than $T_C$ or $\tau$. According to discharge--recharge theory \cite{jin1997equatorial}, $h_W$ and $T_E$ form an oscillator, but in models with CP resolution, $h_W$ affects $T_E$ indirectly: first changing $T_C$, which then affects $T_E$ (WP$\rightarrow$CP$\rightarrow$EP). This explains why $h_W$'s ACI value peaks months before EP El Ni\~no maxima, specifically during periods where $h_W$ starts to increase thus initiating upwelling processes that generate extreme warming centers in the EP. The CIRs further corroborate these underlying mechanisms in the temporal sense: $T_C$ has the longest influence, $h_W$'s more indirect role gives medium-length CIRs, and $\tau$'s shortest-term variability leads to the briefest impacts.

\begin{figure}[!ht]%
\centering
\includegraphics[width=1\textwidth]{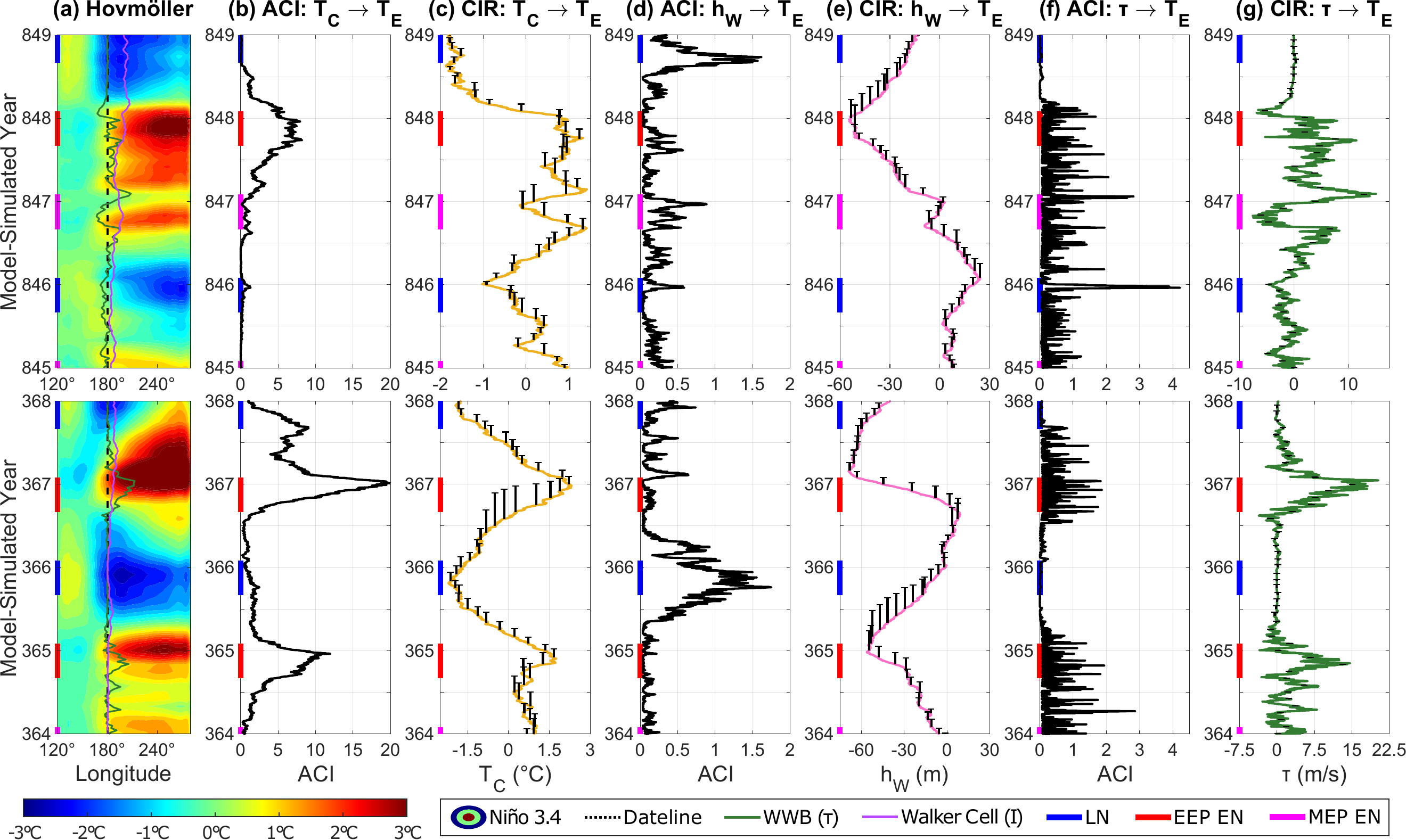}
\caption{Conditional ACI values and CCIRs for some model-simulated EP events (EEP EN and MEP EN stand for extreme and moderate EP El Ni\~no, respectively, and LN for La Ni\~na); $t\in[364,368]$ and $t\in[845,849]$. Panel (a): Hovm\"{o}ller diagram of the SST anomalies in the equatorial Pacific reconstructed from $T_C$ and $T_E$  via a spatiotemporal multivariate regression technique \cite{chen2022multiscale}. The wind profile $\tau$, plotted above and centered around the dateline, is superimposed along with the decadal variable $I$, whose time series is also positioned relative to the dateline: Positive $\tau$ (indicating westerly winds) appear to the right of the dateline, while negative values appear to the left, and a weakening Walker circulation appears close to the dateline, while a stronger $I$ appears further to its right. Panel (b): Conditional ACI values from $T_C$ to $T_E$. Panel (c): Time series of $T_C$ and the corresponding CCIRs from $T_C$ to $T_E$ (depicted as whiskers at each month, similar to Panel (a) of Supplementary Figure \ref{Fig:dyad_interaction_fig_2}). Panels (d)--(e) and (f)--(g): Similar to (a)--(b) but for $h_W$ and $\tau$. The two rows represent two distinct periods. The Walker circulation in both cases is nearly zero, which is the usual condition to trigger EP events.}
\label{Fig:ENSO_T_E_fig}
\end{figure}

Supplementary Figure \ref{Fig:ENSO_T_C_fig} shows conditional ACI values and CCIRs for model-simulated CP events (target variable: $T_C$). When the decadal variable $I$ is significant, $u$ contributes substantially to CP events through the enhanced zonal advective feedback ($\sigma(I) \propto I$) according to \eqref{eq:sstac}, peaking slightly before CP El Ni\~no events. The WP thermocline ($h_W$) also affects $T_C$ via positive feedback in \eqref{eq:sstac}, but peaks earlier than the CP event. While the $u$ and $h_W$ ACIs sometimes coincide due to state-space correlations, $h_W$'s CIR persists longer due to WP-CP information transfer delays. Wind ($\tau$) influences CP events on shorter intraseasonal timescales. Unlike EP events, CP events show balanced contributions from $u$, $h_W$, and $\tau$. The strong ACI value from $u$ to $T_C$ around $t=1985$ corresponds to the rapid transition from CP El Ni\~no to La Ni\~na, where the value of $u$ changes dramatically from positive to negative. Furthermore, during the second year of the multi-year CP La Ni\~na events (i.e., $t=1986$), all three ACIs drop suddenly, as the signals of all these three variables remain near zero. This corresponds to a typical discharge phase of ENSO.

\begin{figure}[!ht]%
\centering
\includegraphics[width=1\textwidth]{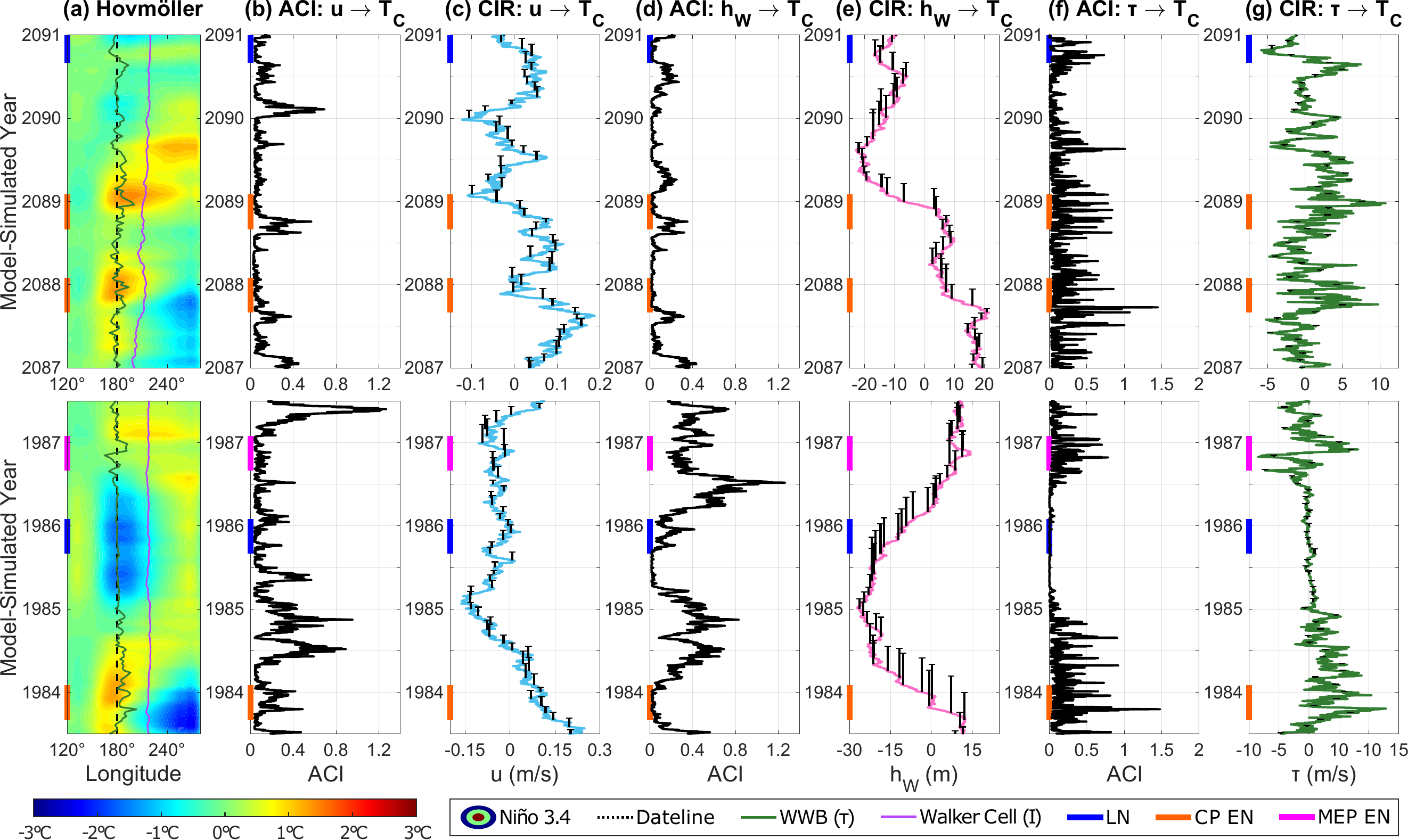}
\caption{Conditional ACI values and CCIRs for some model-simulated CP events (CP EN and MEP EN stand for CP and moderate EP El Ni\~no, respectively, and LN for La Ni\~na); $t\in[1983.5,1987.5]$ and $t\in[2087,2091]$. Panel (a): Same as Panel (a) of Figure \ref{Fig:ENSO_T_E_fig}. Panel (b): Conditional ACI values from $u$ to $T_C$. Panel (c): Time series of $u$ and the corresponding CCIRs from $u$ to $T_C$ (depicted as whiskers at each month, similar to Panel (a) of Supplementary Figure \ref{Fig:dyad_interaction_fig_2}). Panels (d)--(e) and (f)--(g): Similar to (a)--(b) but for $h_W$ and $\tau$. The Walker circulation in this case is strong, which is the usual condition to trigger CP events.}
\label{Fig:ENSO_T_C_fig}
\end{figure}

\subsubsection{ACI on Real-World ENSO Data}
In this subsection, we apply the ACI framework on real-world observational data to study the ENSO diversity through \eqref{eq:conceptual_model}.

The oceanic data used here, including the monthly SST, zonal current velocity, and thermocline depth (calculated from the potential temperature at the depth of the 20$^{\circ}$C isotherm) measurements, are sourced from the GODAS dataset (\url{https://www.psl.noaa.gov/data/gridded/data.godas.html}) \cite{behringer2004evaluation}. The present study solely uses oceanic anomalies, calculated by removing the monthly mean climatology over the whole analysis period. The data spans from January 1982 to December 2017, chosen to fall within the satellite era when observations are more reliable. The atmospheric wind data is obtained from the National Centers for Environmental Prediction-National Center for Atmospheric Research (NCEP-NCAR) reanalysis data set (\url{https://psl.noaa.gov/data/gridded/data.ncep.reanalysis.html}) \cite{kalnay2018ncep} over the same period.

The model used within ACI remains the stochastic conceptual model defined in \eqref{eq:adv}--\eqref{eq:walker}. However, because the observations are now drawn from the real world rather than from synthetic data, this study no longer has a perfect-model setup. Nevertheless, as shown in \cite{chen2022multiscale}, the conceptual model in \eqref{eq:adv}--\eqref{eq:walker} successfully reproduces key statistical features of observed ENSO variability and generates time series that are qualitatively consistent with real ENSO episodes. Hence, the model error is expected to be relatively small.

Daily data are available for the wind burst amplitudes $\tau$, whereas all other variables are monthly. To ensure the stability of the filter, smoother, and online-smoother algorithms required for the conditional ACI analysis and the computation of the CCIRs, we linearly interpolate all data onto a uniform temporal grid with a spacing of approximately 7 hours and 20 minutes. Linear interpolation has been shown to be more compatible with data assimilation than smoother interpolation methods \cite{harlim2011interpolating}. Since the oceanic variables primarily evolve on interannual timescales, this approximation is not expected to alter the data’s underlying dynamics. This procedure also provides an opportunity to test the robustness of the ACI framework with respect to potential numerical discretization errors.

\begin{figure}[!ht]%
\centering
\includegraphics[width=1\textwidth]{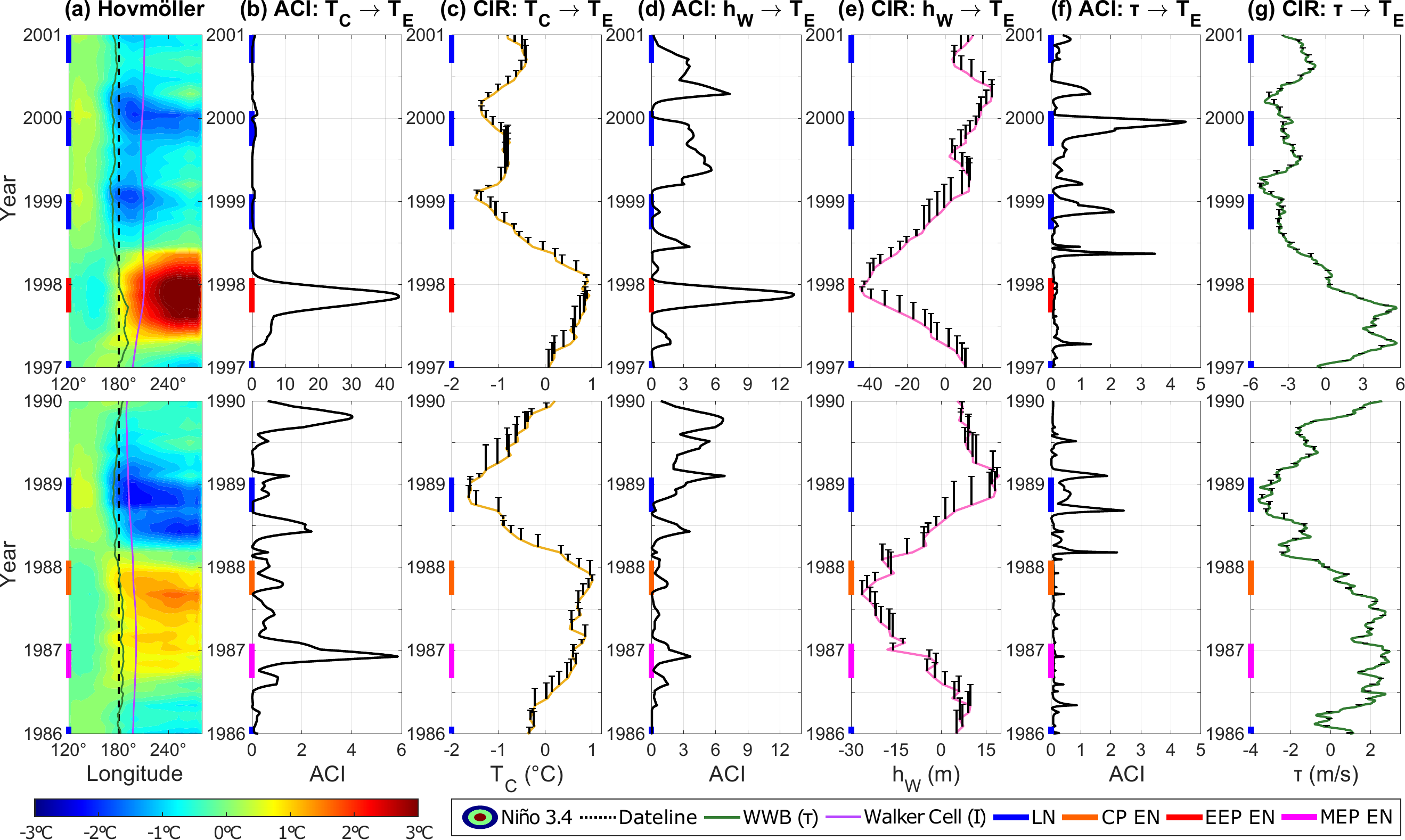}
\caption{Conditional ACI values and CCIRs for some real-world EP events; Similar to Supplementary Figure \ref{Fig:ENSO_T_E_fig} but the observations are collected from real-world data.}
\label{Fig:ENSO_real_obs_T_E_fig}
\end{figure}

\begin{figure}[!ht]%
\centering
\includegraphics[width=1\textwidth]{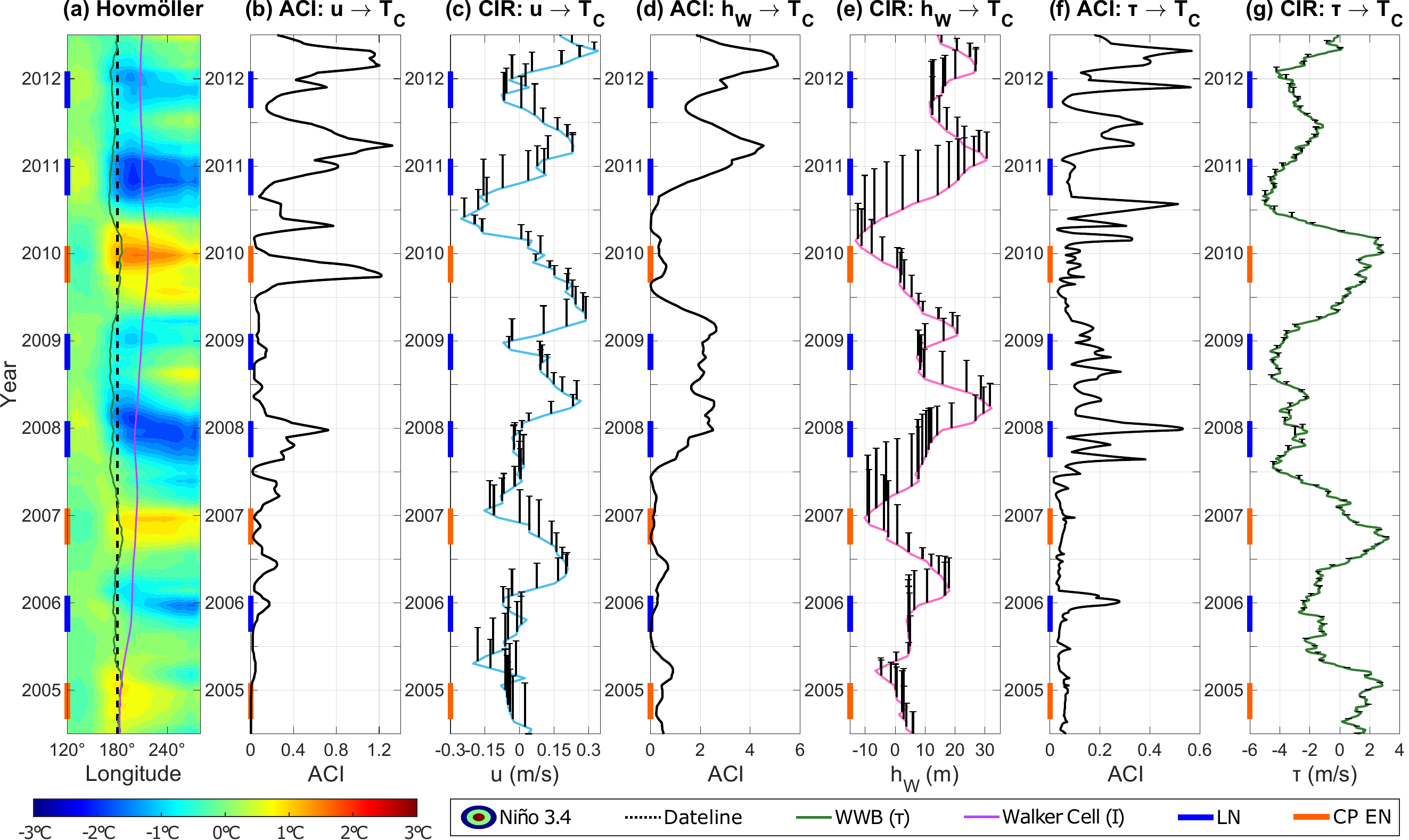}
\caption{Conditional ACI values and CCIRs for some real-world CP events; Similar to Supplementary Figure \ref{Fig:ENSO_T_C_fig} but the observations are collected from real-world data.}
\label{Fig:ENSO_real_obs_T_C_fig}
\end{figure}

Supplementary Figure \ref{Fig:ENSO_real_obs_T_E_fig} includes the historically strongest EP El Ni\~no event in 1998 and a moderately strong EP event in 1987, together with their subsequent La Ni\~na phases. Supplementary Figure \ref{Fig:ENSO_real_obs_T_C_fig} presents several CP events that occurred in the early 21st century. The results based on real-world observations show qualitatively similar conclusions to those in Supplementary Figures \ref{Fig:ENSO_T_E_fig}--\ref{Fig:ENSO_T_C_fig}, which use model-generated synthetic data.

Specifically, during EP El Ni\~no phases, the causal links $T_C \rightarrow T_E$, $h_W \rightarrow T_E$, and $\tau \rightarrow T_E$ are identified. In addition, the thermocline continues to play a key role during EP La Ni\~na events through the recharge--discharge mechanism, while there are no clear causal influences from $T_C$ and $\tau$ to $T_E$, consistent with the classical discharge process. Likewise, during CP El Ni\~no phases, $u \rightarrow T_C$ and $\tau \rightarrow T_C$ emerge as the two most significant causal relationships, while during CP La Ni\~na phases $h_W \rightarrow T_C$ is also detected. All of these findings are consistent with those obtained from the model-generated experiments.

The results presented here suggest that ACI performs reasonably well in the presence of noisy observations, moderate model error, and some numerical discretization effects. While more systematic studies are needed to fully quantify the influence of these factors, the results here provide some preliminary evidence that the ACI framework maintains a degree of robustness under these conditions.

Finally, it is worth emphasizing the distinction between ACI and many baseline causal inference methods for analyzing real-world ENSO datasets \cite{risser2025granger, runge2023causal, falasca2024data}. Most of the current approaches are purely data-driven and focus on identifying broad, time-averaged cause-and-effect relationships. Their findings are generally consistent with physical mechanisms, such as the causal influence of the thermocline depth on the SST and the discharge--recharge loop linking El Ni\~no and La Ni\~na. Yet, ACI provides a more refined and event-specific indicator that reveals how individual variables contribute to the causal structure of each ENSO episode, an aspect often missing from state-of-the-art techniques. ACI yields markedly different causal strengths and CIR patterns across events, providing valuable insight into instantaneous, time-localized cause-and-effect behavior. As a natural direction for future work, the ACI framework can be coupled with a range of ENSO models and expanded to include additional variables, thereby enhancing causal discovery with a particular focus on instantaneous relationships.

\end{appendices}

\renewcommand{\refname}{Supplementary Information References}

\putbib

\end{bibunit}


\begin{thebibliography}{10}

\bibitem{anderson2012localization}
Jeffrey~L Anderson.
\newblock Localization and sampling error correction in ensemble {K}alman filter data assimilation.
\newblock {\em Monthly Weather Review}, 140(7):2359--2371, 2012.

\bibitem{anderson2007scalable}
Jeffrey~L Anderson and Nancy Collins.
\newblock Scalable implementations of ensemble filter algorithms for data assimilation.
\newblock {\em Journal of Atmospheric and Oceanic Technology}, 24(8):1452--1463, 2007.

\bibitem{andreou2024adaptive}
Marios Andreou, Nan Chen, and Yingda Li.
\newblock {An Adaptive Online Smoother with Closed-Form Solutions and Information-Theoretic Lag Selection for Conditional Gaussian Nonlinear Systems}.
\newblock {\em arXiv preprint arXiv:2411.05870}, 2024.

\bibitem{angrist2009mostly}
Joshua~D Angrist and J{\"o}rn-Steffen Pischke.
\newblock {\em Mostly harmless econometrics: An empiricist's companion}.
\newblock Princeton university press, 2009.

\bibitem{barnett2009granger}
Lionel Barnett, Adam~B Barrett, and Anil~K Seth.
\newblock Granger causality and transfer entropy are equivalent for {G}aussian variables.
\newblock {\em Physical review letters}, 103(23):238701, 2009.

\bibitem{brand2023recent}
Jennie~E Brand, Xiang Zhou, and Yu~Xie.
\newblock Recent developments in causal inference and machine learning.
\newblock {\em Annual Review of Sociology}, 49(1):81--110, 2023.

\bibitem{cafaro2015causation}
Carlo Cafaro, Warren~M Lord, Jie Sun, and Erik~M Bollt.
\newblock Causation entropy from symbolic representations of dynamical systems.
\newblock {\em Chaos: An interdisciplinary journal of nonlinear science}, 25(4), 2015.

\bibitem{capotondi2015understanding}
Antonietta Capotondi, Andrew~T Wittenberg, Matthew Newman, Emanuele Di~Lorenzo, Jin-Yi Yu, Pascale Braconnot, Julia Cole, Boris Dewitte, Benjamin Giese, Eric Guilyardi, et~al.
\newblock Understanding {ENSO} diversity.
\newblock {\em Bulletin of the American Meteorological Society}, 96(6):921--938, 2015.

\bibitem{carrassi2017estimating}
Alberto Carrassi, Marc Bocquet, Alexis Hannart, and Michael Ghil.
\newblock Estimating model evidence using data assimilation.
\newblock {\em Quarterly Journal of the Royal Meteorological Society}, 143(703):866--880, 2017.

\bibitem{chen2023stochastic}
Nan Chen.
\newblock {\em Stochastic Methods for Modeling and Predicting Complex Dynamical Systems}.
\newblock Springer, 2023.

\bibitem{chen2022multiscale}
Nan Chen, Xianghui Fang, and Jin-Yi Yu.
\newblock A multiscale model for {E}l {N}i{\~n}o complexity.
\newblock {\em npj Climate and Atmospheric Science}, 5(1):16, 2022.

\bibitem{clarke2008introduction}
Allan~J Clarke.
\newblock {\em An introduction to the dynamics of {E}l {N}i{\~n}o and the Southern Oscillation}.
\newblock Elsevier, 2008.

\bibitem{cover1999elements}
Thomas~M Cover.
\newblock {\em Elements of information theory}.
\newblock John Wiley \& Sons, 1999.

\bibitem{d2014scalable}
Luisa D’Amore, Rossella Arcucci, Luisa Carracciuolo, and Almerico Murli.
\newblock A scalable approach for variational data assimilation.
\newblock {\em Journal of Scientific Computing}, 61(2):239--257, 2014.

\bibitem{evensen2003ensemble}
Geir Evensen.
\newblock The ensemble {K}alman filter: {T}heoretical formulation and practical implementation.
\newblock {\em Ocean dynamics}, 53(4):343--367, 2003.

\bibitem{evensen2022data}
Geir Evensen, Femke~C Vossepoel, and Peter~Jan Van~Leeuwen.
\newblock {\em Data assimilation fundamentals: A unified formulation of the state and parameter estimation problem}.
\newblock Springer Nature, 2022.

\bibitem{falasca2023causal}
Fabrizio Falasca, Pavel Perezhogin, and Laure Zanna.
\newblock Causal inference in spatiotemporal climate fields through linear response theory.
\newblock {\em arXiv preprint arXiv:2306.14433}, 2023.

\bibitem{fowler2013observation}
Alison Fowler and Peter Jan Van~Leeuwen.
\newblock Observation impact in data assimilation: the effect of non-{G}aussian observation error.
\newblock {\em Tellus A: Dynamic Meteorology and Oceanography}, 65(1):20035, 2013.

\bibitem{frenzel2007partial}
Stefan Frenzel and Bernd Pompe.
\newblock Partial mutual information for coupling analysis of multivariate time series.
\newblock {\em Physical review letters}, 99(20):204101, 2007.

\bibitem{graber2020dynamic}
Colin Graber and Alexander Schwing.
\newblock Dynamic neural relational inference for forecasting trajectories.
\newblock In {\em Proceedings of the IEEE/CVF Conference on Computer Vision and Pattern Recognition Workshops}, pages 1018--1019, 2020.

\bibitem{granger1969investigating}
Clive~WJ Granger.
\newblock Investigating causal relations by econometric models and cross-spectral methods.
\newblock {\em Econometrica: journal of the Econometric Society}, pages 424--438, 1969.

\bibitem{hannart2016dada}
Alexis Hannart, Alberto Carrassi, Marc Bocquet, Michael Ghil, Philippe Naveau, Manuel Pulido, Juan Ruiz, and Pierre Tandeo.
\newblock {DADA}: data assimilation for the detection and attribution of weather and climate-related events.
\newblock {\em Climatic Change}, 136:155--174, 2016.

\bibitem{hannart2016causal}
Alexis Hannart, J~Pearl, FEL Otto, P~Naveau, and M~Ghil.
\newblock Causal counterfactual theory for the attribution of weather and climate-related events.
\newblock {\em Bulletin of the American Meteorological Society}, 97(1):99--110, 2016.

\bibitem{jin1997equatorial}
Fei-Fei Jin.
\newblock An equatorial ocean recharge paradigm for {ENSO}. {P}art {I}: {C}onceptual model.
\newblock {\em Journal of the atmospheric sciences}, 54(7):811--829, 1997.

\bibitem{kocaoglu2017experimental}
Murat Kocaoglu, Karthikeyan Shanmugam, and Elias Bareinboim.
\newblock Experimental design for learning causal graphs with latent variables.
\newblock {\em Advances in Neural Information Processing Systems}, 30, 2017.

\bibitem{law2015data}
Kody Law, Andrew Stuart, and Kostas Zygalakis.
\newblock Data assimilation.
\newblock {\em Cham, Switzerland: Springer}, 214:52, 2015.

\bibitem{liang2005information}
X~San Liang and Richard Kleeman.
\newblock Information transfer between dynamical system components.
\newblock {\em Physical review letters}, 95(24):244101, 2005.

\bibitem{lucarini2024detecting}
Valerio Lucarini and Micka{\"e}l~D Chekroun.
\newblock Detecting and attributing change in climate and complex systems: {F}oundations, {G}reen’s functions, and nonlinear fingerprints.
\newblock {\em Physical Review Letters}, 133(24):244201, 2024.

\bibitem{majda2018model}
Andrew~J Majda and Nan Chen.
\newblock Model error, information barriers, state estimation and prediction in complex multiscale systems.
\newblock {\em Entropy}, 20(9):644, 2018.

\bibitem{majda2012physics}
Andrew~J Majda and John Harlim.
\newblock {Physics constrained nonlinear regression models for time series}.
\newblock {\em Nonlinearity}, 26(1):201--217, November 2012.

\bibitem{monster2017causal}
Dan M{\o}nster, Riccardo Fusaroli, Kristian Tyl{\'e}n, Andreas Roepstorff, and Jacob~F Sherson.
\newblock Causal inference from noisy time-series data—testing the convergent cross-mapping algorithm in the presence of noise and external influence.
\newblock {\em Future Generation Computer Systems}, 73:52--62, 2017.

\bibitem{morgan2015counterfactuals}
SL~Morgan.
\newblock {\em Counterfactuals and causal inference}.
\newblock Cambridge University Press, 2015.

\bibitem{pearl2000models}
Judea Pearl et~al.
\newblock Models, reasoning and inference.
\newblock {\em Cambridge, UK: CambridgeUniversityPress}, 19(2):3, 2000.

\bibitem{peters2017elements}
Jonas Peters, Dominik Janzing, and Bernhard Sch{\"o}lkopf.
\newblock {\em Elements of causal inference: foundations and learning algorithms}.
\newblock The MIT Press, 2017.

\bibitem{prosperi2020causal}
Mattia Prosperi, Yi~Guo, Matt Sperrin, James~S Koopman, Jae~S Min, Xing He, Shannan Rich, Mo~Wang, Iain~E Buchan, and Jiang Bian.
\newblock Causal inference and counterfactual prediction in machine learning for actionable healthcare.
\newblock {\em Nature Machine Intelligence}, 2(7):369--375, 2020.

\bibitem{radebach2013disentangling}
Alexander Radebach, Reik~V Donner, Jakob Runge, Jonathan~F Donges, and J{\"u}rgen Kurths.
\newblock Disentangling different types of {E}l {N}i{\~n}o episodes by evolving climate network analysis.
\newblock {\em Physical Review E—Statistical, Nonlinear, and Soft Matter Physics}, 88(5):052807, 2013.

\bibitem{reich2015probabilistic}
Sebastian Reich and Colin Cotter.
\newblock {\em Probabilistic forecasting and Bayesian data assimilation}.
\newblock Cambridge University Press, 2015.

\bibitem{rozovsky2012stochastic}
Boris~L. Rozovsky and Sergey~V. Lototsky.
\newblock {\em {Stochastic Evolution Systems: Linear Theory and Applications to Non-Linear Filtering}}, volume~89.
\newblock Springer Science \& Business Media, 2018.

\bibitem{rubin1974estimating}
Donald~B Rubin.
\newblock Estimating causal effects of treatments in randomized and nonrandomized studies.
\newblock {\em Journal of educational Psychology}, 66(5):688, 1974.

\bibitem{runge2019inferring}
Jakob Runge, Sebastian Bathiany, Erik Bollt, Gustau Camps-Valls, Dim Coumou, Ethan Deyle, Clark Glymour, Marlene Kretschmer, Miguel~D Mahecha, Jordi Mu{\~n}oz-Mar{\'\i}, et~al.
\newblock Inferring causation from time series in earth system sciences.
\newblock {\em Nature communications}, 10(1):2553, 2019.

\bibitem{runge2012quantifying}
Jakob Runge, Jobst Heitzig, Norbert Marwan, and J{\"u}rgen Kurths.
\newblock Quantifying causal coupling strength: {A} lag-specific measure for multivariate time series related to transfer entropy.
\newblock {\em Physical Review E—Statistical, Nonlinear, and Soft Matter Physics}, 86(6):061121, 2012.

\bibitem{runge2019detecting}
Jakob Runge, Peer Nowack, Marlene Kretschmer, Seth Flaxman, and Dino Sejdinovic.
\newblock Detecting and quantifying causal associations in large nonlinear time series datasets.
\newblock {\em Science advances}, 5(11):eaau4996, 2019.

\bibitem{runge2015identifying}
Jakob Runge, Vladimir Petoukhov, Jonathan~F Donges, Jaroslav Hlinka, Nikola Jajcay, Martin Vejmelka, David Hartman, Norbert Marwan, Milan Palu{\v{s}}, and J{\"u}rgen Kurths.
\newblock Identifying causal gateways and mediators in complex spatio-temporal systems.
\newblock {\em Nature communications}, 6(1):8502, 2015.

\bibitem{rupe2024causal}
Adam Rupe, Derek DeSantis, Craig Bakker, Parvathi Kooloth, and Jian Lu.
\newblock Causal discovery in nonlinear dynamical systems using {K}oopman operators.
\newblock {\em arXiv preprint arXiv:2410.10103}, 2024.

\bibitem{sarachik2010nino}
Edward~S Sarachik and Mark~A Cane.
\newblock {\em The {E}l {N}ino-southern oscillation phenomenon}.
\newblock Cambridge University Press, 2010.

\bibitem{sarkka2023bayesian}
Simo S{\"a}rkk{\"a} and Lennart Svensson.
\newblock {\em Bayesian filtering and smoothing}, volume~17.
\newblock Cambridge university press, 2023.

\bibitem{schreiber2000measuring}
Thomas Schreiber.
\newblock Measuring information transfer.
\newblock {\em Physical review letters}, 85(2):461, 2000.

\bibitem{shanmugam2015learning}
Karthikeyan Shanmugam, Murat Kocaoglu, Alexandros~G Dimakis, and Sriram Vishwanath.
\newblock Learning causal graphs with small interventions.
\newblock {\em Advances in Neural Information Processing Systems}, 28, 2015.

\bibitem{sugihara2012detecting}
George Sugihara, Robert May, Hao Ye, Chih-hao Hsieh, Ethan Deyle, Michael Fogarty, and Stephan Munch.
\newblock Detecting causality in complex ecosystems.
\newblock {\em science}, 338(6106):496--500, 2012.

\bibitem{sun2014causation}
Jie Sun and Erik~M Bollt.
\newblock Causation entropy identifies indirect influences, dominance of neighbors and anticipatory couplings.
\newblock {\em Physica D: Nonlinear Phenomena}, 267:49--57, 2014.

\bibitem{sun2015causal}
Jie Sun, Dane Taylor, and Erik~M Bollt.
\newblock Causal network inference by optimal causation entropy.
\newblock {\em SIAM Journal on Applied Dynamical Systems}, 14(1):73--106, 2015.

\bibitem{varambally2023discovering}
Sumanth Varambally, Yi-An Ma, and Rose Yu.
\newblock Discovering mixtures of structural causal models from time series data.
\newblock {\em arXiv preprint arXiv:2310.06312}, 2023.

\bibitem{vialard2025nino}
J~Vialard, F-F Jin, MJ~McPhaden, A~Fedorov, W~Cai, S-I An, D~Dommenget, X~Fang, MF~Stuecker, C~Wang, et~al.
\newblock The {E}l {N}i{\~n}o {S}outhern {O}scillation ({ENSO}) recharge oscillator conceptual model: {A}chievements and future prospects.
\newblock {\em Reviews of Geophysics}, 63(1):e2024RG000843, 2025.

\bibitem{xu2007measuring}
Qin Xu.
\newblock Measuring information content from observations for data assimilation: {R}elative entropy versus {S}hannon entropy difference.
\newblock {\em Tellus A: Dynamic Meteorology and Oceanography}, 59(2):198--209, 2007.

\bibitem{yang2023finding}
Xueli Yang, Zhi-Hua Wang, Chenghao Wang, and Ying-Cheng Lai.
\newblock Finding causal gateways of precipitation over the contiguous {U}nited {S}tates.
\newblock {\em Geophysical Research Letters}, 50(4):e2022GL101942, 2023.

\bibitem{ye2015distinguishing}
Hao Ye, Ethan~R Deyle, Luis~J Gilarranz, and George Sugihara.
\newblock Distinguishing time-delayed causal interactions using convergent cross mapping.
\newblock {\em Scientific reports}, 5(1):14750, 2015.

\end{thebibliography}


\begin{thebibliography}{10}

\bibitem{kandil2003matrix}
Hisham Abou-Kandil, Gerhard Freiling, Vlad Ionescu, and Gerhard Jank.
\newblock {\em {Matrix Riccati Equations in Control and Systems Theory}}.
\newblock Birkh\"{a}user Basel, 2003.

\bibitem{andreou2024adaptive}
Marios Andreou, Nan Chen, and Yingda Li.
\newblock {An Adaptive Online Smoother with Closed-Form Solutions and Information-Theoretic Lag Selection for Conditional Gaussian Nonlinear Systems}.
\newblock {\em arXiv preprint arXiv:2411.05870}, 2024.

\bibitem{behringer2004evaluation}
David Behringer and Yan Xue.
\newblock Evaluation of the global ocean data assimilation system at {NCEP}: {T}he {P}acific {O}cean.
\newblock In {\em Eighth symp. on integrated observing and assimilation systems for atmosphere, oceans, and land surface}, 2004.

\bibitem{cai2002mathematical}
David Cai, Richard Kleeman, and Andrew Majda.
\newblock {A Mathematical Framework for Quantifying Predictability Through Relative Entropy}.
\newblock {\em Methods and Applications of Analysis}, 9(3):425--444, 2002.

\bibitem{chen2022multiscale}
Nan Chen, Xianghui Fang, and Jin-Yi Yu.
\newblock A multiscale model for {E}l {N}i{\~n}o complexity.
\newblock {\em npj Climate and Atmospheric Science}, 5(1):16, 2022.

\bibitem{chen2018conditional}
Nan Chen and Andrew~J Majda.
\newblock {Conditional Gaussian systems for multiscale nonlinear stochastic systems: Prediction, state estimation and uncertainty quantification}.
\newblock {\em Entropy}, 20(7):509, 2018.

\bibitem{cooke1981mathematical}
Dennis Cooke and Robert~William Hiorns.
\newblock {\em The Mathematical Theory of the Dynamics of Biological Populations 2: Based on the Proceedings of a Conference on The Mathematical Theory of the Dynamics of Biological Populations Organised by the Institute of Mathematics and Its Applications and Held in Oxford, 1st-3rd July, 1980}.
\newblock Academic Press, 1981.

\bibitem{falasca2024data}
Fabrizio Falasca, Pavel Perezhogin, and Laure Zanna.
\newblock Data-driven dimensionality reduction and causal inference for spatiotemporal climate fields.
\newblock {\em Physical Review E}, 109(4):044202, 2024.

\bibitem{gallier2019notes}
Jean~H Gallier.
\newblock {The Schur Complement and Symmetric Positive Semidefinite (and Definite) Matrices}.
\newblock \url{https://www.cis.upenn.edu/~jean/schur-comp.pdf}, 2019.
\newblock [Online; Accessed 25-March-2025].

\bibitem{han2017random}
Xiaoying Han and Peter~E. Kloeden.
\newblock {\em {Random Ordinary Differential Equations and Their Numerical Solution}}.
\newblock Springer Singapore, 2017.

\bibitem{harlim2011interpolating}
John Harlim.
\newblock Interpolating irregularly spaced observations for filtering turbulent complex systems.
\newblock {\em SIAM Journal on Scientific Computing}, 33(5):2620--2640, 2011.

\bibitem{ikeda2014stochastic}
Nobuyuki Ikeda and Shinzo Watanabe.
\newblock {\em Stochastic differential equations and diffusion processes}, volume~24.
\newblock Elsevier, 2014.

\bibitem{jin1997equatorial}
Fei-Fei Jin.
\newblock An equatorial ocean recharge paradigm for {ENSO}. {P}art {I}: {C}onceptual model.
\newblock {\em Journal of the atmospheric sciences}, 54(7):811--829, 1997.

\bibitem{kalman1961new}
R.~E. Kalman and R.~S. Bucy.
\newblock {New Results in Linear Filtering and Prediction Theory}.
\newblock {\em Journal of Basic Engineering}, 83(1):95--108, March 1961.

\bibitem{kalnay2018ncep}
Eugenia Kalnay, Masao Kanamitsu, Robert Kistler, William Collins, Dennis Deaven, Lev Gandin, Mark Iredell, Suranjana Saha, Glenn White, John Woollen, et~al.
\newblock The {NCEP/NCAR} 40-year reanalysis project.
\newblock In {\em Renewable energy}, pages Vol1\_146--Vol1\_194. Routledge, 2018.

\bibitem{kleeman2002measuring}
Richard Kleeman.
\newblock {Measuring Dynamical Prediction Utility Using Relative Entropy}.
\newblock {\em Journal of the Atmospheric Sciences}, 59(13):2057 -- 2072, 2002.

\bibitem{kleeman2011information}
Richard Kleeman.
\newblock {Information Theory and Dynamical System Predictability}.
\newblock {\em Entropy}, 13(3):612--649, March 2011.

\bibitem{liang2016information}
X~San Liang.
\newblock Information flow and causality as rigorous notions ab initio.
\newblock {\em Physical Review E}, 94(5):052201, 2016.

\bibitem{liptser2001statistics}
Robert~S Liptser and Albert~N Shiryaev.
\newblock {\em {Statistics of random processes I: General theory \& Statistics of Random Processes II: Applications}}.
\newblock Springer Science \& Business Media, 2001.

\bibitem{majda2012physics}
Andrew~J Majda and John Harlim.
\newblock {Physics constrained nonlinear regression models for time series}.
\newblock {\em Nonlinearity}, 26(1):201--217, November 2012.

\bibitem{majda2003systematic}
Andrew~J Majda, Ilya Timofeyev, and Eric Vanden-Eijnden.
\newblock Systematic strategies for stochastic mode reduction in climate.
\newblock {\em Journal of the Atmospheric Sciences}, 60(14):1705--1722, 2003.

\bibitem{neckel2013random}
Tobias Neckel and Florian Rupp.
\newblock {\em {Random Differential Equations in Scientific Computing}}.
\newblock Versita, November 2013.

\bibitem{oksendal2003stochastic}
Bernt {\O}ksendal.
\newblock {\em {Stochastic Differential Equations}}.
\newblock Universitext. Springer, Berlin, Germany, July 2003.

\bibitem{rauch1965maximum}
Herbert~E Rauch, F~Tung, and Charlotte~T Striebel.
\newblock Maximum likelihood estimates of linear dynamic systems.
\newblock {\em AIAA journal}, 3(8):1445--1450, 1965.

\bibitem{risser2025granger}
Mark~D Risser, Mohammed Ombadi, and Michael~F Wehner.
\newblock Granger causal inference for climate change attribution.
\newblock {\em Environmental Research: Climate}, 4(2):022001, 2025.

\bibitem{rozovsky2012stochastic}
Boris~L. Rozovsky and Sergey~V. Lototsky.
\newblock {\em {Stochastic Evolution Systems: Linear Theory and Applications to Non-Linear Filtering}}, volume~89.
\newblock Springer Science \& Business Media, 2018.

\bibitem{runge2023causal}
Jakob Runge, Andreas Gerhardus, Gherardo Varando, Veronika Eyring, and Gustau Camps-Valls.
\newblock Causal inference for time series.
\newblock {\em Nature Reviews Earth \& Environment}, 4(7):487--505, 2023.

\bibitem{schreiber2000measuring}
Thomas Schreiber.
\newblock Measuring information transfer.
\newblock {\em Physical review letters}, 85(2):461, 2000.

\bibitem{vialard2025nino}
J~Vialard, F-F Jin, MJ~McPhaden, A~Fedorov, W~Cai, S-I An, D~Dommenget, X~Fang, MF~Stuecker, C~Wang, et~al.
\newblock The {E}l {N}i{\~n}o {S}outhern {O}scillation ({ENSO}) recharge oscillator conceptual model: {A}chievements and future prospects.
\newblock {\em Reviews of Geophysics}, 63(1):e2024RG000843, 2025.

\end{thebibliography}
\end{document}